\newcommand\TODO[1][]{{\color{orange}[TODO\ifthenelse{\equal{#1}{}}{}{: #1}]}}
\newcommand\BE\textbf
\newcommand\BM\boldsymbol
\newcommand\BB\mathbb
\newcommand\CAL\mathcal
\newcommand\RM\mathrm
\newcommand\HAT\widehat
\newcommand\BAR\overline
\newcommand\TLD\widetilde
\newcommand\Poly{\operatorname{poly}}
\newcommand\Tp{^{\mathsf T}}
\newcommand\OP[1]{\mathop{\operatorname{#1}}}
\newcommand\EQ[1]{\begin{equation}#1\end{equation}}
\newcommand\AL[1]{\begin{align}#1\end{align}}
\DeclareMathOperator*{\argmax}{argmax}
\DeclareMathOperator*{\Exp}{\mathbb{E}}
\DeclareMathOperator*{\Var}{\mathsf{Var}}
\newcommand\DeferProof{}
\newcommand\SIRS{\texttt{S}}
\newcommand\SIRI{\texttt{I}}
\newcommand\SIRR{\texttt{R}}
\newtheorem{THM}{\textbf{Theorem}}
\newtheorem{LEM}[THM]{\textbf{Lemma}}
\newtheorem{PRP}[THM]{\textbf{Proposition}}
\newtheorem{PRB}{\textbf{Problem}}
\newcommand\THMref[1]{\textsc{Theorem}~\ref{THM:#1}}
\newcommand\LEMref[1]{\textsc{Lemma}~\ref{LEM:#1}}
\newcommand\PRPref[1]{\textsc{Proposition}~\ref{PRP:#1}}
\newcommand\PRBref[1]{\textsc{Problem}~\ref{PRB:#1}}
\newcommand\RQref[1]{RQ\ref{RQ:#1}}
\newcommand\ALG[3]{\begin{algorithm}[t]\caption{#2}\label{alg:#1}\begin{algorithmic}[1]#3\end{algorithmic}\end{algorithm}}
\newcommand\ALGref[1]{Algorithm~\ref{alg:#1}}
\newcommand\problem{\textsc{DASH}}
\newcommand\method{\textsc{DITTO}} 
\begin{document}

\title{Reconstructing Graph Diffusion History from a Single Snapshot}

\author{Ruizhong Qiu}
\email{rq5@illinois.edu}
\affiliation{\institution{University of Illinois}\city{Urbana-Champaign}\state{IL}\country{USA}}

\author{Dingsu Wang}
\email{dingsuw2@illinois.edu}
\affiliation{\institution{University of Illinois}\city{Urbana-Champaign}\state{IL}\country{USA}}

\author{Lei Ying}
\email{leiying@umich.edu}
\affiliation{\institution{University of Michigan}\city{Ann Arbor}\state{MI}\country{USA}}

\author{H. Vincent Poor}
\email{poor@princeton.edu}
\affiliation{\institution{Princeton University}\city{Princeton}\state{NJ}\country{USA}}

\author{Yifang Zhang}
\email{zhang303@illinois.edu}
\affiliation{\institution{C3.ai Digital Transformation Institute}\city{Urbana}\state{IL}\country{USA}}

\author{Hanghang Tong}
\email{htong@illinois.edu}
\affiliation{\institution{University of Illinois}\city{Urbana-Champaign}\state{IL}\country{USA}}


\begin{abstract}

Diffusion on graphs is ubiquitous with numerous high-impact applications, ranging from the study of residential segregation in socioeconomics and activation cascading in neuroscience, to the modeling of disease contagion in epidemiology and malware spreading in cybersecurity. In these applications, complete diffusion histories play an essential role in terms of identifying dynamical patterns, reflecting on precaution actions, and forecasting intervention effects.
Despite their importance, complete diffusion histories are rarely available and are highly challenging to reconstruct due to ill-posedness, explosive search space, and scarcity of training data. To date, few methods exist for diffusion history reconstruction. They are exclusively based on the maximum likelihood estimation (MLE) formulation and require to know true diffusion parameters. 
In this paper, we study an even harder problem, namely \emph{reconstructing \underline Diffusion history from \underline A single \underline Snaps\underline Hot} (\problem), where we seek to reconstruct the history from only the final snapshot without knowing true diffusion parameters. We start with theoretical analyses that reveal a fundamental limitation of the MLE formulation. We prove: (a) estimation error of diffusion parameters is unavoidable due to NP-hardness of diffusion parameter estimation, and (b) the MLE formulation is sensitive to estimation error of diffusion parameters. 
To overcome the inherent limitation of the MLE formulation, we propose a novel \emph{barycenter formulation}: finding the barycenter of the posterior distribution of histories, which is provably stable against the estimation error of diffusion parameters.
We further develop an effective solver named \emph{\underline{DI}ffusion hi\underline Tting \underline Times with \underline Optimal proposal} (\method{}) by reducing the problem to estimating posterior expected hitting times via the Metropolis--Hastings Markov chain Monte Carlo method (M--H MCMC) and employing an unsupervised graph neural network to learn an optimal proposal to accelerate the convergence of M--H MCMC. 
We conduct extensive experiments to demonstrate the efficacy of the proposed method.
Our code is available at \url{https://github.com/q-rz/KDD23-DITTO}. 

\end{abstract}

\begin{CCSXML}
<ccs2012>
<concept>
    <concept_id>10002950.10003624.10003633.10010917</concept_id>
    <concept_desc>Mathematics of computing~Graph algorithms</concept_desc>
    <concept_significance>500</concept_significance>
</concept>
<concept>
    <concept_id>10010147.10010257.10010293.10010294</concept_id>
    <concept_desc>Computing methodologies~Neural networks</concept_desc>
    <concept_significance>300</concept_significance>
</concept>
 </ccs2012>
\end{CCSXML}

\ccsdesc[500]{Mathematics of computing~Graph algorithms}
\ccsdesc[300]{Computing methodologies~Neural networks}

\keywords{Graph Diffusion, History Reconstruction, Markov Chain Monte Carlo (MCMC), Graph Neural Network (GNN)}

\maketitle

\section{Introduction}

\begin{figure}\centering
\includegraphics[width=\columnwidth]{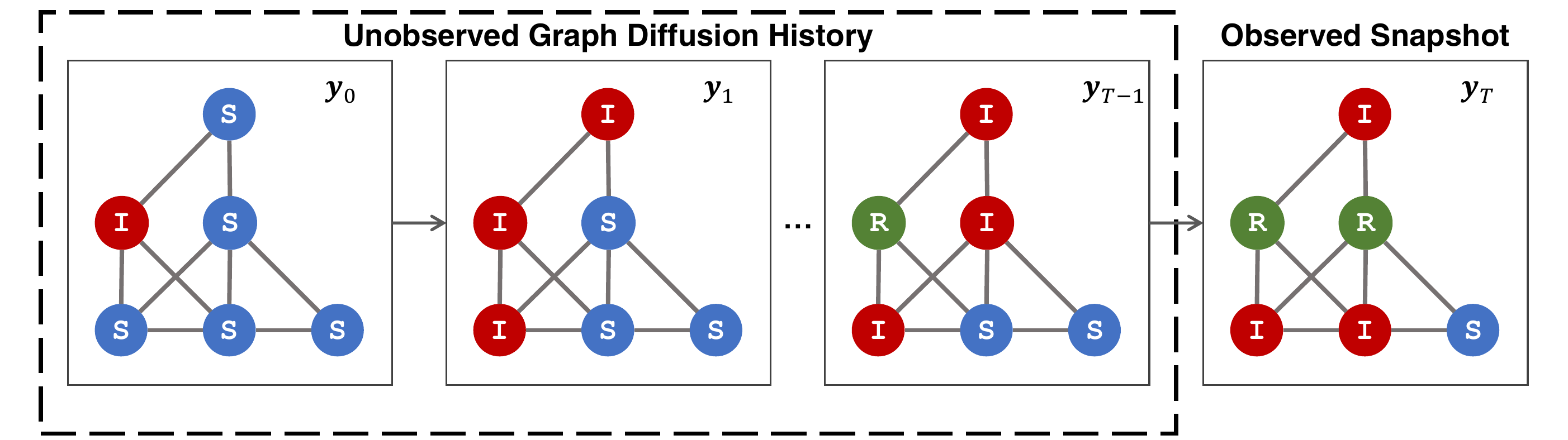}
\vspace{-15pt}
\caption{Illustration of the \problem{} problem. This is an SIR diffusion process on a graph, where each square box represents a snapshot $\BM y_t$ at each time $t$. In the \problem{} problem, only the final snapshot $\BM y_T$ is observed, and we need to reconstruct all the unobserved snapshots $\BM y_0,\BM y_1,\dots,\BM y_{T-1}$.}
\label{fig:dash-illus}
\end{figure}

Diffusion on graphs is ubiquitous in various domains owing to its generality in representing complex dynamics among interconnected objects. It appears in numerous high-impact applications, ranging from the study of residential segregation in socioeconomics \cite{schelling1978micromotives} and activation cascading in neuroscience \cite{avena2018communication}, to the modeling of disease contagion in epidemiology \cite{klovdahl1985social} and malware spreading in cybersecurity \cite{wang2009understanding}. At the core of these applications are the {\em complete diffusion histories} of the underlying diffusion process, which could be exploited to identify dynamical patterns \cite{coleman1957diffusion}, reflect on precaution actions \cite{chen2015node}, forecast intervention effects \cite{valente2005network}, etc.

Despite their 
importance, complete diffusion histories are rarely available in real-world applications because diffusion may not be noticed in early stages, collecting diffusion histories 
may incur unaffordable costs, and/or tracing node states may raise privacy concerns \cite{pcvc,ssr}. 
It is thus highly desirable to develop learning-based 
methods to automatically reconstruct diffusion histories from limited observations. 
However, diffusion history reconstruction faces 
critical challenges. 
\textbf{(i) Ill-posed inverse problem.}
Since different histories can result in the same observation, it is difficult to distinguish which history is preferred. Hence, it is crucial to design 
a formulation with desired inductive bias. \textbf{(ii) Explosive search space.} The number of possible histories grows exponentially with the number of nodes, so history estimation is an extremely high-dimensional combinatorial 
problem. \textbf{(iii) Scarcity of training data.} Conventional methods for time series imputation such as supervised learning (e.g., \cite{brits,net3,grin,spin}) 
require training data to learn from, but true diffusion histories are rarely available. Thus, they turn 
out to be inefficacious or even inapplicable for this problem.

Compared to extensive research on forward problems on diffusion (e.g., node immunization \cite{hayashi2004oscillatory}; see Sec.~\ref{sec:related-diffus} for a survey), 
few works exist for diffusion history reconstruction. The sparse literature on diffusion history reconstruction is 
exclusively based on the maximum likelihood estimation (MLE) formulation \cite{pcvc,ssr}, which relies on two assumptions: (1) knowing true diffusion parameters and/or (2) knowing partial diffusion histories. 
However, neither diffusion parameters or partial diffusion history is available in many real-world applications. For example, when an epidemic is noticed for the first time, we might only know who are \emph{currently} infected but have no historical infection information (e.g., who the patient zero is, when outbreaks happen,
and where super-spreaders locate). 

To address these limitations, 
we study a novel and more realistic setting: 
reconstructing a complete \underline Diffusion history from \underline A single \underline Snaps\underline Hot (\problem). 
Importantly, we remove both assumptions of 
existing works. 
That is, we do not require knowing true diffusion 
parameters, and we only have access to the current snapshot. 

Under such a realistic setting, we start with theoretical analyses revealing a fundamental limitation of the MLE formulation for the \problem{} problem, which motivates us to propose a novel \emph{barycenter formulation} that addresses the limitation. We then develop an effective solver named \emph{\underline{DI}ffusion hi\underline Tting \underline Times with \underline Optimal proposal} (\method) for the barycenter formulation. Our method is unsupervised and thus does not require real diffusion histories as training data, which is desirable due to the scarcity of training data. Extensive experiments demonstrate the efficacy and the scalability of our method \method. The main contributions of this paper are:
\begin{itemize}
\item\textbf{Problem definition.} Motivated by real-world 
challenges, we propose and study a new 
problem \problem. This challenging problem assumes that only the current snapshot is observed, while previous works rely on partial diffusion histories and/or require to know true diffusion parameters. 
\item\textbf{Theoretical 
insights.} We reveal a fundamental limitation of the MLE formulation for \problem{}. We 
prove: (a) estimation error of diffusion parameters is inevitable due to NP-hardness of diffusion parameter estimation, and (b) the MLE formulation is sensitive to estimation error of diffusion parameters. 
\item\textbf{Methodology.} 
We propose a novel \emph{barycenter formulation} for \problem{}, which is \emph{provably stable} against the estimation error of diffusion parameters. We further develop an effective method 
\method{}, which reduces \problem{} to estimating hitting times via MCMC and employs an unsupervised GNN to learn an optimal proposal to accelerate the convergence of MCMC. \method{} has time complexity $\OP O(T(n\log n+m))$ scaling near-linearly w.r.t.\ the output size $\Theta(Tn)$, where $T$ is the timespan, and $n,m$ are the numbers of nodes and edges, respectively. 
\item\textbf{
Evaluation.} We conduct extensive experiments with both synthetic and real-world datasets, 
and \method{} consistently achieves strong performance on all datasets. For example, \method{} is 10.06\% better than the best baseline on the Covid dataset in terms of normalized rooted mean squared error.
\end{itemize}

\section{Problem Definition}

\begin{table}\centering\footnotesize
\caption{Main notations.}\label{tab:notation}
\vspace{-8pt}
\begin{tabular}{ll}
\toprule
\textbf{Symbol}&\textbf{Definition}\\
\midrule
$\CAL T$&the set of time\\
$\CAL X$&the set of diffusion states\\
$\CAL V$&the set of nodes\\
$\CAL E$&the set of edges\\
$\CAL N_u$&the set of neighbors of node $u$\\
\midrule
$T$&the timespan of interest\\
$n^{\CAL X_0}$&the number of nodes with state in $\CAL X_0$\\
$y_{t,u}$&the state of node $u$ at time $t$\\
$\BM y_t$&a snapshot of diffusion at time $t$\\
$\BM Y$&a complete diffusion history\\
$\HAT{\BM Y}$&the reconstructed diffusion history\\
\midrule
$\SIRS,\,\SIRI,\,\SIRR$&states in the SIR model\\
$\beta^\SIRI,\,\beta^\SIRR$&the infection rate and the recovery rate\\
$\BM\beta$&true diffusion parameters\\
$\HAT{\BM\beta}$&estimated diffusion parameters\\
$P_{\BM\beta}$&the probability measure of the diffusion model\\
$P_{\BM\beta}\mathbin|\BM y_T$&the posterior given the observed snapshot\\
$\OP{supp}(P)$&the set of possible histories\\
$\OP{supp}(P\mathbin|\BM y_T)$&the set of histories consistent with the snapshot $\BM y_T$\\
\midrule
$\searrow$&the one-sided limit from above\\
$\partial$&the partial derivative operator\\
$\nabla$&the gradient operator\\
$\Exp$&the expectation operator\\
$\OP O,\,\Theta$&the asymptotic notations\\
\bottomrule
\end{tabular}
\end{table}

In this section, we formally define the \problem{} problem. 
Our notation conventions are as follows. We use the calligraphic font for sets (e.g., $\mathcal{E}$); lightface uppercase letters for constants (e.g., $T$) and probability distributions (e.g., $P$); lightface lowercase letters for indices (e.g., $t$) and scalar-valued functions (e.g., $\psi$); boldface lowercase letters for vectors (e.g., $\BM\beta$); boldface uppercase letters for matrices (e.g., $\BM Y$); the monospaced font for states (e.g., \SIRS); and the hat notation for estimates (e.g., $\HAT{\BM\beta}$). Notations are summarized in Table~\ref{tab:notation}.

\subsection{Preliminaries}

\subsubsection{Diffusion on Graphs}
We study discrete-time diffusion processes, where $\CAL T:=\{0,1,\dots,T\}$ denotes the set of time, and $\CAL X$ denotes the set of diffusion states. The graph is \emph{undirected}, with node set $\CAL V$ and edge set $\CAL E$, where 
the number of nodes is $|\CAL V|=n$, and the number of edges is $|\CAL E|=m$. For each node $u\in\CAL V$, let $\CAL N_u:=\{v\in\CAL V:(u,v)\in\CAL E\}$ denote the neighbors of node $u$.

Let $y_{t,u}\in\CAL X$ denote the state of a node $u\in\CAL V$ at a time $t\in\CAL T$. A \emph{diffusion process} \cite{wang2003epidemic} 
on a graph $(\CAL V,\CAL E)$ is a spatiotemporal stochastic process $\langle y_{t,u}\rangle_{t\in\CAL T,\,u\in\CAL V}$ where $y_{t,u}$ depends only on $\big\{y_{t-1,v}:v\in\{u\}\cup\CAL N_u\big\}$ for every node $u\in\CAL V$ at every time $t>0$. Hence, a diffusion process is necessarily a Markov process. A \emph{snapshot} at a time $t\in\CAL T$ is a vector $\BM y_t:=(y_{t,u})_{u\in\CAL V}\in\CAL X^{\CAL V}$ 
containing all node states at the time $t$. A \emph{diffusion history} (or a \emph{history} for short) is a matrix $\BM Y:=(\BM y_0,\dots,\BM y_T)\Tp=(y_{t,u})_{t\in\CAL T,\,u\in\CAL V}\in\CAL X^{\CAL T\times\CAL V}$ containing snapshots at all times. A history $\BM Y$ is said to be \emph{feasible} iff it happens with nonzero probability.

\subsubsection{Graph Diffusion Models}
In this work, we focus on two classic graph diffusion models, namely the Susceptible--Infected (SI) model and the Susceptible--Infected--Recovered (SIR) model \cite{kermack1927contribution}. The SI model can be considered as a special case of the SIR model, so we start with the SIR model.

The SIR model describes the contagion process of infectious diseases from which recovery provides permanent protection against re-infection. In the SIR model, the states are $\CAL X:=\{\SIRS,\SIRI,\SIRR\}$. The probability measure $P_{\BM\beta}$ for the SIR model 
is parameterized by two parameters $\BM\beta:=(\beta^\SIRI,\beta^\SIRR)\Tp\in(0,1)^2$, where $\beta^\SIRI$ and $\beta^\SIRR$ are called the \emph{infection rate} and the \emph{recovery rate}, respectively. Let $n^{\CAL X_0}(\BM y_t)$ denote the number of nodes with state in $\CAL X_0$ at time $t$, e.g., $n^{\SIRI\SIRR}(\BM y_t)$ meaning the number of infected or recovered nodes at time $t$. Please refer to Appendix~\ref{app:sir} for the definition of $P_{\BM\beta}$ in the SIR model.

Let $\OP{supp}(P):=\{\BM Y\in\CAL X^{\CAL T\times\CAL V}:P_{\BM\beta}[\BM Y]>0\}$ denote the set of possible histories. For a snapshot $\BM y_T\in\CAL X^\CAL V\!$, let $P_{\BM\beta}\mathbin|\BM y_T:=P_{\BM\beta}[\,\cdot\mid\BM y_T]$ denote the posterior given the snapshot $\BM y_T$, and let $\OP{supp}(P\mathbin|\BM y_T):=\{\BM Y\in\CAL X^{\CAL T\times\CAL V}:P_{\BM\beta}[\BM Y\mid\BM y_T]>0\}$ denote the set of possible histories consistent with the snapshot $\BM y_T$. Since the set of possible histories does not depend on $\BM\beta$, we omit $\BM\beta$ in its notation.

For the SI model, it is defined by letting $\beta^\SIRR:=0$ and removing the state $\SIRR$ from the SIR model. It describes the contagion process of infectious diseases that cannot recover.

\subsection{Problem Statement}
The problem we study is reconstructing the complete diffusion history $\BM Y$ from a single snapshot $\BM y_T$ without knowing true diffusion parameters $\BM\beta$. As is discussed above, it is often impractical to obtain true diffusion histories for supervised methods to learn from or for statistical methods to accurately estimate diffusion parameters from. Hence, we assume that neither a database of diffusion histories nor true diffusion parameters are known. Instead, what we know is the final snapshot $\BM y_T$. Since a single snapshot cannot provide any information about the underlying diffusion model, then we have to assume that the underlying diffusion model is known as domain knowledge while its diffusion parameters are not assumed to be known.
This is a common assumption in previous work \cite{pcvc,ssr}. 
Such domain knowledge is usually available in practice. For instance, if we know that recovery from a disease will probably provide lifelong protection (e.g., chickenpox), then it will be reasonable to assume that the underlying diffusion model is the SIR model while we do not know its diffusion parameters. In this work, we consider the SI model and the SIR model.

Given a timespan $T$ of interest and the snapshot $\BM y_T$ at time $T$, our task is to reconstruct the diffusion history $\BM y_0,\dots,\BM y_{T-1}$. Since we assume no extra 
diffusion histories for training, our method is of unsupervised learning. This setting is more realistic than that of previous works (e.g., \cite{pcvc,ssr}), 
where diffusion histories for training and/or true diffusion parameters are assumed to be available.

Under this realistic setting, the diffusion history reconstruction problem  involves two aspects: (i) estimating diffusion parameters from a single snapshot and (ii) reconstructing the diffusion history in the presence of estimation error of diffusion parameters. As we will show later in \THMref{diffus-par-mle-nphard}, it is NP-hard to 
estimate diffusion parameters from a snapshot. Thus, diffusion parameter estimation itself is a non-trivial problem here. Furthermore, its NP-hardness implies that estimation error of diffusion parameters is unavoidable. Hence, it is important for the diffusion history reconstruction method to be stable against estimation error of diffusion parameters.

We assume that the source nodes are unknown, but the initial distribution $P[\BM y_0]$ is known as a domain knowledge, such as how many nodes (roughly) are initially infected, which areas the source nodes probably locate in, and whether high-density communities are more suitable for epidemics to break out. 
The knowledge of $P[\BM y_0]$ is necessary because without it the diffusion parameters will be uncertain in the unsupervised setting. For instance, given the timespan $T$ and the snapshot $\BM y_T$, a smaller number of initially infected nodes suggests a higher infection rate, while a larger number of initially infected nodes implies a lower infection rate. 

For computational consideration, the initial distribution should be efficiently computable up to a normalizing constant, i.e., $P[\BM y_0]\propto p(\BM y_0)$ for all possible $\BM y_0$ where $p:\CAL X^\CAL V\to\BB R_{\ge0}$ is an efficiently computable function.
In this work, we define the initial distribution w.r.t.\ 
the number $n_0^\SIRI$ of initially infected nodes. We assume 
no initially recovered nodes, because they could be removed from the graph. 
Thus, we define $P_{\BM\beta}[\BM y_0]\propto\exp({-\gamma|n^\SIRI(\BM y_0)-n_0^\SIRI|-\gamma n^\SIRR(\BM y_0)})$, where $\gamma>0$ is a hyperparameter. 
If we are more certain on $n_0^\SIRI$, 
we should use a larger $\gamma$. We do not consider $n_0^\SIRI$ 
a hard constraint because 
it is typically a rough estimate rather than the exact number.




We formally state our problem definition as \PRBref{dash} below. See Fig.~\ref{fig:dash-illus} for an illustration of the \problem{} problem.

\vspace{-.25em}
\begin{PRB}[\problem]\label{PRB:dash}
Under SI/SIR model, reconstruct the complete \underline Diffusion history from \underline A single \underline Snaps\underline Hot without knowing true diffusion parameters. \textbf{Input:} (i) graph $(\CAL V,\CAL E)$; (ii) timespan $T$ of interest; (iii) final snapshot $\BM y_T\in\CAL X^\CAL V$; (iv) initial distribution $P[\BM y_0]$. \textbf{Output:} reconstructed complete diffusion history $\HAT{\BM Y}\in\CAL X^{\CAL T\times\CAL V}$.
\end{PRB}
\vspace{-.5em}

\section{Revisiting Diffusion History MLE}
In this section, we theoretically reveal a fundamental limitation of the MLE formulation for diffusion history reconstruction. In Section~\ref{ssec:mle-np}, we show that estimation error of diffusion parameters is unavoidable due to the NP-hardness of diffusion parameter estimation. Then in Section~\ref{ssec:mle-sens}, we prove that the MLE formulation for diffusion history reconstruction is sensitive to estimation error of diffusion parameters. 
Therefore, the performance of the MLE formulation can be drastically degraded by such estimation error of diffusion parameters. Please refer to Appendix~\ref{app:pf} for proofs. 

\subsection{NP-Hardness of Diffusion Parameter Estimation}\label{ssec:mle-np}
In this subsection, we show that estimation error of diffusion parameters is unavoidable due to the NP-hardness of diffusion parameter estimation. To estimate diffusion parameters $\BM\beta$, a conventional approach is maximum likelihood estimation (MLE) \cite{ratcliff2002estimating}. 
Given the observed snapshot $\BM y_T$, diffusion parameter MLE is formulated as:
\EQ{\vspace{-1mm}
\max_{\HAT{\BM\beta}}P_{\HAT{\BM\beta}}[\BM y_T].
\label{eq:diffus-par-mle-form}\vspace{-1mm}}
To optimize Eq.~\eqref{eq:diffus-par-mle-form}, one may consider using gradient-based methods. Typically, gradient-based methods first evaluate the likelihood function and then differentiate 
it to get the gradient. However, due to the explosive search space of possible histories, it is intractable to compute the likelihood $P_{\HAT{\BM\beta}}[\BM y_T]$. In fact, we prove that computing the likelihood $P_{\HAT{\BM\beta}}[\BM y_T]$ (or even approximating it) 
is already NP-hard, as is stated in \THMref{diffus-prob-nphard}. 

\begin{THM}[NP-hardness of snapshot probability]\label{THM:diffus-prob-nphard}
Under the SIR model, approximating the probability of a snapshot\footnote{See \PRBref{diffus-prob} in Appendix~\ref{app:pf-1} for the precise definition.} is NP-hard, even if the initial probability $P[\BM y_0]$ (including its normalizing constant) for each possible $\BM y_0$ can be computed 
in polynomial time.
\end{THM}
\DeferProof

\THMref{diffus-prob-nphard} implies that there probably do not exist tractable algorithms to approximate the likelihood $P_{\HAT{\BM\beta}}[\BM y_T]$, unless $\mathrm{P=NP}$. The intuition behind \THMref{diffus-prob-nphard} is that possible diffusion histories form an explosively large search space. Since gradient-based methods require computing the likelihood $P_{\HAT{\BM\beta}}[\BM y_T]$, this diminishes 
the applicability of
such methods for diffusion parameter MLE.

Although approximating the likelihood $P_{\HAT{\BM\beta}}[\BM Y]$ is intractable, one may also wonder whether there exists an efficient algorithm that can give the optimal $\HAT{\BM\beta}$ \emph{without} computing $P_{\HAT{\BM\beta}}[\BM Y]$. Unfortunately, we prove that computing MLE diffusion parameters (even if 
a small relative error is allowed) is also NP-hard, as is stated in \THMref{diffus-par-mle-nphard}.

\begin{THM}[NP-hardness of diffusion parameter MLE]\label{THM:diffus-par-mle-nphard}
Under the SIR model, diffusion parameter MLE\footnote{See \PRBref{diffus-par-mle} in Appendix~\ref{app:pf-2} for the precise definition.} is NP-hard, even if the initial probability $P[\BM y_0]$ (up to a normalizing constant\footnote{The normalizing constant does not affect the result of this problem.}) for each possible $\BM y_0$ can be computed in polynomial time.
\end{THM}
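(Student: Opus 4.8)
The plan is to argue by contradiction and reduce from the snapshot-probability problem, whose inapproximability is guaranteed by \THMref{diffus-prob-nphard}. Suppose there were a polynomial-time algorithm $\CAL A$ solving diffusion parameter MLE within small relative error on the optimal likelihood value, i.e.\ certifying a number within a $(1-\epsilon)$ factor of $\max_{\HAT{\BM\beta}}P_{\HAT{\BM\beta}}[\BM y_T]$. Given an arbitrary hard instance $(\CAL V,\CAL E,T,\BM y_T,\BM\beta_0)$ on which approximating $P_{\BM\beta_0}[\BM y_T]$ is NP-hard, I would build a new MLE instance on a disconnected graph $(\CAL V\sqcup\CAL V_H,\CAL E\sqcup\CAL E_H)$, run $\CAL A$ on it, and read off an approximation of $P_{\BM\beta_0}[\BM y_T]$ --- the desired contradiction. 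Throughout I exploit the freedom granted by the theorem's hypothesis that $P[\BM y_0]$ may be \emph{any} efficiently computable initial distribution, so I am free to pin down the initial states of the auxiliary part $\CAL V_H$ deterministically.

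The heart of the construction is a disjoint \emph{anchoring gadget} $H$ that forces the global maximizer of the likelihood to sit at the target $\BM\beta_0$. I take $\CAL V_H$ to be $N$ independent copies of a small fixed sub-instance (e.g.\ an infected source with one susceptible neighbor, to probe $\beta^\SIRI$, and an isolated infected node, to probe $\beta^\SIRR$), with the observed time-$T$ outcomes distributed across the copies in prescribed proportions. Because $H$ is disconnected from the original graph, diffusion evolves independently on the two parts and the likelihood factorizes as $P_{\BM\beta}[\BM y_T']=P_{\BM\beta}[\BM y_T]\,L_H(\BM\beta)$, where $L_H(\BM\beta)=\prod_i\ell_i(\BM\beta)$ is an explicit function, strictly log-concave near its maximizer, whose peak can be placed at any prescribed interior $\BM\beta_0=(\beta_0^\SIRI,\beta_0^\SIRR)$ by choosing the proportions, with value $L_H(\BM\beta_0)=c$ computable in closed form and curvature growing as $\Theta(N)$.

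It then remains to argue that the optimal value reveals $P_{\BM\beta_0}[\BM y_T]$. Since $L_H$ pins the maximizer $\BM\beta^\ast$ of the product within distance $\OP O(1/N)$ of $\BM\beta_0$ (the pull exerted by the smooth factor $P_{\BM\beta}[\BM y_T]$ is $\OP O(1)$ against curvature $\Theta(N)$), at $\BM\beta^\ast$ we have $L_H(\BM\beta^\ast)=c\,(1-\OP O(1/N))$ and, by a uniform second-derivative bound on $\log P_{\BM\beta}[\BM y_T]$ over a compact interior neighborhood of $\BM\beta_0$, $P_{\BM\beta^\ast}[\BM y_T]=P_{\BM\beta_0}[\BM y_T]\,(1\pm\OP O(1/N))$. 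Hence $\max_{\BM\beta}P_{\BM\beta}[\BM y_T']$ equals $P_{\BM\beta_0}[\BM y_T]\cdot c\cdot(1\pm\OP O(1/N))$; dividing the value certified by $\CAL A$ by the known constant $c$ yields $P_{\BM\beta_0}[\BM y_T]$ to relative error $\epsilon+\OP O(1/N)$. Choosing $N$ to be a sufficiently large polynomial drives this below the inapproximability threshold of \THMref{diffus-prob-nphard}, completing the reduction.

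I expect the quantitative step in the last paragraph to be the main obstacle: I must make the ``sharpness versus smoothness'' tradeoff rigorous --- that a polynomially sharp gadget ($N=\Poly$) confines the maximizer tightly enough that both the gadget factor and the unknown factor $P_{\BM\beta}[\BM y_T]$ are essentially frozen at their $\BM\beta_0$ values, so the combined relative error beats the hardness gap. This requires a curvature lower bound of order $N$ for $\log L_H$ together with a matching, instance-independent upper bound on $\|\nabla^2\log P_{\BM\beta}[\BM y_T]\|$ on compact subsets of $(0,1)^2$, keeping $\BM\beta_0$ bounded away from the boundary so all constants stay polynomial. The remaining pieces --- verifying the factorization across the disconnected graph and the closed form and log-concavity of $L_H$ --- are routine.
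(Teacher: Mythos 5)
There is a genuine gap, and it lies at the very first step: you reduce from the wrong oracle. The problem whose hardness the theorem asserts (\PRBref{diffus-par-mle}) asks only for an approximation $\HAT{\BM\beta}$ of a \emph{maximizing parameter vector}, i.e.\ some $\HAT{\BM\beta}$ with $(1-\epsilon)\BM\beta<\HAT{\BM\beta}<(1+\epsilon)\BM\beta$ for a maximizer $\BM\beta$; it does \emph{not} return the optimal likelihood value $\max_{\HAT{\BM\beta}}P_{\HAT{\BM\beta}}[\BM y_T]$. Your entire reduction hinges on the oracle ``certifying a number within a $(1-\epsilon)$ factor of $\max_{\HAT{\BM\beta}}P_{\HAT{\BM\beta}}[\BM y_T]$,'' which you then divide by the gadget constant $c$ to recover $P_{\BM\beta_0}[\BM y_T]$. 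An arg-max oracle gives you no such number, and you cannot manufacture it yourself: evaluating $P_{\HAT{\BM\beta}}[\BM y_T]$ at the returned parameters is exactly the problem \THMref{diffus-prob-nphard} shows to be NP-hard. So even if every quantitative step in your ``sharpness versus smoothness'' analysis were made rigorous, you would have proved hardness of a different problem (approximating the optimal likelihood \emph{value}), which neither implies nor is implied by the stated theorem. A salvage attempt that reads information out of the \emph{location} of the maximizer (the $\OP O(1/N)$ displacement induced by $\nabla\log P_{\BM\beta_0}[\BM y_T]$) would only recover a gradient of the log-likelihood, not the likelihood itself, and would require a substantially different argument that you do not give.

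For contrast, the paper's proof works directly with the arg-max: it reduces from Minimum Dominating Set on $(\CAL V,\CAL E)$ by adding two auxiliary vertices, taking $T=1$, and choosing an initial distribution supported on dominating sets with weights $\propto\frac1{n^\SIRI(\BM y_0)}\big(\frac1{2n4^n}\big)^{n^\SIRI(\BM y_0)}$, so that the returned $\HAT\beta^\SIRI$ provably lands in the interval $\big(1-\frac1{\sqrt[s+1]2},\,1-\frac1{\sqrt[s]2}\big)$ exactly when the minimum dominating set has size $s$; the answer is decoded from which interval $\HAT\beta^\SIRI$ falls into, which is precisely the kind of output the MLE oracle supplies. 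Your anchoring-gadget idea (disconnected components forcing the maximizer to a prescribed $\BM\beta_0$ via a $\Theta(N)$-curvature factor) is a reasonable device in itself, but to prove this theorem you would need to encode an NP-hard decision into the \emph{position} of the maximizer rather than into the optimal value.
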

\DeferProof

Both \THMref{diffus-prob-nphard} and \THMref{diffus-par-mle-nphard} suggest that there do not exist tractable algorithms to estimate diffusion parameters $\BM\beta$ accurately from a single snapshot $\BM y_T$, unless $\RM{P=NP}$. Hence, estimation error of diffusion parameters is unavoidable in the \problem{} problem. Consequently, a good method for the \problem{} problem should be stable against such estimation error of diffusion parameters, which motivates us to utilize posterior expected hitting times in Section~\ref{ssec:bary-form}.

\subsection{Sensitivity to Estimation Error of Diffusion Parameters}\label{ssec:mle-sens}

In this subsection, we reveal a fundamental limitation of the MLE formulation for \problem{}.
The MLE formulation reconstructs the history $\BM Y$ by maximizing its likelihood $P_{\HAT{\BM\beta}}[\BM Y]$ among all possible histories that are consistent with the observed history $\BM y_T$: 
\AL{
\max_{\BM Y\in\OP{supp}(P\mid\BM y_T)}P_{\HAT{\BM\beta}}[\BM Y].
}
As is shown in Section~\ref{ssec:mle-np}, estimation error of diffusion parameters is unavoidable. Thus, it is crucial to analyze the sensitivity of the MLE formulation to such estimation error of diffusion parameters. We prove that unfortunately, the MLE formulation is sensitive to estimation error of diffusion parameters when diffusion parameters are small, as is stated 
in \THMref{diffus-mle-sens}.

\begin{THM}[Sensitivity to estimation error of diffusion parameters]\label{THM:diffus-mle-sens}
Under the SIR model with small true $\BM\beta$ (i.e., $\BM\beta\searrow\BM0$), for every possible history $\BM Y$, we have:
\AL{
\frac{\partial}{\partial\beta^\SIRI}P_{\BM\beta}[\BM Y]&=\Theta\Big(\frac1{\beta^\SIRI}\Big)P_{\BM\beta}[\BM Y]\quad\textnormal{if }n^{\SIRI\SIRR}(\BM y_T)>n^{\SIRI\SIRR}(\BM y_0);\\
\frac{\partial}{\partial\beta^\SIRR}P_{\BM\beta}[\BM Y]&=\Theta\Big(\frac1{\beta^\SIRR}\Big)P_{\BM\beta}[\BM Y]\quad\textnormal{if }n^{\SIRR}(\BM y_T)>n^{\SIRR}(\BM y_0).
}
\end{THM}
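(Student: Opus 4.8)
The plan is to exploit the product structure of the SIR likelihood. Since a diffusion process is Markov, $P_{\BM\beta}[\BM Y]$ factorizes over individual node transitions,
\[P_{\BM\beta}[\BM Y]=P[\BM y_0]\prod_{t=0}^{T-1}\prod_{u\in\CAL V}P_{\BM\beta}[y_{t+1,u}\mid\BM y_t].\]
The crucial observation is that, under the SIR laws of Appendix~\ref{app:sir}, a susceptible node's one-step transition involves only $\beta^\SIRI$ (through its number of infected neighbors), an infected node's transition involves only $\beta^\SIRR$, and a recovered node is frozen. Hence the product splits as $P_{\BM\beta}[\BM Y]=P[\BM y_0]\,f(\beta^\SIRI)\,g(\beta^\SIRR)$, where $f$ collects the factors from nodes susceptible at the transition time and $g$ collects those from infected nodes. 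Because $P[\BM y_0]$ and $g$ are independent of $\beta^\SIRI$, they cancel in the logarithmic derivative, giving $\frac{\partial}{\partial\beta^\SIRI}P_{\BM\beta}[\BM Y]\big/P_{\BM\beta}[\BM Y]=f'(\beta^\SIRI)/f(\beta^\SIRI)$, and symmetrically $\frac{\partial}{\partial\beta^\SIRR}P_{\BM\beta}[\BM Y]\big/P_{\BM\beta}[\BM Y]=g'(\beta^\SIRR)/g(\beta^\SIRR)$. This reduces the theorem to a single-variable leading-order analysis of $f$ and of $g$.

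Next I would count transitions. In the SIR model a node never leaves $\{\SIRI,\SIRR\}$ once it enters, so $n^{\SIRI\SIRR}(\BM y_t)$ is non-decreasing in $t$ and its total increment $n^{\SIRI\SIRR}(\BM y_T)-n^{\SIRI\SIRR}(\BM y_0)$ equals the number $N_I$ of $\SIRS\to\SIRI$ transitions in $\BM Y$; likewise $n^\SIRR(\BM y_T)-n^\SIRR(\BM y_0)$ equals the number $N_R$ of $\SIRI\to\SIRR$ transitions. Each $\SIRS\to\SIRI$ factor has the form $1-(1-\beta^\SIRI)^{k}$ with lowest-order term $k\beta^\SIRI$, and since $\BM Y$ is a feasible history every infection event has at least one infected neighbor ($k\ge1$), so this coefficient is positive; each $\SIRS\to\SIRS$ factor equals $(1-\beta^\SIRI)^{k}=1+O(\beta^\SIRI)$. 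Multiplying, $f(\beta^\SIRI)=c\,(\beta^\SIRI)^{N_I}(1+O(\beta^\SIRI))$ with $c=\prod_e k_e>0$ (product over the infection events). Differentiating gives $f'(\beta^\SIRI)=c\,N_I(\beta^\SIRI)^{N_I-1}(1+O(\beta^\SIRI))$, whence $f'/f=(N_I/\beta^\SIRI)(1+O(\beta^\SIRI))$. When $N_I\ge1$, i.e.\ $n^{\SIRI\SIRR}(\BM y_T)>n^{\SIRI\SIRR}(\BM y_0)$, this is $\Theta(1/\beta^\SIRI)$ as $\beta^\SIRI\searrow0$, which is exactly the first claim.

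The recovery case is entirely analogous: each $\SIRI\to\SIRR$ factor equals $\beta^\SIRR$ (degree one, coefficient one) and each $\SIRI\to\SIRI$ factor equals $1-\beta^\SIRR=1+O(\beta^\SIRR)$, so $g(\beta^\SIRR)=(\beta^\SIRR)^{N_R}(1+O(\beta^\SIRR))$ and $g'/g=(N_R/\beta^\SIRR)(1+O(\beta^\SIRR))=\Theta(1/\beta^\SIRR)$ whenever $N_R\ge1$, i.e.\ $n^\SIRR(\BM y_T)>n^\SIRR(\BM y_0)$. Note that the factorization makes each claim depend on a single parameter, so the stated limit $\BM\beta\searrow\BM0$ reduces to letting the relevant coordinate tend to $0$.

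The step I expect to require the most care is the leading-order extraction for $f$: I must confirm that no cancellation occurs in the coefficient of $(\beta^\SIRI)^{N_I}$, which is precisely where feasibility ($k\ge1$ at every infection, hence a strictly positive linear coefficient from each $\SIRS\to\SIRI$ factor) is essential, and that the $O(\beta^\SIRI)$ remainders are uniform so that $f'/f$ is genuinely pinned between two positive multiples of $1/\beta^\SIRI$ for small $\beta^\SIRI$ (the two-sided bound needed for $\Theta$). The factorization itself hinges on reading off the exact one-step transition laws from the SIR definition in Appendix~\ref{app:sir}, but once those are in hand the remainder is a routine polynomial expansion.
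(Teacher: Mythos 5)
Your argument is in substance the same as the paper's: the paper first proves an auxiliary lemma (\LEMref{sir-hist-prob}) showing $P_{\BM\beta}[\BM Y]=\omega_{\BM Y}(\beta^\SIRI)^{n^{\SIRI\SIRR}(\BM y_T)-n^{\SIRI\SIRR}(\BM y_0)}(\beta^\SIRR)^{n^\SIRR(\BM y_T)-n^\SIRR(\BM y_0)}(1+\OP O(\|\BM\beta\|))$ with $\omega_{\BM Y}>0$, by factorizing node-by-node and grouping nodes by their (initial, final) state pair, and then takes logarithmic derivatives exactly as you do; your grouping of factors by which parameter they carry is the same computation organized differently. One bookkeeping slip to fix: in this SIR variant (Appendix~\ref{app:sir}) a susceptible node's transition does \emph{not} involve only $\beta^\SIRI$ --- the $\SIRS\to\SIRI$ factor is $\big(1-\prod(1-\beta^\SIRI)\big)(1-\beta^\SIRR)$ and the $\SIRS\to\SIRR$ factor is $\big(1-\prod(1-\beta^\SIRI)\big)\beta^\SIRR$, so your $g$ must also absorb these $\beta^\SIRR$-dependent pieces (the separability $P_{\BM\beta}[\BM Y]=P[\BM y_0]f(\beta^\SIRI)g(\beta^\SIRR)$ survives because each single factor is itself a product of a function of $\beta^\SIRI$ and a function of $\beta^\SIRR$). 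Relatedly, $n^{\SIRI\SIRR}(\BM y_T)-n^{\SIRI\SIRR}(\BM y_0)$ counts all nodes \emph{leaving} $\SIRS$, including those jumping directly to $\SIRR$, and each such departure contributes one factor $1-(1-\beta^\SIRI)^{k}$ with $k\ge1$; with that correction your exponent counts $N_I$ and $N_R$ match the paper's, the leading coefficients are strictly positive by feasibility, and the conclusion follows as you state.
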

\DeferProof

\THMref{diffus-mle-sens} shows that the relative error of the likelihood induced by estimation error of diffusion parameters is inversely proportional to true diffusion parameters. The conditions $n^{\SIRI\SIRR}(\BM y_T)>n^{\SIRI\SIRR}(\BM y_0)$ and $n^{\SIRR}(\BM y_T)>n^{\SIRR}(\BM y_0)$ mean that infection and recovery do happen during the timespan $T$ of interest, which is almost always the case in practice. Hence, the conditions are quite mild and realistic. 

Infection and recovery rates in many real-world data are 
small \cite{o1987epidemiology,goh2009inflammatory,vanderpump2011epidemiology}. Hence, the likelihood under estimated diffusion parameters $\HAT{\BM\beta}$ has a large relative error and is ill-conditioned. Moreover, since the error of the likelihood is proportional to the likelihood itself, the MLE history under \emph{estimated} $\HAT{\BM\beta}$ 
may have larger decrease in likelihood than other histories and thus may not be the MLE history under \emph{true} diffusion parameters. 
Therefore, 
sensitivity 
is indeed a fundamental limitation of 
the MLE formulation 
in practice.

To address this limitation, we instead 
solve the \problem{} problem from a new perspective and  propose a so-called \emph{barycenter formulation} utilizing posterior expected hitting times, 
which, as we will prove, is stable against estimation error of diffusion parameters.

\section{Proposed Method: \method}
In this section, we propose a method called \emph{\underline{DI}ffusion hi\underline Tting \underline Times with \underline Optimal proposal} (\method) for solving the \problem{} problem. In Sec.~\ref{sec:method-beta}, we employ \emph{mean-field approximation} to estimate diffusion parameters. In Sec.~\ref{ssec:bary-form}, we propose the \emph{barycenter formulation} that is provably stable, and reduce the DASH problem to estimating the posterior expected \emph{hitting times}. In Sec.~\ref{sec:method-mcmc}, we propose to use an unsupervised graph neural network to learn an optimal proposal in Metropolis--Hastings Markov chain Monte Carlo (M--H MCMC) algorithm to estimate the posterior expected \emph{hitting times}. The overall procedure of \method{} is presented in \ALGref{method}. 


\ALG{method}{Proposed method: \method}{
\REQUIRE{(i) the graph $(\CAL V,\CAL E)$; (ii) the timespan $T$ of interest and the observed snapshot $\BM y_T$; (iii) the initial distribution $P[\BM y_0]$ and the (rough) number $n^\SIRI_0$ of initial infections; (iv) the batch sizes $K,L$, the MCMC steps $S$, and the moving average hyperparameter $\eta$.}
\ENSURE{the reconstructed diffusion history $\HAT{\BM Y}$.}
\STATE initialize the diffusion parameter estimates $\HAT{\BM\beta}$
\WHILE{$\HAT{\BM\beta}$ not converged}
    \STATE initialize pseudolikelihoods $f^{\SIRS}_{0,u;\HAT{\BM\beta}},\,f^{\SIRI}_{0,u;\HAT{\BM\beta}},\,f^{\SIRR}_{0,u;\HAT{\BM\beta}}$ for all $u\in\CAL V$ by Eq.~\eqref{eq:mf-init}
    \FOR{$t=0,\dots,T-1$}
        \STATE compute pseudolikelihoods $f^{\SIRS}_{t+1,u;\HAT{\BM\beta}},\,f^{\SIRI}_{t+1,u;\HAT{\BM\beta}},\,f^{\SIRR}_{t+1,u;\HAT{\BM\beta}}$ for all $u\in\CAL V$ by Eq.'s~\eqref{eq:mf-s}\eqref{eq:mf-i}\eqref{eq:mf-r}
    \ENDFOR
    \STATE update $\HAT{\BM\beta}\gets\OP{GradientDescent}\!\Big({-\frac1n\sum\limits_{u\in\CAL V}\!\!\log f^{y_{T,u}}_{T,u;\HAT{\BM\beta}}}\Big)$
\ENDWHILE
\STATE initialize the proposal $Q_{\BM\theta}$
\WHILE{$Q_{\BM\theta}$ not converged}
    \STATE sample $K$ histories $\BM Y^{(1)},\dots,\BM Y^{(K)}\sim P_{\HAT{\BM\beta}}$
    \STATE update $\BM\theta\gets\OP{GradientDescent}\!\Big({-\frac1K\sum\limits_{i=1}^K\log Q_{\BM\theta}(\BM y_T^{(i)})[\BM Y^{(i)}]}\Big)$
\ENDWHILE
\STATE sample $L$ histories $\BM Y^{(0,1)},\dots,\BM Y^{(0,L)}\sim Q_{\BM\theta}(\BM y_T)$
\STATE initialize the hitting time estimates: for each  $u\in\CAL V$\\
$\HAT h_u^\SIRI\gets\frac1L\sum\limits_{i=1}^Lh_u^\SIRI(\BM Y^{(0,i)})$,\quad$\HAT h_u^\SIRR\gets\frac1L\sum\limits_{i=1}^Lh_u^\SIRR(\BM Y^{(0,i)})$\quad
\FOR{$s=1,\dots,S$}
    \STATE sample $L$ histories $\TLD{\BM Y}{}^{(s,1)},\dots,\TLD{\BM Y}{}^{(s,L)}\sim Q_{\BM\theta}(\BM y_T)$
    \STATE generate $\xi^{(s,1)},\dots,\xi^{(s,L)}\sim\OP{Uniform}[0,1)$
    \STATE update MCMC by the M--H rule: for each $i=1,\dots,L$\\$\BM Y^{(s,i)}\!\!\gets\!\begin{cases}\TLD{\BM Y}{}^{(s,i)}&\!\!\text{if }\xi^{(s,i)}\!\!<\!\frac{P_{\HAT{\BM\beta}}[\TLD{\BM Y}{}^{(s,i)}]Q_{\BM\theta}(\BM y_T)[\BM Y^{(s-1,i)}]}{P_{\HAT{\BM\beta}}[\BM Y^{(s-1,i)}]Q_{\BM\theta}(\BM y_T)[\TLD{\BM Y}{}^{(s,i)}]}\\\BM Y^{(s-1,i)}&\!\!\text{otherwise}\end{cases}$
    \STATE update the hitting time estimates: for each $u\in\CAL V$\\$\HAT h_u^\SIRI\gets\eta\HAT h_u^\SIRI\!+\!\frac{1-\eta}L\!\!\sum\limits_{i=1}^L\!h_u^\SIRI(\BM Y^{(s,i)})$,\;\,$\HAT h_u^\SIRR\gets\eta\HAT h_u^\SIRR\!+\!\frac{1-\eta}L\!\!\sum\limits_{i=1}^L\!h_u^\SIRR(\BM Y^{(s,i)})$
\ENDFOR
\STATE reconstruct the diffusion history $\HAT{\BM Y}$: for each $u\in\CAL V$\\
$\HAT y_{t,u}\gets\begin{cases}
\SIRS&\text{for }0\le t<\OP{round}(\HAT h_u^\SIRI)\\
\SIRI&\text{for }\OP{round}(\HAT h_u^\SIRI)\le t<\OP{round}(\HAT h_u^\SIRR)\\
\SIRR&\text{for }\OP{round}(\HAT h_u^\SIRR)\le t\le T
\end{cases}$
\RETURN $\HAT{\BM Y}$
}

\subsection{Mean-Field Approximation for Diffusion Parameter Estimation}\label{sec:method-beta}
Previous works \cite{ssr,pcvc} 
assume 
diffusion parameters are known, but in our setting we have to estimate the unknown diffusion parameters $\BM\beta$. 
\THMref{diffus-par-mle-nphard} shows 
it is intractable to estimate 
$\BM\beta$ via MLE. To develop a tractable estimator, we employ \emph{mean-field approximation} \cite{wang2003epidemic} to compute the so-called \emph{pseudolikelihood} 
for each node $u$ at each time $t$. 
In mean-field approximation, 
the state $y_{t,u}$ is assumed to only depend on $y_{t-1,v}$ of neighbors $v\in\CAL N_u$, but the dependence between $y_{t,u}$ and $y_{t,v}$ is ignored.
Then, 
the joint pseudolikelihood factorizes into peudolikelihoods of each single node.

Let $\HAT{\BM\beta}$ denote the estimator of diffusion parameters $\BM\beta$. Let $f_{t,u;\HAT{\BM\beta}}^x$ denote the pseudolikelihood for node $u\in\CAL V$ to be in state $x\in\CAL X$ at time $t\in\CAL T$. If we assume 
that the set of $n_0^\SIRI$ initially infected nodes is uniformly drawn from all $\binom n{n_0^\SIRI}$ possible sets, 
then the probability that a node is initially infected is $\binom{n-1}{n_0^\SIRI-1}/\binom n{n_0^\SIRI}=n_0^\SIRI/n$. Thus, 
we set the pseudolikelihoods at $t=0$ as follows:
\AL{
f^{\SIRS}_{0,u;\HAT{\BM\beta}}:=1-\frac{n_0^\SIRI}n,
\quad\;\;
f^{\SIRI}_{0,u;\HAT{\BM\beta}}:=\frac{n_0^\SIRI}n,
\quad\;\;
f^{\SIRR}_{0,u;\HAT{\BM\beta}}:=0.
\label{eq:mf-init}}
The pseudolikelihoods at time $1,\dots,T$ are computed inductively on $t$ assuming that neighbors are independent. A node is susceptible at time $t+1$ iff it is susceptible at time $t$ and is not infected by its infected neighbors at time $t+1$:
\AL{f^{\SIRS}_{t+1,u;\HAT{\BM\beta}}\!:=\!f^{\SIRS}_{t,u;\HAT{\BM\beta}}\prod_{v\in\CAL N_u}(1-f^{\SIRI}_{t,v;\HAT{\BM\beta}}\cdot\HAT\beta^\SIRI).\!\label{eq:mf-s}}
A node is infected at time $t+1$ iff it is infected at time $t$, or it is susceptible at time $t$ but is infected by one of its infected neighbors and does not recover immediately at time $t+1$:
\AL{f^{\SIRI}_{t+1,u;\HAT{\BM\beta}}:=\bigg(f^{\SIRI}_{t,u;\HAT{\BM\beta}}\!+\!f^{\SIRS}_{t,u;\HAT{\BM\beta}}\bigg(1\!-\!\!\!\prod_{v\in\CAL N_u}\!\!(1\!-\!f^{\SIRI}_{t,v;\HAT{\BM\beta}}\!\cdot\!\HAT\beta^\SIRI)\bigg)\bigg)(1\!-\!\HAT\beta^\SIRR).\label{eq:mf-i}}
A node is recovered at time $t+1$ iff it is recovered at time $t$, or it recovers just at time $t+1$:
\AL{f_{t+1,u;\HAT{\BM\beta}}^\SIRR:=f_{t,u;\HAT{\BM\beta}}^\SIRR\!+\!\bigg(f_{t,u;\HAT{\BM\beta}}^\SIRI\!+\!f_{t,u;\HAT{\BM\beta}}^\SIRS\bigg(1\!-\!\prod_{v\in\CAL N_u}\!\!(1-f_{t,v;\HAT{\BM\beta}}^\SIRI\cdot\HAT\beta^\SIRI)\bigg)\bigg)\HAT\beta^\SIRR.\label{eq:mf-r}}
Finally, we estimate $\HAT{\BM\beta}$ by maximizing the joint log-pseudolikelihood of the observed snapshot $\BM y_T$, which decomposes into the sum of log-pseudolikelihoods of each single node:
\AL{\max_{\HAT{\BM\beta}}\sum_{u\in\CAL V}\log f^{y_{T,u}}_{T,u;\HAT{\BM\beta}}.\label{eq:mf-beta}}
The objective Eq.~\eqref{eq:mf-beta} can be optimized by gradient descent methods. From now on, we let $\HAT{\BM\beta}$ denote the estimated diffusion parameters.

\subsection{Barycenter Formulation with Provable Stability}
\label{ssec:bary-form}
Since diffusion parameter estimation is NP-hard by \THMref{diffus-par-mle-nphard}, it is impossible to avoid 
estimation error of diffusion parameters. 
Meanwhile, 
as is shown by \THMref{diffus-mle-sens}, the MLE formulation for diffusion history reconstruction is sensitive to estimation error of diffusion parameters. To avoid this inherent limitation of the MLE formulation, we propose an alternative formulation that is stable against estimation error of diffusion parameters.

Since \problem{} is an ill-posed inverse problem, it is crucial to design an appropriate formulation with desired inductive bias. Here we propose a so-called \emph{barycenter formulation} that can capture the desired information of the posterior distribution of histories and is provably stable against estimation error of diffusion parameters.

Consider the \emph{hitting times} at which node states change. 
For a node $u$ in a history $\BM Y$, we define $h_u^\SIRI(\BM Y)$ and $h_u^\SIRR(\BM Y)$ to be the first time when the node $u$ becomes infected/recovered, respectively:
\AL{
h_u^\SIRI(\BM Y)&:=\min\!\big\{T+1,\,\min\{t\ge0:y_{t,u}=\SIRI\text{ or }\SIRR\}\big\},\\
h_u^\SIRR(\BM Y)&:=\min\!\big\{T+1,\,\min\{t\ge0:y_{t,u}=\SIRR\}\big\}.
}
Note that a node can become infected and recover at the same time, so the definition of $h_u^\SIRI$ includes the case where $y_{t-1,u}=\SIRS$, $y_{t,u}=\SIRR$.

Our key theoretical result is: (unlike the likelihood $P_{\HAT{\BM\beta}}[\BM Y]$,) the posterior expected hitting times are stable against estimation error of diffusion parameters, as is stated in \THMref{hit-stable}.

\begin{THM}[Stability against estimation error of diffusion parameters]\label{THM:hit-stable}
Under SIR model with small true $\BM\beta$ (i.e., $\BM\beta\!\searrow\!\!\BM0$), if $n^\SIRI\!(\BM y_0)$ and $n^\SIRR\!(\BM y_0)$ are fixed, 
then for any possible snapshot $\BM y_T$,
\AL{
\nabla_{\!\BM\beta}\!\Exp_{\BM Y\sim P_{\BM\beta}\mid\BM y_T}\!\!\!\!\!\![h_u^\SIRI(\BM Y)]=\OP O(1),\quad\nabla_{\!\BM\beta}\!\Exp_{\BM Y\sim P_{\BM\beta}\mid\BM y_T}\!\!\!\!\!\![h_u^\SIRR(\BM Y)]=\OP O(1).
}
\end{THM}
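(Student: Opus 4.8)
The plan is to exploit a cancellation that \THMref{diffus-mle-sens} obscures: although each individual likelihood $P_{\BM\beta}[\BM Y]$ has relative derivative of order $1/\beta$, the offending singular factors are \emph{identical} across all histories consistent with $\BM y_T$ and therefore cancel once we normalize into the posterior. First I would write the posterior expectation explicitly as a finite sum over the ($\BM\beta$-independent) support,
\AM{
\Exp_{\BM Y\sim P_{\BM\beta}\mid\BM y_T}[h_u^\SIRI(\BM Y)]=\!\!\!\sum_{\BM Y\in\OP{supp}(P\mid\BM y_T)}\!\!\!h_u^\SIRI(\BM Y)\,\frac{P_{\BM\beta}[\BM Y]}{\sum_{\BM Y'\in\OP{supp}(P\mid\BM y_T)}P_{\BM\beta}[\BM Y']},
}
and analogously for $h_u^\SIRR$. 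Since the support is finite and $h_u^\SIRI(\BM Y)\le T+1$ is bounded, it suffices to show that each posterior weight is smooth in $\BM\beta$ with gradient $\OP O(1)$ as $\BM\beta\searrow\BM0$.

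Next I would expose the common singular factor. By the Markov property, $P_{\BM\beta}[\BM Y]=P[\BM y_0]\prod_{t,u}P_{\BM\beta}[y_{t+1,u}\mid\BM y_t]$, and in the SIR kernel (Appendix~\ref{app:sir}) each per-node factor is a polynomial in $\BM\beta$ whose leading order is $1$ for a ``stay'' transition, $\beta^\SIRI$ for an $\SIRS\to\SIRI$ transition, $\beta^\SIRR$ for an $\SIRI\to\SIRR$ transition, and $\beta^\SIRI\beta^\SIRR$ for a direct $\SIRS\to\SIRR$ transition. Because each node is infected at most once and recovers at most once, summing these orders shows the total power of $\beta^\SIRI$ equals $a:=n^{\SIRI\SIRR}(\BM y_T)-n^{\SIRI\SIRR}(\BM y_0)$ and the total power of $\beta^\SIRR$ equals $b:=n^\SIRR(\BM y_T)-n^\SIRR(\BM y_0)$. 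The fixed-count hypothesis makes $a$ and $b$ \emph{the same for every} feasible $\BM Y$, so I can factor
\AM{
P_{\BM\beta}[\BM Y]=(\beta^\SIRI)^a(\beta^\SIRR)^b\,g_{\BM Y}(\BM\beta),
}
where $g_{\BM Y}$ is a polynomial in $\BM\beta$ (the product of the residual transition factors with the $\BM\beta$-independent prior weight $P[\BM y_0]$), and $g_{\BM Y}(\BM0)>0$ since it is a product of positive numbers --- each infection event contributes the positive count of infected neighbors triggering it.

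Substituting this factorization, the prefactor $(\beta^\SIRI)^a(\beta^\SIRR)^b$ cancels between numerator and denominator, leaving
\AM{
\frac{P_{\BM\beta}[\BM Y]}{\sum_{\BM Y'}P_{\BM\beta}[\BM Y']}=\frac{g_{\BM Y}(\BM\beta)}{G(\BM\beta)},\qquad G(\BM\beta):=\!\!\!\sum_{\BM Y'\in\OP{supp}(P\mid\BM y_T)}\!\!\!g_{\BM Y'}(\BM\beta).
}
Because $\BM y_T$ is a possible snapshot the support is nonempty, so $G(\BM0)=\sum_{\BM Y'}g_{\BM Y'}(\BM0)>0$ and $G$ is bounded away from zero near $\BM0$. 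Each weight $g_{\BM Y}/G$ is thus a ratio of polynomials with nonvanishing denominator, hence real-analytic with gradient $\OP O(1)$ on a neighborhood of the origin. The expected hitting time is a finite linear combination of these weights with coefficients $h_u^\SIRI(\BM Y)\in\{0,\dots,T+1\}$, so $\nabla_{\!\BM\beta}\Exp_{\BM Y\sim P_{\BM\beta}\mid\BM y_T}[h_u^\SIRI(\BM Y)]=\OP O(1)$, and the same argument applies verbatim to $h_u^\SIRR$.

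The main obstacle I anticipate is the bookkeeping in the second step: verifying from the SIR kernel that the singular exponents $a,b$ are exactly the fixed counts $n^{\SIRI\SIRR}(\BM y_T)-n^{\SIRI\SIRR}(\BM y_0)$ and $n^\SIRR(\BM y_T)-n^\SIRR(\BM y_0)$ and do not vary across feasible histories, and that the residual factor $g_{\BM Y}$ is genuinely a polynomial with strictly positive constant term (in particular, that no per-node factor secretly lowers the order of $\beta^\SIRI$ or $\beta^\SIRR$ or vanishes at $\BM0$). Once this cancellation of the common singular prefactor is pinned down, the remaining smoothness of a ratio of polynomials with nonvanishing denominator is routine.
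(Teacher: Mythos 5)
Your proposal is correct and follows essentially the same route as the paper's proof: the paper's Lemma~\ref{LEM:sir-hist-prob} is exactly your factorization $P_{\BM\beta}[\BM Y]=(\beta^\SIRI)^a(\beta^\SIRR)^b\,\omega_{\BM Y}(1+\OP O(\|\BM\beta\|))$ with $\omega_{\BM Y}=g_{\BM Y}(\BM 0)=P[\BM y_0]\prod_u|\CAL I_u|>0$, and the paper likewise cancels the common singular prefactor in the posterior ratio before differentiating. The only (cosmetic) difference is that you phrase the regular part as an exact polynomial quotient and invoke smoothness of a ratio of polynomials with nonvanishing denominator, whereas the paper carries $(1+\OP O(\|\BM\beta\|))$ terms through the computation; your variant makes the final differentiation step slightly more self-justifying but is the same argument.
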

\DeferProof

In stark contrast with the MLE formulation, \THMref{hit-stable} shows 
posterior expected hitting times are stable even when $\BM\beta$ is close to zero. Such stability guarantee motivates us to utilize hitting times to design an objective function that is stable against estimation error of diffusion parameters.

Our idea is to define a distance metric $d$ for histories based on the hitting times, and find a history $\HAT{\BM Y}$ that is close to all possible histories w.r.t.\ the distance metric $d$:
\AL{\min_{\HAT{\BM Y}}\Exp_{\BM Y\sim P_{\HAT{\BM\beta}}\mid\BM y_T}[d(\HAT{\BM Y},\BM Y)^2].}
We call it the \emph{barycenter formulation}, because the optimal history $\HAT{\BM Y}$ for this formulation is the barycenter of the posterior distribution $P_{\HAT{\BM\beta}}\mathbin{|}\BM y_T$ 
w.r.t.\ the distance metric $d$.
We define the distance metric $d$ as the Euclidean distance with hitting times as coordinates:
\AL{d(\HAT{\BM Y},\BM Y):=\sqrt{\sum_{u\in\CAL V}\big((h_u^\SIRI(\HAT{\BM Y})-h_u^\SIRI(\BM Y))^2+(h_u^\SIRR(\HAT{\BM Y})-h_u^\SIRR(\BM Y))^2\big)}.}
Then, our barycenter formulation instantiates as:
\begin{align}\begin{aligned}
{}&\!\!\min_{\HAT{\BM Y}}\!\!\!\Exp_{\BM Y\sim P_{\HAT{\BM\beta}}\mid\BM y_T}\!\bigg[\!\sum_{u\in\CAL V}\!\!\big((h_u^\SIRI(\HAT{\BM Y})\!-\!h_u^\SIRI(\BM Y))^2+(h_u^\SIRR(\HAT{\BM Y})\!-\!h_u^\SIRR(\BM Y))^2\big)\bigg]\\
={}&\!\!\min_{\HAT{\BM Y}}\!\!\!\sum_{u\in\CAL V}\!\!\Big(\!\Exp_{\BM Y\sim P_{\HAT{\BM\beta}}\mid\BM y_T}\!\!\!\!\!\![(h_u^\SIRI(\HAT{\BM Y})\!-\!h_u^\SIRI(\BM Y))^2]\!+\!\!\!\!\!\!\Exp_{\BM Y\sim P_{\HAT{\BM\beta}}\mid\BM y_T}\!\!\!\!\!\![(h_u^\SIRR(\HAT{\BM Y})\!-\!h_u^\SIRR(\BM Y))^2]\Big)
.\end{aligned}
\!\!\!\!\label{eq:barycenter}\end{align}
According to bias--variance decomposition, we can further decompose Eq.~\eqref{eq:barycenter}
for $x=\SIRI,\SIRR$ as: 
\AL{\!\!\!\Exp_{\BM Y\sim P_{\HAT{\BM\beta}}\mid\BM y_T}\!\!\!\!\!\![(h_u^x(\HAT{\BM Y})-h_u^x(\BM Y))^2]&=\Big(h_u^x(\HAT{\BM Y})-\!\!\!\!\!\!\!\Exp_{\BM Y\sim P_{\HAT{\BM\beta}}\mid\BM y_T}\!\!\!\!\!\![h_u^x(\BM Y)]\Big)^{\!2}\!+\!\!\!\!\!\Var_{\BM Y\sim P_{\HAT{\BM\beta}}\mid\BM y_T}\!\!\!\!\![h_u^x(\BM Y)].}
Since the variances are constant w.r.t.\ 
the history estimator $\HAT{\BM Y}$, Eq.~\eqref{eq:barycenter} is thus equivalent to minimizing the squared biases:
\AL{
\min_{\HAT{\BM Y}}\!\!\sum_{u\in\CAL V}\!\!\Big(\Big(h_u^\SIRI(\HAT{\BM Y})-\!\!\!\!\!\Exp_{\BM Y\sim P_{\HAT{\BM\beta}}\mid\BM y_T}\!\!\!\!\!\![h_u^\SIRI(\BM Y)]\Big)^{\!2}\!\!+\!\Big(h_u^\SIRR(\HAT{\BM Y})-\!\!\!\!\!\Exp_{\BM Y\sim P_{\HAT{\BM\beta}}\mid\BM y_T}\!\!\!\!\!\![h_u^\SIRR(\BM Y)]\Big)^{\!2}\Big).}
Therefore, the optimal estimates are simply rounding each expected hitting time to the nearest integer:
\AL{
h_u^x(\HAT{\BM Y}):=\OP{round}\Big(\!\Exp_{\BM Y\sim P_{\HAT{\BM\beta}}\mid\BM y_T}\!\!\!\!\!\![h_u^x(\BM Y)]\Big),\qquad x=\SIRI,\SIRR.
\label{eq:hit-round}}
Now our problem reduces to estimating the expected hitting times over the posterior $P_{\HAT{\BM\beta}}\mathbin{|}\BM y_T$. Owing to the stability of expected hitting times, the optimal estimates $h_u^\SIRI(\HAT{\BM Y})$ and $h_u^\SIRR(\HAT{\BM Y})$ are also stable against estimation error of diffusion parameters. Finally, we reconstruct the history $\HAT{\BM Y}$ according to the estimated hitting times in Eq.~\eqref{eq:hit-round}:
\AL{
\HAT y_{t,u}:=\begin{cases}
\SIRS,&\text{for }0\le t<h_u^\SIRI(\HAT{\BM Y});\\
\SIRI,&\text{for }h_u^\SIRI(\HAT{\BM Y})\le t<h_u^\SIRR(\HAT{\BM Y});\\
\SIRR,&\text{for }h_u^\SIRR(\HAT{\BM Y})\le t\le T.
\end{cases}\label{eq:hit-to-hist}
}

\subsection{Metropolis--Hastings MCMC for Posterior Expectation Estimation}\label{sec:method-mcmc}

So far we have reduced our problem to estimating the posterior expected hitting times $\Exp_{\BM Y\sim P_{\HAT{\BM\beta}}\mid\BM y_T}[h_u^\SIRI(\BM Y)]$ and $\Exp_{\BM Y\sim P_{\HAT{\BM\beta}}\mid\BM y_T}[h_u^\SIRR(\BM Y)]$. However, due to the explosive search space of possible histories, it is intractable to compute the posterior probability $P_{\HAT{\BM\beta}}[\BM Y\mid\BM y_T]$, as is proven in \THMref{diffus-prob-nphard}. 
Therefore, it is non-trivial to design Monte Carlo samplers to estimate the posterior expectation.

To tackle this difficulty, we employ the Metropolis--Hastings Markov chain Monte Carlo (M--H MCMC) algorithm \cite{metropolis1953equation,hastings1970monte} to estimate posterior expectation. The basic idea of M--H MCMC is to construct a Markov chain whose stationary distribution is the desired posterior distribution. This algorithm requires a so-called \emph{proposal} distribution over possible histories. In our method, we design a proposal that differs for different $\BM y_T$, so we write it as $Q_{\BM\theta}(\BM y_T)[\cdot]$. Let $\OP{supp}(Q_{\BM\theta}(\BM y_T)):=\{\BM Y\in\CAL X^{\CAL T\times\CAL V}:Q_{\BM\theta}(\BM y_T)[\BM Y]>0\}$ denote the set of histories that can be generated from $Q_{\BM\theta}(\BM y_T)$. In each step of M--H MCMC, we sample a new history $\TLD{\BM Y}\sim Q_{\BM\theta}(\BM y_T)$. Let $\BM Y$ denote the current history in MCMC. Then according to the M--H rule, the new history $\TLD{\BM Y}$ is accepted with probability
\AL{
\min\!\bigg\{1,\frac{P_{\HAT{\BM\beta}}[\TLD{\BM Y}]Q_{\BM\theta}(\BM y_T)[\BM Y]}{P_{\HAT{\BM\beta}}[\BM Y]Q_{\BM\theta}(\BM y_T)[\TLD{\BM Y}]}\bigg\}.}
This defines a Markov chain of histories. After a sufficient number of steps, this Markov chain provably converges to the desired posterior distribution $P_{\HAT{\BM\beta}}\mathbin{|}\BM y_T$ \cite{hastings1970monte}. 
The convergence rate of MCMC depends critically on the design of the proposal $Q_{\BM\theta}$. If $Q_{\BM\theta}(\BM y_T)$ approximates $P_{\HAT{\BM\beta}}\mathbin{|}\BM y_T$ better, then the rate of convergence will be higher \cite{mengersen1996rates}. Since hand-craft proposals may fail to approximate the posterior distribution and thus adversely affect the convergence rate, we propose to use a graph neural network (GNN) to learn an optimal proposal. The backbone of $Q_{\BM\theta}$ is an Anisotropic GNN with edge gating mechanism \cite{bresson2018an,joshi2020learning,dimes}. The GNN takes the observed snapshot $\BM y_T$ as input and predicts a proposal $Q_{\BM\theta}(\BM y_T)$ corresponding to $\BM y_T$. The neural architecture of $Q_{\BM\theta}$ is detailed in Appendix~\ref{app:gnn-arch}. We want $Q_{\BM\theta}(\BM y_T)$ to approximate $P_{\HAT{\BM\beta}}\mathbin|\BM y_T$, so we adopt the expected squared difference of their log-likelihoods as the objective function:
\AL{
\min_{\BM\theta}\Exp_{\BM Y\sim P_{\HAT{\BM\beta}}}[(\log Q_{\BM\theta}(\BM y_T)[\BM Y]-\log P_{\HAT{\BM\beta}}[\BM Y\mid\BM y_T])^2]\label{eq:q-orig-obj}
.}
However, it is intractable to compute $P_{\HAT{\BM\beta}}[\BM Y\mid\BM y_T]$, so we cannot implement this objective function directly. To address this, 
we derive an equivalent objective (\THMref{q-equiv-obj}) that is tractable to evaluate.

\begin{THM}[An equivalent objective]\label{THM:q-equiv-obj}
If the GNN $Q_{\BM\theta}$ is sufficiently expressive 
and has the same set of possible histories as the posterior 
(i.e., $\OP{supp}(Q_{\BM\theta}(\BM y_T))=\OP{supp}(P\mathbin{|}\BM y_T)$ 
for any snapshot $\BM y_T$), then the original objective Eq.~\eqref{eq:q-orig-obj} is equivalent to
\AL{
\min_{\BM\theta}\Exp_{\BM Y\sim P_{\HAT{\BM\beta}}}\Big[\psi\Big(\frac{Q_{\BM\theta}(\BM y_T)[\BM Y]}{P_{\HAT{\BM\beta}}[\BM Y]}\Big)\Big]\label{eq:q-equiv-obj}
,}
for any \emph{strictly convex} function $\psi:\BB R_+\to\BB R$.
\end{THM}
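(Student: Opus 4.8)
The plan is to read ``equivalent'' as ``having the same set of global minimizers,'' and to show that both objectives are minimized over the expressive proposal class exactly when $Q_{\BM\theta}(\BM y_T)[\BM Y]=P_{\HAT{\BM\beta}}[\BM Y\mid\BM y_T]$ for every snapshot $\BM y_T$ and every history $\BM Y\in\OP{supp}(P\mid\BM y_T)$. First I would dispose of the original objective Eq.~\eqref{eq:q-orig-obj}: being an expectation of a squared quantity, it is nonnegative and vanishes iff $\log Q_{\BM\theta}(\BM y_T)[\BM Y]=\log P_{\HAT{\BM\beta}}[\BM Y\mid\BM y_T]$ for $P_{\HAT{\BM\beta}}$-almost every $\BM Y$. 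Since $\OP{supp}(P)$ decomposes as the disjoint union of the fibers $\OP{supp}(P\mid\BM y_T)$ over reachable terminal snapshots, the expressiveness and support assumptions make this optimum attainable and identify it as the target proposal $Q_{\BM\theta}(\BM y_T)=P_{\HAT{\BM\beta}}\mid\BM y_T$ on each fiber.

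The heart of the argument is showing the new objective Eq.~\eqref{eq:q-equiv-obj} is minimized at the very same proposal. I would rewrite the outer expectation as a sum over histories and then regroup the terms by terminal snapshot, so that the whole objective becomes an outer sum over $\BM y_T$ of inner sums over $\BM Y\in\OP{supp}(P\mid\BM y_T)$. Within each group the weight factors as $P_{\HAT{\BM\beta}}[\BM Y]=P_{\HAT{\BM\beta}}[\BM y_T]\,P_{\HAT{\BM\beta}}[\BM Y\mid\BM y_T]$, which lets me treat $\{P_{\HAT{\BM\beta}}[\BM Y\mid\BM y_T]\}_{\BM Y}$ as a probability distribution on the group. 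Applying Jensen's inequality to the convex $\psi$ against this distribution bounds each group's contribution below by the constant $P_{\HAT{\BM\beta}}[\BM y_T]\,\psi(1/P_{\HAT{\BM\beta}}[\BM y_T])$, which is independent of $\BM\theta$; summing over $\BM y_T$ yields a $\BM\theta$-independent lower bound on the entire objective.

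The final step is the equality characterization, where \emph{strict} convexity is indispensable. Jensen holds with equality iff the argument $Q_{\BM\theta}(\BM y_T)[\BM Y]/P_{\HAT{\BM\beta}}[\BM Y]$ is constant across all $\BM Y$ in a group; together with the normalization $\sum_{\BM Y}Q_{\BM\theta}(\BM y_T)[\BM Y]=1$ guaranteed by the support assumption, that constant must equal $1/P_{\HAT{\BM\beta}}[\BM y_T]$, forcing $Q_{\BM\theta}(\BM y_T)[\BM Y]=P_{\HAT{\BM\beta}}[\BM Y\mid\BM y_T]$. Thus the unique minimizer of Eq.~\eqref{eq:q-equiv-obj} coincides with that of Eq.~\eqref{eq:q-orig-obj} for every strictly convex $\psi$, giving the claimed equivalence.

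The main obstacle I anticipate is the bookkeeping in the regrouping step: I must be careful that the $\BM y_T$ inside $Q_{\BM\theta}(\BM y_T)[\BM Y]$ is the terminal snapshot \emph{determined by the sampled $\BM Y$ itself}, so that the expectation over $P_{\HAT{\BM\beta}}$ factors cleanly into an outer sum over terminal snapshots and an inner sum over consistent histories. I would invoke the hypothesis $\OP{supp}(Q_{\BM\theta}(\BM y_T))=\OP{supp}(P\mid\BM y_T)$ explicitly before applying Jensen, since it is precisely what ensures the ratio inside $\psi$ is well-defined and strictly positive on each group, that $Q_{\BM\theta}(\BM y_T)$ sums to one over the group, and that the target proposal lies in the feasible class.
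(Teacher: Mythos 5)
Your proposal is correct and follows essentially the same route as the paper: decompose the expectation by terminal snapshot, use the support condition to show $Q_{\BM\theta}(\BM y_T)$ normalizes over each fiber so that Jensen's inequality yields the $\BM\theta$-independent lower bound $\psi(1/P_{\HAT{\BM\beta}}[\BM y_T])$ per group, and invoke strict convexity to force the ratio to be constant (hence equal to $1/P_{\HAT{\BM\beta}}[\BM y_T]$) in the equality case. The one place you are slightly less careful than the paper is attainability: the paper reads ``sufficiently expressive'' as guaranteeing only a parameter \emph{sequence} with $Q_{\BM\theta_k}(\BM y_T)\to P_{\HAT{\BM\beta}}\mathbin{|}\BM y_T$, so it phrases equivalence in terms of asymptotically optimal sequences (using dominated convergence and Fatou's lemma) rather than exact global minimizers, whereas your reading assumes the optimum is attained.
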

\DeferProof

Here, the intractable term $P_{\HAT{\BM\beta}}[\BM Y\mid\BM y_T]$ in Eq.~\eqref{eq:q-orig-obj} is replaced with a tractable term $P_{\HAT{\BM\beta}}[\BM Y]$. In this work, we use $\psi(w):=-\log w$, and the objective Eq.~\eqref{eq:q-equiv-obj} instantiates as
\AL{&\min_{\BM\theta}\Exp_{\BM Y\sim P_{\HAT{\BM\beta}}}\Big[{-\log\!\Big(\frac{Q_{\BM\theta}(\BM y_T)[\BM Y]}{P_{\HAT{\BM\beta}}[\BM Y]}\Big)}\Big]
\\\iff{}&\min_{\BM\theta}\Exp_{\BM Y\sim P_{\HAT{\BM\beta}}}[-\log Q_{\BM\theta}(\BM y_T)[\BM Y]+\log P_{\HAT{\BM\beta}}[\BM Y]]
\\\iff{}&\min_{\BM\theta}\Exp_{\BM Y\sim P_{\HAT{\BM\beta}}}[-\log Q_{\BM\theta}(\BM y_T)[\BM Y]]
.\label{eq:q-simp-obj}}
We train the GNN $Q_{\BM\theta}$ using the objective Eq.~\eqref{eq:q-simp-obj}. Notably, since \method{} does not require real diffusion histories as training data, it does not suffer from the scarcity of training data in practice. After training, we use this GNN as the proposal in the M--H MCMC algorithm to estimate the posterior expected hitting times for diffusion history reconstruction. The sampling scheme of the proposal $Q_{\BM\theta}$ is detailed in Appendix~\ref{app:gnn-samp}, which is designed to satisfy the condition $\OP{supp}(Q_{\BM\theta}(\BM y_T))=\OP{supp}(P\mathbin{|}\BM y_T)$ in \THMref{q-equiv-obj}.


\subsection{Complexity Analysis}

\begin{PRP}\label{PRP:cplx}
(i) The time complexity of each iteration of diffusion parameter estimation is $\OP O(T(n+m))$. (ii) The time complexity to sample a history from the proposal is $\OP O(T(n\log n+m))$.
\end{PRP}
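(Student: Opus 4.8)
The plan is to bound the running time of each component directly by counting elementary operations, relying on the handshaking identity $\sum_{u\in\CAL V}|\CAL N_u|=2m$ to turn per-node neighbor aggregations into an $\OP O(m)$ term.

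For part~(i), one iteration of diffusion parameter estimation consists of initializing the pseudolikelihoods via Eq.~\eqref{eq:mf-init}, propagating them through Eqs.~\eqref{eq:mf-s}\eqref{eq:mf-i}\eqref{eq:mf-r} for $t=0,\dots,T-1$, and taking one gradient-descent step on Eq.~\eqref{eq:mf-beta}. The initialization touches each node once, costing $\OP O(n)$. At a fixed time $t$, the only non-constant work per node $u$ is the product $\prod_{v\in\CAL N_u}(1-f^{\SIRI}_{t,v;\HAT{\BM\beta}}\cdot\HAT\beta^\SIRI)$, which is shared across Eqs.~\eqref{eq:mf-s}\eqref{eq:mf-i}\eqref{eq:mf-r} and takes $\OP O(|\CAL N_u|)$ time; every remaining per-node update is $\OP O(1)$. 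Summing over nodes yields $\sum_{u\in\CAL V}\OP O(1+|\CAL N_u|)=\OP O(n+m)$ per time step, hence $\OP O(T(n+m))$ for the whole forward pass. Since the gradient of Eq.~\eqref{eq:mf-beta} is obtained by reverse-mode differentiation through this same computation graph, backpropagation has the same asymptotic cost, and the parameter update itself is $\OP O(1)$ because $\HAT{\BM\beta}$ is two-dimensional; this gives the $\OP O(T(n+m))$ bound.

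For part~(ii), I would analyze the sampling scheme of Appendix~\ref{app:gnn-samp}, which generates the history across the $T$ time frames by evaluating the proposal GNN and drawing node states frame by frame. Per frame, the Anisotropic GNN with edge gating (Appendix~\ref{app:gnn-arch}) performs one aggregation over all edges and one update over all nodes, costing $\OP O(n+m)$ for a constant number of layers, while sampling the frame from these node-wise predictions incurs an $\OP O(n\log n)$ cost from a priority-queue (equivalently, sorting) step that orders candidate transitions so that the sampled history respects the support constraint $\OP{supp}(Q_{\BM\theta}(\BM y_T))=\OP{supp}(P\mid\BM y_T)$ required by \THMref{q-equiv-obj}. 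Summing the per-frame cost $\OP O(n\log n+m)$ over the $T$ frames gives $\OP O(T(n\log n+m))$.

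The aggregation in part~(i) is routine, since the handshaking lemma collapses the neighbor products to $\OP O(m)$ almost mechanically. The main obstacle is making the $n\log n$ factor in part~(ii) precise: it requires a careful walk through the sampling procedure of Appendix~\ref{app:gnn-samp} to confirm that the priority-queue/sorting step is the dominant non-edge cost per frame and that enforcing the support constraint does not introduce any hidden super-logarithmic overhead, so that the bound is exactly $\OP O(T(n\log n+m))$.
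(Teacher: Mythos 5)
Your proposal is correct and follows essentially the same route as the paper: part (i) charges each pseudolikelihood update to the incident edges (the paper phrases this as ``proportional to the number of edges connecting to that node'' rather than invoking handshaking explicitly) and notes backpropagation has the same cost, and part (ii) decomposes the per-frame cost into the GNN prediction, an $\OP O(n\log n)$ sort of the node probabilities, and counter maintenance. The one step you flag as remaining---that enforcing the support constraint adds no super-logarithmic overhead---is settled in the paper exactly as you anticipate: each edge participates in $\OP O(1)$ counter operations per frame, so that stage is $\OP O(n+m)$.
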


\vspace{-0.2em}


According to \PRPref{cplx}, if we optimize $\HAT{\BM\beta}$ for $I$ iterations, optimize $Q_{\BM\theta}$ for $J$ iterations with $K$ samples per iteration, and run MCMC for $S$ iterations with $L$ samples per iteration, then the overall time complexity of \method{} is $\OP O(T(n+m)I+T(n\log n+m)(JK+SL))$. 
If hyperparameters are considered as constants, then the overall time complexity $\OP O(T(n\log n+m))$ is nearly linear w.r.t.\ the output size $\Theta(Tn)$ and the input size $\Theta(n+m)$.


\vspace{-0.3em}

\section{Experiments}\label{sec:exp}

\begin{table}[t]\begin{center}
\caption{Summary of datasets.}\label{tab:data}
\vspace{-8pt}
\begin{scriptsize}
\begin{tabular}{lccccc}
\toprule
\textbf{Dataset}&\textbf{\#Nodes}&\textbf{\#Edges}&\textbf{Timespan}&\textbf{Graph}&\textbf{Diffusion}\\
\midrule
BA      &1,000   &3,984   &10 &Synthetic &Synthetic\\
ER      &1,000   &3,987   &10 &Synthetic &Synthetic\\
\midrule
Oregon2 &11,461  &32,730  &15 &Real      &Synthetic\\
Prost   &15,810  &38,540  &15 &Real      &Synthetic\\
\midrule
BrFarmers &82     &230    &16 &Real      &Real SI  \\
Pol     &18,470  &48,053  &40 &Real      &Real SI  \\
Covid   &344    &2,044   &10 &Real      &Real SIR \\
Hebrew     &3,521   &18,064  &9  &Real      &Real SIR \\
\bottomrule
\end{tabular}
\end{scriptsize}
\end{center}\vspace{-1em}\end{table}

\begin{table*}[h]\centering
\caption{Comparison between estimated $\HAT{\BM\beta}$ and true $\BM\beta$ to justify the mean-field approximation. ``OOM'' indicates ``out of memory.''}\label{tab:exp-beta}
\vspace{-8pt}
\scalebox{0.68}{\begin{tabular}{cl||cc||cc||cc||cc||cc||cc||cc||cc}
\toprule
\multirow{2}*{\textbf{Method}}&\multirow{2}*{\textbf{Training}}&\multicolumn{2}{c||}{BA-SI}&\multicolumn{2}{c||}{ER-SI}&\multicolumn{2}{c||}{Oregon2-SI}&\multicolumn{2}{c||}{Prost-SI}&\multicolumn{2}{c||}{BA-SIR}&\multicolumn{2}{c||}{ER-SIR}&\multicolumn{2}{c||}{Oregon2-SIR}&\multicolumn{2}{c}{Prost-SIR}\\
&&F1$\uparrow$&NRMSE$\downarrow$&F1$\uparrow$&NRMSE$\downarrow$&F1$\uparrow$&NRMSE$\downarrow$&F1$\uparrow$&NRMSE$\downarrow$&F1$\uparrow$&NRMSE$\downarrow$&F1$\uparrow$&NRMSE$\downarrow$&F1$\uparrow$&NRMSE$\downarrow$&F1$\uparrow$&NRMSE$\downarrow$\\
\midrule
\multirow{2}*{GRIN}
&w/ true $\BM\beta$
&.8404&.2123&.8317&.2166&.8320&.2249&.8482&.2155&.7867&.1692&.7626&.2484&.8024&.1651&.8067&.1652\\
&w/ estimated $\HAT{\BM\beta}$
&.8456&.2071&.8324&.2160&.8370&.2199&.8504&.2128&.7833&.1717&.7757&.1939&.8030&.1633&.8068&.1644\\
\midrule
\multirow{2}*{SPIN}
&w/ true $\BM\beta$
&.8414&.2117&.8310&.2167&\multicolumn2{c||}{\multirow2*{OOM}}&\multicolumn2{c||}{\multirow2*{OOM}}&.7832&.1663&.7647&.2321&\multicolumn2{c||}{\multirow2*{OOM}}&\multicolumn2{c}{\multirow2*{OOM}}\\
&w/ estimated $\HAT{\BM\beta}$
&.8477&.2047&.8315&.2170&     &     &     &     &.7869&.1611&.7800&.1909&     &     &     &     \\
\bottomrule
\end{tabular}}
\vspace{-0.3em}
\end{table*}

\begin{table*}[h]\centering
\vspace{-4pt}
\caption{Results for real-world diffusion. ``OOM'' indicates ``out of memory.''}\label{tab:exp-real}
\vspace{-8pt}
\scalebox{0.68}{\begin{tabular}{cl||cc||cc||cc||cc}
\toprule
\multirow{2}*{\textbf{Type}}&\multirow{2}*{\textbf{Method}}&\multicolumn{2}{c||}{BrFarmers}&\multicolumn{2}{c||}{Pol}&\multicolumn{2}{c||}{Covid}&\multicolumn{2}{c}{Hebrew}\\
&&F1$\uparrow$&NRMSE$\downarrow$&F1$\uparrow$&NRMSE$\downarrow$&F1$\uparrow$&NRMSE$\downarrow$&F1$\uparrow$&NRMSE$\downarrow$\\
\midrule
\multirow{5}*{\begin{tabular}{c}Supervised\\(w/ estimated $\HAT{\BM\beta}$)\end{tabular}}
&GCN   &.5409&.6660&.4458&.4946&.3162&.5214&.3350&.6070\\
&GIN   &.4548&.6565&.5203&.4767&.3226&.4951&.3704&.7816\\
&BRITS &.5207&.3995&\multicolumn{2}{c||}{OOM}&.3524&.5333&.3120&.6584\\
&GRIN  &.8003&.2425&.6518&.3731&.5448&.3040&.5916&\BE{.2212}\\
&SPIN  &\BE{.8268}&\BE{.2084}&\multicolumn{2}{c||}{OOM}&\underline{.5917}&\underline{.2932}&.5178&.3330\\
\midrule
\multirow{2}*{MLE}
&DHREC &.6131&.4150&.7023&.3398&.3540&.6023&\underline{.6251}&.4169\\
&CRI   &.6058&.4444&\underline{.7468}&\underline{.2942}&.4170&.5487&.5344&.3552\\
\midrule
Barycenter&\method{} (ours)&\underline{.8206}&\underline{.2142}&\BE{.7471}&\BE{.2903}&\BE{.6240}&\BE{.2637}&\BE{.6411}&\underline{.2983}\\
\bottomrule
\end{tabular}}
\vspace{-0.3em}
\end{table*}

\begin{table*}[h]\centering
\vspace{-4pt}
\caption{Comparison with MLE-based methods on synthetic SI and SIR diffusion. *We use GRIN trained with true $\BM\beta$ as the ideal performance and calculate \emph{Gap} w.r.t.\ this ideal performance.}\label{tab:exp-synth}
\vspace{-8pt}
\scalebox{0.68}{\begin{tabular}{cl||cc|cc||cc|cc||cc|cc||cc|cc}
\toprule
\multirow{2}*{\textbf{Type}}&\multirow{2}*{\textbf{Method}}&\multicolumn{4}{c||}{BA-SI}&\multicolumn{4}{c||}{ER-SI}&\multicolumn{4}{c||}{Oregon2-SI}&\multicolumn{4}{c}{Prost-SI}\\
&&F1$\uparrow$&Gap$\downarrow$&NRMSE$\downarrow$&Gap$\downarrow$&F1$\uparrow$&Gap$\downarrow$&NRMSE$\downarrow$&Gap$\downarrow$&F1$\uparrow$&Gap$\downarrow$&NRMSE$\downarrow$&Gap$\downarrow$&F1$\uparrow$&Gap$\downarrow$&NRMSE$\downarrow$&Gap$\downarrow$\\
\midrule
Ideal&GRIN  &.8404*&---&.2123*&---&.8317*&---&.2166*&---&.8320*&---&.2249*&---&.8482*&---&.2155*&---\\
\midrule
\multirow{2}*{MLE}
&DHREC &.6026&28.30\%&.4644&118.75\%&.6281&24.48\%&.4495&107.53\%&.6038&27.43\%&.4101&82.35\%&.6558&22.68\%&.4138&92.02\%\\
&CRI   &.7502&10.73\%&.3012&41.87\%&.7797&6.25\%&.2744&26.69\%&.8183&1.65\%&.2438&8.40\%&.8083&4.70\%&.2491&15.59\%\\
\midrule[0.3pt]
Barycenter&\method{} (ours)&\BE{.8384}&\BE{0.24\%}&\BE{.2139}&\BE{0.75\%}&\BE{.8269}&\BE{0.58\%}&\BE{.2225}&\BE{2.72\%}&\BE{.8280}&\BE{0.48\%}&\BE{.2289}&\BE{1.78\%}&\BE{.8327}&\BE{1.83\%}&\BE{.2317}&\BE{7.52\%}\\
\midrule[0.68pt]
\multirow{2}*{\textbf{Type}}&\multirow{2}*{\textbf{Method}}&\multicolumn{4}{c||}{BA-SIR}&\multicolumn{4}{c||}{ER-SIR}&\multicolumn{4}{c||}{Oregon2-SIR}&\multicolumn{4}{c}{Prost-SIR}\\
&&F1$\uparrow$&Gap$\downarrow$&NRMSE$\downarrow$&Gap$\downarrow$&F1$\uparrow$&Gap$\downarrow$&NRMSE$\downarrow$&Gap$\downarrow$&F1$\uparrow$&Gap$\downarrow$&NRMSE$\downarrow$&Gap$\downarrow$&F1$\uparrow$&Gap$\downarrow$&NRMSE$\downarrow$&Gap$\downarrow$\\
\midrule
Ideal&GRIN  &.7867*&---&.1692*&---&.7626*&---&.2484*&---&.8024*&---&.1651*&---&.8067*&---&.1652*&---\\
\midrule
\multirow{2}*{MLE}
&DHREC &.5080&35.43\%&.4722&179.08\%&.5500&27.88\%&.4423&78.06\%&.6044&24.68\%&.4478&171.23\%&.6268&22.30\%&.4326&161.86\%\\
&CRI   &.5994&23.81\%&.3356&98.35\%&.6129&19.63\%&.3109&25.16\%&.5761&28.20\%&.3576&116.60\%&.5738&28.87\%&.3406&106.17\%\\
\midrule[0.3pt]
Barycenter&\method{} (ours)&\BE{.7783}&\BE{1.07\%}&\BE{.1633}&\BE{$-$3.49\%}&\BE{.7734}&\BE{$-$1.42\%}&\BE{.1679}&\BE{$-$32.41\%}&\BE{.7928}&\BE{1.20\%}&\BE{.1707}&\BE{3.39\%}&\BE{.7929}&\BE{1.71\%}&\BE{.1690}&\BE{2.30\%}\\
\bottomrule
\end{tabular}}
\vspace{-1em}
\end{table*}

We conduct extensive experiments on both synthetic and real-world datasets to answer the following research questions:
\begin{enumerate}
\renewcommand\labelenumi{\textbf{RQ\theenumi:}}
\item\label{RQ:beta}What is the quality of estimated diffusion parameters $\HAT{\BM\beta}$?
\item\label{RQ:diffus}How does \method{} perform for real-world diffusion?
\item\label{RQ:mle}How does \method{} compare to MLE-based methods?
\item\label{RQ:stab}How stable is \method{} against estimation error of $\HAT{\BM\beta}$? 
\item\label{RQ:scal}How is the scalability of \method{}?
\item\label{RQ:timespan}How does the performance of \method{} vary with timespan?
\item\label{RQ:abla}In M--H MCMC, how does our learned proposal $Q_{\BM\theta}$ compare to a random proposal?
\end{enumerate}

\begin{figure}[t]\centering
\begin{subfigure}[t]{0.51\columnwidth}\centering
\includegraphics[width=\textwidth]{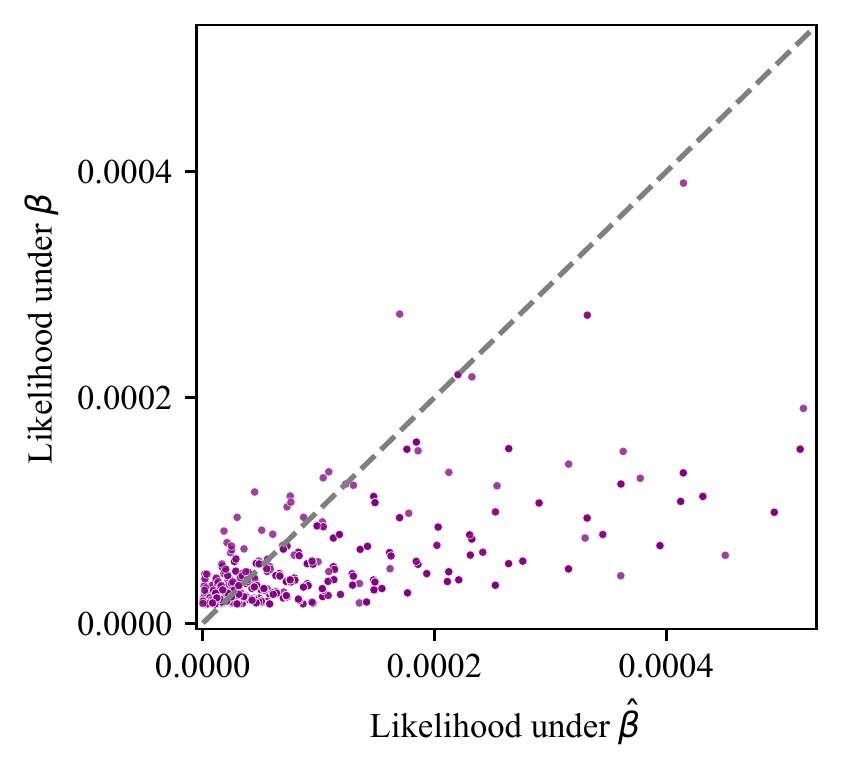}
\vspace{-16pt}
\caption{$P_{\BM\beta}[\BM Y]$ vs $P_{\HAT{\BM\beta}}[\BM Y]$ in the MLE formulation.}
\label{fig:vis-lik}
\end{subfigure}
\hfill
\begin{subfigure}[t]{0.46\columnwidth}\centering
\includegraphics[width=\textwidth]{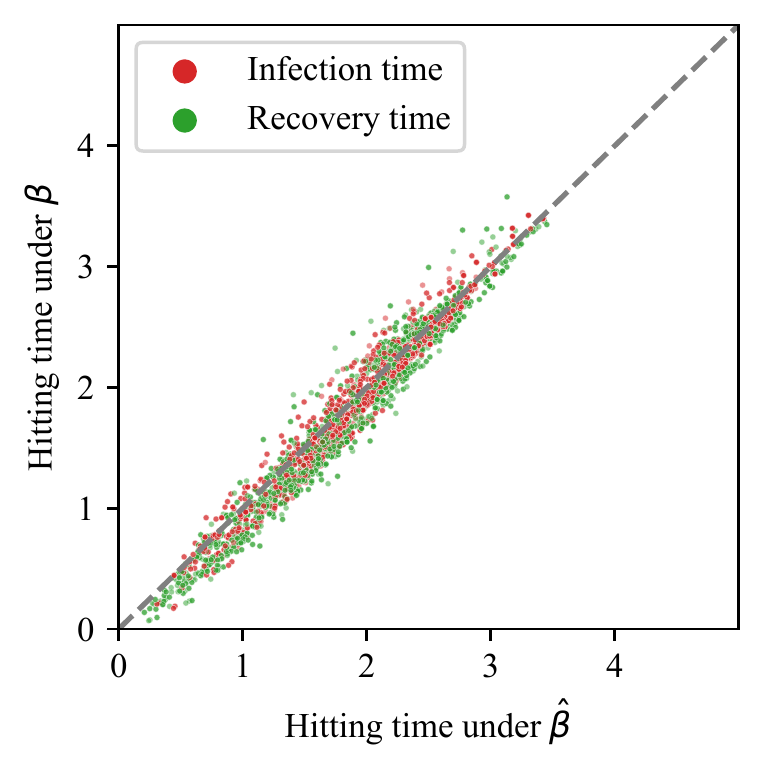}
\vspace{-16pt}
\caption{$\!\!\!\Exp\limits_{P_{\BM\beta}\mid\BM y_T}\!\!\!\![h_u^x(\BM Y)]$ vs $\!\!\!\Exp\limits_{P_{\HAT{\BM\beta}}\mid\BM y_T}\!\!\!\![h_u^x(\BM Y)]$ in the barycenter formulation.}
\label{fig:vis-hit}
\end{subfigure}
\vspace{-8pt}
\caption{Sensitivity of the MLE formulation vs stability of the barycenter formulation.}
\label{fig:vis-stab}
\end{figure}

\begin{figure}[t]\centering
\vspace{-6pt}
\begin{subfigure}[t]{0.465\columnwidth}\centering
\includegraphics[width=\textwidth]{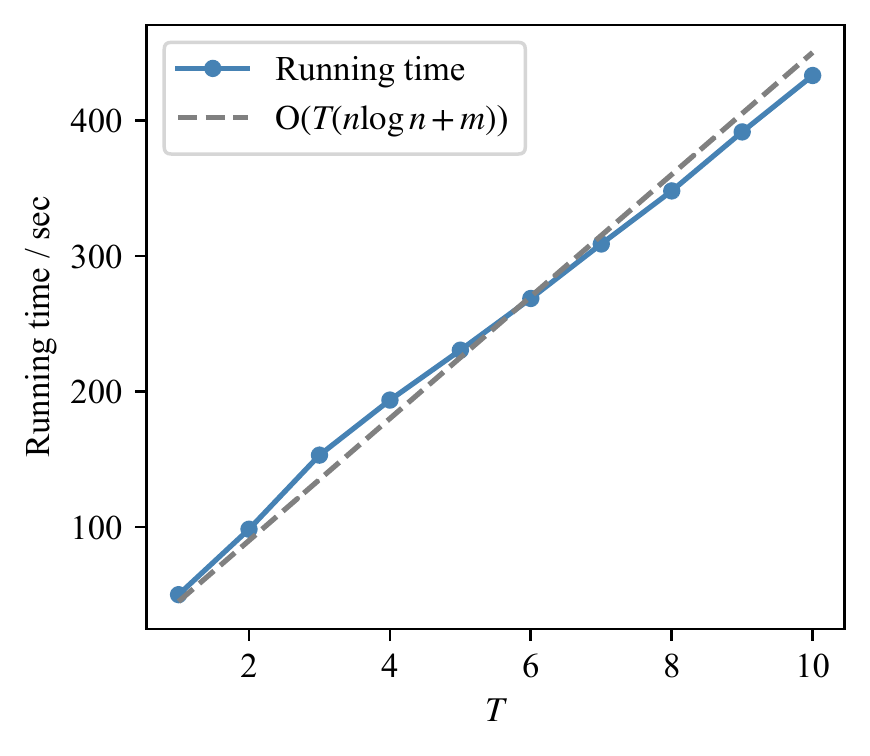}
\vspace{-16pt}
\caption{Running time vs $T$.}
\label{fig:scal-T}
\end{subfigure}
\hfill
\begin{subfigure}[t]{0.5\columnwidth}\centering
\includegraphics[width=\textwidth]{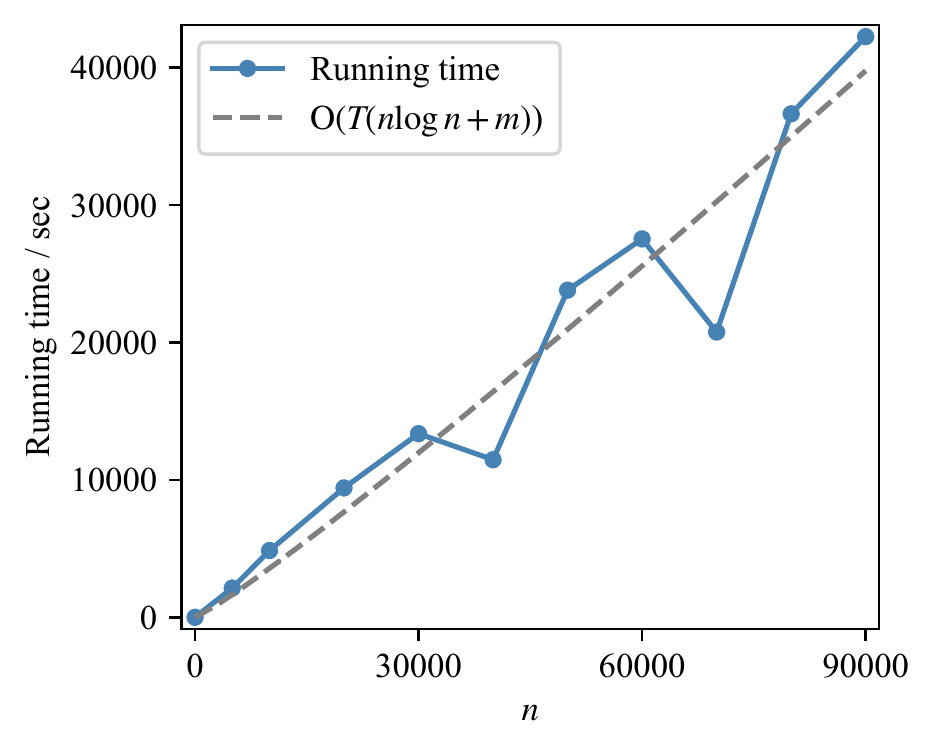}
\vspace{-16pt}
\caption{Running time vs $n$.}
\label{fig:scal-n}
\end{subfigure}
\vspace{-8pt}
\caption{Running time (training time + testing time).}
\label{fig:scal}
\end{figure}

\vspace{-0.5em}

\subsection{Experimental Setting}

\subsubsection{Datasets} 
We use 3 types of datasets. 
\textbf{(D1) Synthetic graphs and diffusion:} Barab\' asi--Albert (BA) \cite{barabasi-albert} and  Erd\H os--R\' enyi (ER) \cite{erdos-renyi} random graphs with synthetic SI and SIR diffusion. 
\textbf{(D2) Synthetic diffusion on real graphs:} Oregon2 \cite{oregon2} and Prost \cite{data-prost} with synthetic SI and SIR diffusion. 
\textbf{(D3) Real diffusion on real graphs:} BrFarmers \cite{data-farmers-orig,data-farmers} and Pol \cite{data-pol} with SI-like real diffusion, and Covid and Hebrew \cite{data-heb} with SIR-like real diffusion. 
Please refer to Appendix~\ref{app:exp-data} for detailed description of datasets. 

\subsubsection{Baselines} We consider 
2 types of baselines. 
\textbf{(B1) Supervised methods for time series imputation:} DASH can be alternatively formulated as time series imputation, so we compare \method{} with latest imputation methods including GCN \cite{gcn}, \cite{gin}, BRITS \cite{brits}, GRIN \cite{grin}, and SPIN \cite{spin}.
\textbf{(B2) MLE-based methods for diffusion history reconstruction:} 
DHREC \cite{pcvc} and CRI \cite{cri}.
Please refer to Appendix~\ref{app:exp-bl} for description of baselines.

\subsubsection{Evaluation Metrics} To measure the similarity between the true history and the reconstructed history, we use the macro F1 score (F1; the higher, the better) and the normalized rooted mean squared error (NRMSE; the lower, the better) of hitting times, where
\AL{\OP{NRMSE}(\BM Y,\HAT{\BM Y}):=\sqrt{\tfrac{\!\sum\limits_{u\in\CAL V}\!\!((h_u^\SIRI(\BM Y)\!-\!h_u^\SIRI(\HAT{\BM Y}))^2\!+\!(h_u^\SIRR(\BM Y)\!-\!h_u^\SIRR(\HAT{\BM Y}))^2)}{2n(T+1)^2}}.}
In Sec.~\ref{sec:exp-mle}, we also use the performance gap to the ideal performance as a metric. The smaller gap, the better. Let $s$ and $s^*$ denote the actual performance and the ideal performance, respectively. For F1, $\OP{Gap}(s,s^*):=(s^*-s)/s^*$. For NRMSE, $\OP{Gap}(s,s^*):=(s-s^*)/s^*$.

\subsubsection{Reproducibility}
Please refer to Appendix~\ref{app:repro}.

\subsection{Quality of Estimated Diffusion Parameters}\label{sec:exp-beta}
To answer \RQref{beta}, 
we compare the performance of supervised methods under true $\BM\beta$ with their performance under estimated $\HAT{\BM\beta}$. We use two strongest imputation methods GRIN and SPIN. Since true diffusion parameters of real diffusion are not available, we only use datasets D1 and D2 in this experiment. 
Results are shown in Table~\ref{tab:exp-beta}. Whether trained with true $\BM\beta$ or estimated $\HAT{\BM\beta}$, the performance has no significant difference. 
Results suggest that mean-field approximation is sufficient to estimate diffusion parameters accurately, and the estimated diffusion parameters can help supervised methods achieve strong performance when the diffusion model is known.

\subsection{Performance for Real-World Diffusion}
For real-world diffusion, the diffusion model is not exactly SI/SIR, and true diffusion parameters are unknown. Hence, it is important to test how \method{} generalizes from SI/SIR to real-world diffusion. To answer \RQref{diffus} and \RQref{mle}, we comprehensively compare \method{} with (B1) supervised methods for time series imputation and (B2) MLE-based methods for diffusion history reconstruction on real-world diffusion in D3. Since true diffusion parameters of real diffusion are not available, we estimate diffusion parameters by \method. For B1, we use estimated diffusion parameters to generate training data. For B2, we feed estimated diffusion parameters to MLE-based methods.

The results for real diffusion are shown in Table~\ref{tab:exp-real}. \method{} generalizes well to real diffusion and consistently achieves strong performance on all datasets. For instance, \method{} is 10.06\% better in NRMSE than the best baseline for the Covid dataset. In contrast, the performance of supervised methods degrades drastically when real diffusion deviates from SI/SIR models. This is because the training data generated by the SI/SIR model follow a different distribution from the real diffusion. Only for BrFarmers do supervised methods achieve good performance, because the diffusion in BrFarmers is very close to the SI model \cite{data-farmers}. 
For MLE-based methods, their performance varies largely across datasets. 
This is because real diffusion may not be close to the SI/SIR model, so the likelihood as their objective function may fail.



\subsection{Comparison with MLE-Based Methods}\label{sec:exp-mle}
To further answer \RQref{mle}, we compare \method{} to MLE-based methods also with synthetic diffusion on both synthetic graphs in D1 and real graphs in D2. 
Note that here we do not directly compare with supervised methods for the following reasons. 
(1) Table~\ref{tab:exp-beta} shows that if supervised methods know the diffusion model of 
test data, then they can generate training data that follow 
the same distribution as the test data. Thus, they are expected to perform well. 
(2) Meanwhile, Table~\ref{tab:exp-real} shows 
the superiority of supervised methods comes only from knowing the underlying diffusion model (including its true parameters), 
which is almost impossible in practice due to the scarcity of training data. As long as they 
have no access to the true diffusion model, their performance drop drastically. Therefore, it is meaningless to compare with supervised methods for synthetic diffusion. Instead, we train GRIN with true diffusion parameters and use its results as the \emph{ideal} performance. Then for MLE-based methods and \method{}, we compare their performance gaps to this ideal performance. We do not use SPIN here because it has similar performance with GRIN, while SPIN is out of memory on D2.

The results are shown in Table~\ref{tab:exp-synth}. \method{} consistently achieves the strongest performance and significantly outperforms state-of-the-art MLE-based methods for all datasets. Notably, \method{} even outperforms GRIN for BA-SIR and ER-SIR. In contrast, the performance of MLE-based methods vary largely due to their instability to estimation error of diffusion parameters. For instance, \method{} has only 1.07\% gap in F1 for BA-SIR, while MLE-based methods have at least 23.81\% gap in F1. Results demonstrate the superior performance of \method{} over state-of-the-art MLE-based methods.

\vspace{-0.4em}

\subsection{Additional Experiments}

\subsubsection{Stability against Estimation Error of Diffusion Parameters.}
To answer \RQref{stab} and demonstrate the stability of our barycenter formulation, we visualize the likelihoods of histories and the posterior expected hitting times under true and estimated diffusion parameters. Since the number of possible histories under the SIR model is roughly $\OP O\!\big({\binom{T+3}2}^{\!n}\big)$, it is intractable to compute them on large graphs. Thus, we use a graph with $n=6$ and $T=4$ so that likelihoods and posterior expected hitting times can be computed exactly. We visualize them under the SIR model with $\BM\beta=(0.3,0.2)\Tp$ and $\HAT{\BM\beta}=(0.2,0.3)\Tp$. Fig.~\ref{fig:vis-stab} displays the results. Fig.~\ref{fig:vis-lik} shows that the likelihoods of histories change drastically in the presence of parameter estimation error. In contrast, Fig.~\ref{fig:vis-hit} shows that posterior expected hitting times are almost identical under $\BM\beta$ and $\HAT{\BM\beta}$. Therefore, our barycenter formulation is more stable than the MLE formulation against estimation error of diffusion parameters, which agrees with our theoretical analyses in \THMref{diffus-mle-sens} and \THMref{hit-stable}.

\vspace{-0.2em}

\subsubsection{Scalability}
To answer \RQref{scal}, we evaluate the scalability of \method{} by varying $T$ and $n$. We generate BA graphs with attachment $4$ to obtain scale-free networks with various sizes. Fig.~\ref{fig:scal-T} shows running times under $n=1,000$ and $T=1,\dots,10$, and Fig.~\ref{fig:scal-n} shows running times under $T=10$ and $n$ up to 90k. Results demonstrate that the running times of \method{} scale near-linearly w.r.t.\ $T$ and $n$, which agrees with its time complexity $\OP O(T(n\log n+m))$.

\vspace{-0.2em}

\subsubsection{Effect of Timespan \& Ablation Study}
In Appendix~\ref{app:exp}, we answer \RQref{timespan} and \RQref{abla}. 
Appendix~\ref{app:exp-timespan} compares \method{} and MLE-based methods under various timespans. It demonstrates that \method{} can better handle the higher uncertainty induced by larger timespan than MLE-based methods. Appendix~\ref{app:exp-abla} is an ablation study on the effect of the number of training steps. It shows that the learned proposal performs better than untrained proposal.


\section{Related Work}


Diffusion on graphs are deterministic or stochastic processes where information or entities on nodes transmit through edges \cite{kermack1927contribution,granovetter1978threshold,kempe2003maximizing,aron1984seasonality,watts2002simple,ruan2015kinetics,lee2017universal,torok2017cascading}. 
In this section, we review related work on graph diffusion, which can be grouped into 
forward 
and inverse problems.

\vskip0.5em\noindent\textbf{Forward problems on graph diffusion.}
\label{sec:related-diffus}
The vast majority of research on diffusion or dynamic graphs \cite{DBLP:conf/sigir/FuH21,DBLP:conf/kdd/FuFMTH22} are devoted to forward problems. 
Pioneering works derive 
epidemic thresholds for random graphs from probabilistic perspectives \cite{bikhchandani1992theory} or for arbitrary graphs from spectral perspectives \cite{wang2003epidemic,ganesh2005effect,valler2011epidemic,prakash2012threshold}. Later, observational studies investigate influence patterns of diffusion processes \cite{goldenberg2001talk,briesemeister2003epidemic,leskovec2007dynamics,habiba2011working}. On the algorithmic side, researchers have made tremendous effort to diffusion-related optimization problems, such as influence maximization \cite{goldenberg2001talk,richardson2002mining,kempe2003maximizing,chen2009efficient,datta2010viral,chen2010scalable,chen2011influence,goyal2011data} and node immunization \cite{hayashi2004oscillatory,kempe2005influential,tong2010vulnerability,prakash2010virus,tuli2012blocking}. Recently, differential equations of graph diffusion has also been applied to the design of graph convolutional networks to alleviate oversmoothing \cite{klicpera2019diffusion,chamberlain2021grand,zhao2021adaptive,wang2021dissecting,chen2022optimization,thorpe2022grand++}. 

\vskip0.5em\noindent\textbf{Inverse problems on graph diffusion.}
\label{sec:related-inverse}
Compared with forward problems on graph diffusion, the inverse problems 
are in general more difficult 
due to the challenge of ill-posedness. 
The inverse problems split into two categories by whether diffusion histories are known. In the one category where diffusion histories are known, the problems are relatively more tractable because the search space is smaller. These problems include estimating diffusion parameters \cite{gruhl2004information,myers2010convexity,gomez2012inferring,zong2012inferring,gardner2014inferring,song2017history}, recovering graph topology \cite{gruhl2004information,myers2010convexity,gomez2012inferring,zong2012inferring,gardner2014inferring}, and inferring diffusion paths \cite{abrahao2013trace,fajardo2013inferring,rozenshtein2016reconstructing,song2017history,sun2017collaborative,dawkins2021diffusion}.

The other category where diffusion histories are unknown is much less studied. In this category, the problems are often harder because the number of possible histories is explosively large. Among them, most works focus on the \emph{diffusion source localization} problem. 
Only recently has research emerged on the even harder problem \emph{diffusion history reconstruction}. 
\emph{(1) Diffusion source localization.} The source localization problem aims to find the source nodes of diffusion. Early works focus on designing localization algorithms 
based on graph theory and network science \cite{lappas2010finding,shah2011rumors,shah2012rumor,farajtabar2015back,zhu2016locating,zhu2016information,zhu2017catch,feizi2018network,ri,lpsi}. These methods may not generalize well to various diffusion models. Later works propose data-driven methods that utilize graph neural networks to learn to identify sources from data \cite{gcnsi,ivgd,slvae}. 
\emph{(2) Diffusion history reconstruction.} Compared with source localization, diffusion history reconstruction is even harder because the search space of possible histories is larger. Existing methods for diffusion history reconstruction are exclusively based on the MLE formulation, including DHREC \cite{pcvc}, CRI \cite{cri}, and SSR \cite{ssr}. These methods 
assume that true diffusion parameters \cite{pcvc,ssr} and/or partial diffusion histories are known \cite{ssr}, or cannot reconstruct a complete diffusion history \cite{cri}. 
Meanwhile, diffusion history reconstruction can be alternatively formulated as a 
time series imputation problem. 
State-of-the-art methods include BRITS \cite{brits} for multivariate time series, and 
GRIN \cite{grin} and SPIN \cite{spin} for graph time series. They 
are all supervised and thus suffer from the scarcity of training data of real diffusion histories. Furthermore, since the true diffusion model is unknown for real-world diffusion, it is difficult to synthesize training data that follow the same distribution as the true diffusion model. Therefore, they have limited applicability 
in practice.



\section{Conclusion}
In this work, we have studied a challenging problem: reconstructing diffusion history from a single snapshot. 
To address the sensitivity of the MLE formulation,
we have proposed a barycenter formulation that is provably stable against the estimation error of diffusion parameters. 
We have further developed an effective solver named \method{} for the barycenter formulation, which is based on Metropolis--Hastings MCMC with a learned optimal proposal. Our method is unsupervised, which is desirable in practice due to the scarcity of training data. Extensive experiments have shown that \method{} consistently achieve strong performance for both synthetic and real-world diffusion.


\begin{acks}
This work was supported in part by NSF (1947135 and 
2134079), 
the NSF Program on Fairness in AI in collaboration with Amazon (1939725), 
DARPA (HR001121C0165), 
NIFA (2020-67021-32799), 
DHS (17STQAC00001-06-00), 
ARO (W911NF2110088), 
C3.ai Digital Transformation Institute, 
and IBM-Illinois Discovery Accelerator Institute. 
The work of Lei Ying was supported in part by NSF (2134081).
The content of the information in this document does not necessarily reflect the position or the policy of the Government or Amazon, and no official endorsement should be inferred. The U.S.\ Government is authorized to reproduce and distribute reprints for Government purposes notwithstanding any copyright notation here on.
\end{acks}

\bibliographystyle{ACM-Reference-Format}
\balance
\bibliography{main}

\clearpage
\appendix
\makeatletter
\newcommand{\LABEL}[2]{\protected@write\@auxout{}{\string\newlabel{#1}{{#2}{\thepage}{#2}{#1}{}}}\hypertarget{#1}{}}
\makeatother
\newcommand\LIST[1]{\begin{itemize}#1\end{itemize}}
\newcommand\AITEM[1]{\item[\ref{#1}]\ref{#1::name}\dotfill\pageref{#1}}
\newcommand\AOBJ[3]{\LABEL{#3::name}{#2}#1{#2}\label{#3}}
\newcommand\ASEC[2]{\AOBJ{\section}{#1}{#2}}
\newcommand\ASSEC[2]{\AOBJ{\subsection}{#1}{#2}}

\section*{Contents}
\LIST{
    \AITEM{app:sir}
    \AITEM{app:pf}
    \LIST{
        \AITEM{app:pf-1}
        \AITEM{app:pf-2}
        \AITEM{app:pf-3}
        \AITEM{app:pf-4}
        \AITEM{app:pf-5}
        \AITEM{app:pf-6}
    }
    \AITEM{app:gnn}
    \LIST{
        \AITEM{app:gnn-arch}
        \AITEM{app:gnn-samp}
    }
    \AITEM{app:setting}
    \LIST{
        \AITEM{app:exp-data}
        \AITEM{app:exp-bl}
        \AITEM{app:repro}
    }
    \AITEM{app:exp}
    \LIST{
        \AITEM{app:exp-timespan}
        \AITEM{app:exp-abla}
    }
    \AITEM{app:concl}
}


\ASEC{Preliminaries on the SIR Model}{app:sir}
In this section, we introduce the detailed definition of the SIR diffusion model.
According to the Markov property, the probability of a history $\BM Y$ can be factorized in the temporal order:
\AL{P_{\BM\beta}[\BM Y]=P[\BM y_0]\prod_{t=0}^{T-1}P_{\BM\beta}[\BM y_{t+1}\mid\BM y_t].}
The initial distribution $P[\BM y_0]$ is not defined in the SIR model, so it does not depend on diffusion parameters $\BM\beta$. For the transition probabilities $P_{\BM\beta}[\BM y_{t+1}\mid\BM y_t]$, they can be further factorized into the transition probability of each single node because every node is assumed to be independent with other nodes at the same time:
\AL{P_{\BM\beta}[\BM y_{t+1}\mid\BM y_t]=\prod_{u\in\CAL V}P_{\BM\beta}[y_{t+1,u}\mid\BM y_t].}
If a node $u$ is susceptible at time $t+1$, then it has to be susceptible at time $t$, and all its infected neighbors failed to infect it:
\AL{
P_{\BM\beta}[y_{t+1,u}=\SIRS\mid\BM y_t]:=\begin{cases}\prod\limits_{v\in\CAL N_u\wedge y_{t,v}=\SIRI}\!\!\!\!\!\!\!\!(1-\beta^\SIRI),&\text{if }y_{t,u}=\SIRS;\\
0,&\text{if }y_{t,u}=\SIRI\text{ or }\SIRR.\end{cases}
}
If a node $u$ is infected at time $t+1$, then either it is infected by its infected neighbors and does not recover immediately, or it is already infected and has not recovered yet:
\AL{
P_{\BM\beta}[y_{t+1,u}=\SIRI\mid\BM y_t]:=\begin{cases}
\Big(1-\!\!\!\!\!\!\!\!\prod\limits_{v\in\CAL N_u\wedge y_{t,v}=\SIRI}\!\!\!\!\!\!\!\!(1-\beta^\SIRI)\bigg)(1-\beta^\SIRR),&\text{if }y_{t,u}=\SIRS;\\
1-\beta^\SIRR,&\text{if }y_{t,u}=\SIRI;\\
0,&\text{if }y_{t,u}=\SIRR.\end{cases}
}
If a node $u$ is recovered at time $t+1$, then either it recovers just at time $t+1$, or it is already recovered previously:
\AL{
P_{\BM\beta}[y_{t+1,u}=\SIRR\mid\BM y_t]:=\begin{cases}
\bigg(1-\!\!\!\!\!\!\!\!\prod\limits_{v\in\CAL N_u\wedge y_{t,v}=\SIRI}\!\!\!\!\!\!\!\!(1-\beta^\SIRI)\bigg)\beta^\SIRR,&\text{if }y_{t,u}=\SIRS;\\
\beta^\SIRR,&\text{if }y_{t,u}=\SIRI;\\
1,&\text{if }y_{t,u}=\SIRR.\end{cases}
}

\ASEC{Proofs}{app:pf}

\ASSEC{Proof of \THMref{diffus-prob-nphard}}{app:pf-1}

The precise definition of approximating the probability of a snapshot is stated in \PRBref{diffus-prob}.

\begin{PRB}[Approximating the probability of a snapshot]\label{PRB:diffus-prob}
Under the SIR model, given a graph $(\CAL V,\CAL E)$, diffusion parameters $\BM\beta$, a timespan $T$, a snapshot $\BM y_T$, an initial distribution $P[\BM y_0]$, and a relative error tolerance $0<\epsilon<1$, find a number $p$ such that
\AL{(1-\epsilon)P_{\BM\beta}[\BM y_T]<p<(1+\epsilon)P_{\BM\beta}[\BM y_T].}
\end{PRB}

Now we prove \THMref{diffus-prob-nphard}.

\begin{proof}[Proof of \THMref{diffus-prob-nphard}]
By reduction from the Minimum Dominating Set (MDS) problem. Suppose that we are to find the minimum dominating set of a graph $(\CAL V,\CAL E)$, where $|\CAL V|=n$. We will construct an instance of \PRBref{diffus-prob} that can be utilized to solve the MDS problem.

The graph for \PRBref{diffus-prob} is the same graph $(\CAL V,\CAL E)$. We choose the diffusion parameters $\beta^\SIRI:=1$ and $\beta^\SIRR:=0$, and choose the timespan $T=1$. We consider the snapshot $\BM y_1$ to be $y_{1,u}:=\SIRI$ for all nodes $u\in\CAL V$. Pick a relative error tolerance $0<\epsilon<1$ arbitrarily. Initially, we define every node to be independently infected with probability $\frac1{1+\frac{1+\epsilon}{1-\epsilon}2^n}$:
\AL{P[\BM y_0]:=\bigg(\frac1{1+\frac{1+\epsilon}{1-\epsilon}2^n}\bigg)^{\!n^\SIRI(\BM y_0)}\bigg(1-\frac1{1+\frac{1+\epsilon}{1-\epsilon}2^n}\bigg)^{\!n-n^\SIRI(\BM y_0)}.}
Then run the oracle for \PRBref{diffus-prob} to get the output number $p$, which satisfies
\AL{(1-\epsilon)P_{\BM\beta}[\BM y_1]<p<(1+\epsilon)P_{\BM\beta}[\BM y_1].}
We claim that the minimum dominating set is of size $s$ iff the output $p$ satisfies
\AL{\frac{(1-\epsilon)\big(\frac{1+\epsilon}{1-\epsilon}2^n\big)^{\!n-s}}{\big(1+\frac{1+\epsilon}{1-\epsilon}2^n\big)^{\!n}}<p<\frac{(1-\epsilon)\big(\frac{1+\epsilon}{1-\epsilon}2^n\big)^{\!n-s+1}}{\big(1+\frac{1+\epsilon}{1-\epsilon}2^n\big)^{\!n}}.\label{eq:mds-itv}}
Since the intervals in Eq.~\eqref{eq:mds-itv} have no overlap for different $s$, then we can uniquely determine the minimum size $s$ from the output $p$.

To prove the claim, note that $\beta^\SIRI:=1$ implies that $P_{\BM\beta}[\BM y_1\mid\BM y_0]>0$ iff the infected nodes in $\BM y_0$ is a dominating set. Let $s$ denote the size of the minimum dominating set and $c_k$ denote the number of dominating sets of size $k$. Hence,
\AL{P_{\BM\beta}[\BM y_1]=\sum_{k=s}^nc_k\bigg(\frac1{1+\frac{1+\epsilon}{1-\epsilon}2^n}\bigg)^{\!k}\bigg(1-\frac1{1+\frac{1+\epsilon}{1-\epsilon}2^n}\bigg)^{\!n-k}.}
Since $s$ is the size of the minimum dominate set, then we have $c_s\ge1$. Thus,
\AL{P_{\BM\beta}[\BM y_1]&\ge c_s\bigg(\frac1{1+\frac{1+\epsilon}{1-\epsilon}2^n}\bigg)^{\!s}\bigg(1-\frac1{1+\frac{1+\epsilon}{1-\epsilon}2^n}\bigg)^{\!n-s}
\\&\ge\bigg(\frac1{1+\frac{1+\epsilon}{1-\epsilon}2^n}\bigg)^{\!s}\bigg(1-\frac1{1+\frac{1+\epsilon}{1-\epsilon}2^n}\bigg)^{\!n-s}.}
Hence,
\AL{
p&>(1-\epsilon)P_{\BM\beta}[\BM y_1]
\\&\ge(1-\epsilon)\bigg(\frac1{1+\frac{1+\epsilon}{1-\epsilon}2^n}\bigg)^{\!s}\bigg(1-\frac1{1+\frac{1+\epsilon}{1-\epsilon}2^n}\bigg)^{\!n-s}
\\&=\frac{(1-\epsilon)\big(\frac{1+\epsilon}{1-\epsilon}2^n\big)^{\!n-s}}{\big(1+\frac{1+\epsilon}{1-\epsilon}2^n\big)^{\!n}}
.\label{eq:mds-lb}
}
Furthermore, since $c_k\le\binom nk$ and $\Big(\frac1{1+\frac{1+\epsilon}{1-\epsilon}2^n}\Big)\Big(1-\frac1{1+\frac{1+\epsilon}{1-\epsilon}2^n}\Big)^{\!-1}=\frac1{\frac{1+\epsilon}{1-\epsilon}2^n}<1$, then:
\AL{P_{\BM\beta}[\BM y_1]&\le\sum_{k=s}^n\binom nk\bigg(\frac1{1+\frac{1+\epsilon}{1-\epsilon}2^n}\bigg)^{\!s}\bigg(1-\frac1{1+\frac{1+\epsilon}{1-\epsilon}2^n}\bigg)^{\!n-s}
\\&\le\bigg(\sum_{k=s}^n\binom nk\bigg)\bigg(\frac1{1+\frac{1+\epsilon}{1-\epsilon}2^n}\bigg)^{\!s}\bigg(1-\frac1{1+\frac{1+\epsilon}{1-\epsilon}2^n}\bigg)^{\!n-s}
\\&\le\bigg(\sum_{k=0}^n\binom nk\bigg)\bigg(\frac1{1+\frac{1+\epsilon}{1-\epsilon}2^n}\bigg)^{\!s}\bigg(1-\frac1{1+\frac{1+\epsilon}{1-\epsilon}2^n}\bigg)^{\!n-s}
\\&=2^n\bigg(\frac1{1+\frac{1+\epsilon}{1-\epsilon}2^n}\bigg)^{\!s}\bigg(1-\frac1{1+\frac{1+\epsilon}{1-\epsilon}2^n}\bigg)^{\!n-s}
.}
Hence,
\AL{
p&<(1+\epsilon)P_{\BM\beta}[\BM y_1]
\\&\le(1+\epsilon)\cdot2^n\bigg(\frac1{1+\frac{1+\epsilon}{1-\epsilon}2^n}\bigg)^{\!s}\bigg(1-\frac1{1+\frac{1+\epsilon}{1-\epsilon}2^n}\bigg)^{\!n-s}
\\&=\frac{(1-\epsilon)\big(\frac{1+\epsilon}{1-\epsilon}2^n\big)^{\!n-s+1}}{\big(1+\frac{1+\epsilon}{1-\epsilon}2^n\big)^{\!n}}
.\label{eq:mds-ub}}
Combining Eq.~\eqref{eq:mds-lb} and Eq.~\eqref{eq:mds-ub} yields our claim Eq.~\eqref{eq:mds-itv}.

The numbers involved can be stored in $\Poly\!\big(n,\log\frac1\epsilon\big)$ bits and be computed using high-precision arithmetics within $\Poly\!\big(n,\log\frac1\epsilon\big)$ time. Therefore, this gives a polynomial-time reduction from the MDS problem to \PRBref{diffus-prob}, so the NP-hardness of the MDS problem \cite{karp-npc} implies that \PRBref{diffus-prob} is NP-hard.
\end{proof}

\ASSEC{Proof of \THMref{diffus-par-mle-nphard}}{app:pf-2}
The precise definition of diffusion parameter MLE is stated in \PRBref{diffus-par-mle}.

\begin{PRB}[Diffusion parameter MLE]\label{PRB:diffus-par-mle}
Under the SIR model, given a graph $(\CAL V,\CAL E)$, a timespan $T$, a snapshot $\BM y_T$, an initial distribution $P[\BM y_0]$, and a relative error tolerance $0<\epsilon<1$, find $\HAT{\BM\beta}$ where
\AL{\exists\BM\beta\in\argmax_{\BM\beta}P_{\BM\beta}[\BM y_T]:(1-\epsilon)\BM\beta<\HAT{\BM\beta}<(1+\epsilon)\BM\beta.}
\end{PRB}

Before proving \THMref{diffus-par-mle-nphard}, we give a technical lemma.

\begin{LEM}\label{LEM:bern-ub}
For $r\ge1$, $c>0$, and $0\le x\le\frac c{r^2}$,
\AL{(1+x)^r\le1+\Big(r+\frac{\RM e^{c}\!-1}{2}\Big)x.}
In particular, for $r\ge1$ and $0\le x\le\frac1{r^2}$,
\AL{(1+x)^r\le1+\Big(r+\frac{\RM e-1}2\Big)x\le1+(r+1)x.}
\end{LEM}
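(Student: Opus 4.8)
The plan is to exploit the convexity of $x\mapsto(1+x)^{r}$ (valid since $r\ge1$) in order to reduce the claimed inequality, which must hold for all $x\in[0,c/r^2]$, to a single endpoint check. Write $\ell(x):=1+\big(r+\frac{\RM e^{c}-1}2\big)x$ for the affine right-hand side. Since $(1+x)^{r}$ is convex and $\ell$ is affine, and since both sides agree at $x=0$ (each equals $1$), it suffices to verify $(1+a)^{r}\le\ell(a)$ only at the right endpoint $a:=c/r^2$. Indeed, a convex function that lies below an affine function at both endpoints of an interval lies below it throughout: for $x\in[0,a]$ convexity gives $(1+x)^{r}\le\frac{a-x}{a}(1+0)^{r}+\frac{x}{a}(1+a)^{r}$, and replacing the two endpoint values by the (larger) values of $\ell$ and using that $\ell$ is affine yields $(1+x)^{r}\le\ell(x)$.

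At the endpoint I would substitute $s:=ra=c/r$, so that $s\in(0,c]$ (using $r\ge1$) and $a=s^{2}/c$. Bounding $(1+a)^{r}\le\RM e^{ra}=\RM e^{s}$ through the elementary $1+u\le\RM e^{u}$, the endpoint inequality $(1+a)^{r}\le1+ra+\frac{\RM e^{c}-1}2a$ follows once I establish the one-variable inequality
\[\RM e^{s}-1-s\le\frac{\RM e^{c}-1}{2c}\,s^{2}\quad\text{for }0<s\le c;\]
call this $(\star)$.

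To prove $(\star)$, note that $\frac{\RM e^{s}-1-s}{s^{2}}=\sum_{j\ge0}\frac{s^{j}}{(j+2)!}$ is manifestly nondecreasing in $s\ge0$, so its value on $(0,c]$ is at most its value at $s=c$, namely $\frac{\RM e^{c}-1-c}{c^{2}}$. It then remains to check the scalar inequality $\frac{\RM e^{c}-1-c}{c^{2}}\le\frac{\RM e^{c}-1}{2c}$, which rearranges to $(2-c)\RM e^{c}\le2+c$. This last inequality I would verify by setting $h(c):=(2-c)\RM e^{c}-(2+c)$ and observing $h(0)=0$, $h'(c)=(1-c)\RM e^{c}-1$ with $h'(0)=0$, and $h''(c)=-c\,\RM e^{c}\le0$; hence $h'$ is nonincreasing and so $h'\le0$, whence $h$ is nonincreasing and $h\le h(0)=0$.

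Finally, the ``in particular'' clause follows by taking $c=1$, and then $\frac{\RM e-1}2<1$ gives the cruder bound $1+(r+1)x$. The conceptual crux is the convexity reduction in the first step: it sidesteps the awkward second-order Taylor remainder $\frac{r(r-1)}2(1+\xi)^{r-2}x^{2}$, whose factor $(1+\xi)^{r-2}$ is hard to bound tightly enough by the crude estimates $r(r-1)x\le c$ and $(1+\xi)^{r-2}\le\RM e^{c}$. The genuinely delicate computation is the scalar inequality $(2-c)\RM e^{c}\le2+c$ underlying $(\star)$, which is exactly where the sharp constant $\frac{\RM e^{c}-1}2$ is pinned down.
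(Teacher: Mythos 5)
Your proof is correct and follows essentially the same route as the paper's: both reduce the claim to the right endpoint $x=c/r^2$ via convexity of $(1+x)^r$ (the paper phrases this as concavity of the difference), apply $1+u\le\RM e^u$ to get $(1+c/r^2)^r\le\RM e^{c/r}$, and are then left with the identical one-variable inequality --- your $(\star)$ is exactly the paper's $\phi(1/r)\ge0$ under the substitution $s=cz$. The only divergence is in how that final inequality is dispatched: you use monotonicity of the power series $\sum_{j\ge0}s^j/(j+2)!$ plus the scalar check $(2-c)\RM e^c\le2+c$, whereas the paper shows $\phi$ is increasing on $[0,1]$ via concavity of $\phi'$ together with $\phi'(0)=\phi'(1)=0$; both arguments are valid.
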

\begin{proof}
Define an auxiliary function:
\AL{\phi(z):=1+cz+\frac{\RM e^{c}-1}{2}cz^2-\RM e^{cz}.}
Its first order derivative $\phi'(z)=c+(\RM e^{c}-1)cz-c\RM e^{cz}$ is concave, so for every $0\le z\le1$,
\AL{\phi'\!(z)\ge\min\{\phi'\!(0),\,\phi'\!(1)\}=\min\{0,\,0\}=0.}
This suggests that $\phi(z)$ is increasing over $0\le z\le 1$. Therefore, since $1+\big(r+\frac{\RM e^{c}\!-1}{2}\big)x-(1+x)^r$ is concave w.r.t.\ $x\ge0$ for $r\ge1$, then for every $0\le x\le\frac c{r^2}$,
\AL{
&1+\Big(r+\frac{\RM e^{c}\!-1}{2}\Big)x-(1+x)^r
\\\ge{}&\min\!\Big\{1+\Big(r+\frac{\RM e^{c}\!-1}{2}\Big)0-(1+0)^r,\\
&\qquad\ 1+\Big(r+\frac{\RM e^{c}\!-1}{2}\Big)\frac c{r^2}-\Big(1+\frac c{r^2}\Big)^{\!r}\Big\}
\\={}&\min\!\Big\{0,\,1+\frac cr+\frac{\RM e^{c}-1}{2}\frac c{r^2}-\Big(1+\frac c{r^2}\Big)^{\!r}\Big\}
\\\ge{}&\min\!\Big\{0,\,1+\frac cr+\frac{\RM e^{c}-1}{2}\frac c{r^2}-(\RM e^{c/r^2})^{r}\Big\}
\\={}&\min\!\Big\{0,\,\phi\Big(\frac1r\Big)\Big\}\ge\min\{0,\,\phi(0)\}=\min\{0,0\}=0
.}
\end{proof}


Now we are ready to prove \THMref{diffus-par-mle-nphard}.


\begin{proof}[Proof of \THMref{diffus-par-mle-nphard}]
By reduction from the Minimum Dominating Set (MDS) problem. Suppose that we are to find the minimum dominating set of a graph $(\CAL V,\CAL E)$, where $|\CAL V|=n$ and $\CAL E$ contains no self-loops. If $|\CAL E|=0$, then the only dominating set is $\CAL V$. Thus, we can assume that $|\CAL E|\ge1$ from now on, which implies $n\ge2$ and that the size of the minimum dominating set is at most $n-1$. We will construct an instance of \PRBref{diffus-par-mle} that can be utilized to solve the MDS problem.

We create two auxiliary vertices $\CAL V':=\{a,b\}$ and create edges between them and all nodes in $\CAL V$, i.e., the graph for \PRBref{diffus-par-mle} is $(\CAL V\cup\CAL V',\,\CAL E\cup(\CAL V\times\CAL V'))$. We choose timespan $T=1$. We consider the snapshot $\BM y_1$ to be $y_{1,a}=\SIRS$, $y_{1,b}=\SIRR$, and $y_{1,u}=\SIRR$ for all nodes $u\in\CAL V$. We choose the relative error tolerance
\AL{\epsilon:=\min\left\{\frac1{16(n+1)n8^n-1},\,\frac{\frac1{\sqrt[n]2}-\frac1{\sqrt[n-1]2}\big(1+\frac1{n2^n}\big)^{\!\frac1{n-1}}}{1-\frac1{\sqrt[n-1]2}\big(1+\frac1{n2^n}\big)^{\!\frac1{n-1}}}\right\}.}
We define the initial distribution as
\AL{P[\BM y_0]\propto\frac1{n^\SIRI(\BM y_0)}\Big(\frac1{2n4^n}\Big)^{\!n^\SIRI(\BM y_0)}}
iff $y_{0,a}=y_{0,b}=\SIRS$, $\{u\in\CAL V:y_{0,u}=\SIRI\}$ is a dominating set of $(\CAL V,\CAL E)$, and $y_{0,u}=\SIRR$ for all other nodes; otherwise, $P[\BM y_0]:=0$. Then run the oracle for \PRBref{diffus-par-mle} to get diffusion parameter estimates $\HAT{\BM\beta}=[\HAT\beta^\SIRI,\HAT\beta^\SIRR]\Tp$. We claim that the minimum dominating set is of size $s$ iff the output $\HAT\beta^\SIRI$ satisfies
\AL{1-\frac1{\sqrt[s+1]2}<\HAT\beta^\SIRI<1-\frac1{\sqrt[s]2}.\label{eq:diffus-par-claim}}
Since the intervals in Eq.~\eqref{eq:diffus-par-claim} have no overlap for different $s$, then we can uniquely determine the minimum size $s$ from the output $\HAT{\BM\beta}$.

To prove the claim, note that initially infected nodes fail to infect $a$ but succeed in infecting $b$. Let $s$ denote the size of the minimum dominating set, and $c_k\le\binom nk$ denote the number of dominating sets of size $k$. Ignoring the normalizing constant of $P[\BM y_0]$, we have:
\AL{P_{\BM\beta}[\BM y_1]\propto\sum_{k=s}^nc_k\cdot\frac1k\Big(\frac1{2n4^n}\Big)^{\!k}(1-\beta^\SIRI)^k(1-(1-\beta^\SIRI)^k)(\beta^\SIRR)^{k+1}.}
Thus, $\HAT\beta^\SIRR=1$ is the maximizer for $P_{\BM\beta}[\BM y_1]$. Plugging this into $P_{\BM\beta}[\BM y_1]$ gives
\AL{P_{\BM\beta}[\BM y_1]\propto\sum_{k=s}^nc_k\cdot\frac1k\Big(\frac1{2n4^n}\Big)^{\!k}(1-\beta^\SIRI)^k(1-(1-\beta^\SIRI)^k).}
To simplify notation, we change the variable to $\alpha:=1-\beta^\SIRI$. Then, $P_{\BM\beta}[\BM y_1]\propto p(\alpha)$ where
\AL{p(\alpha):=\sum_{k=s}^nc_k\cdot\frac1k\Big(\frac1{2n4^n}\Big)^{\!k}\alpha^k(1-\alpha^k).}
By calculus, its first order derivative is
\AL{p'\!(\alpha)=\sum_{k=s}^nc_k\Big(\frac1{2n4^n}\Big)^{\!k}\alpha^{k-1}(1-2\alpha^k),}
and then its second order derivative is
\AL{p''\!(\alpha)=\sum_{k=s}^nc_k\Big(\frac1{2n4^n}\Big)^{\!k}\alpha^{k-2}(k-1-2(2k-1)\alpha^k).}
Adding a node to a dominating set always yields a dominating set, so we have $c_k\ge1$ for all $s\le k\le n$. Since $s\le n-1$, then for each $0\le\alpha\le1/\sqrt[s]2$,
\AL{
p'\!(\alpha)
\ge{}&\Big(\frac1{2n4^n}\Big)^{\!s}\alpha^{s-1}\Big(1-2\Big(\frac1{\sqrt[s]2}\Big)^{\!s}\Big)\nonumber\\&+\sum_{k=s+1}^n\Big(\frac1{2n4^n}\Big)^{\!k}\alpha^{k-1}\Big(1-2\Big(\frac1{\sqrt[s]2}\Big)^{\!k}\Big)
\\={}&0+\!\!\!\sum_{k=s+1}^n\!\!\!\Big(\frac1{2n4^n}\Big)^{\!k}\alpha^{k-1}\Big(1-\frac2{2^{k/s}}\Big)
>0
.\label{eq:p1a-positive}}
Let $\kappa_\alpha$ denote the minimum integer $k$ such that $k-1-2(2k-1)\alpha^k>0$. For each $1/\sqrt[s]2\le\alpha\le1$, note that $\kappa_\alpha\ge s+1$ and $s-1-2(2s-1)(1/\sqrt[s]2)^s=-s$, so we have:
\AL{
p''\!(\alpha)={}&\sum_{k=s}^{\kappa_\alpha-1}c_k\Big(\frac1{2n4^n}\Big)^{\!k}\alpha^{k-2}(k-1-2(2k-1)\alpha^k)\nonumber\\&+\sum_{k=\kappa_\alpha}^nc_k\Big(\frac1{2n4^n}\Big)^{\!k}\alpha^{k-2}(k-1-2(2k-1)\alpha^k)
\\\le{}&\Big(\frac1{2n4^n}\Big)^{\!s}\alpha^{s-2}\Big(s-1-2(2s-1)\Big(\frac1{\sqrt[s]2}\Big)^{\!s}\Big)\nonumber\\&+\sum_{k=\kappa_\alpha}^n\binom nk\Big(\frac1{2n4^n}\Big)^{\!s+1}\alpha^{k-2}n
\\={}&\alpha^{s-2}\Big(\frac1{2n4^n}\Big)^{\!s}\bigg({-s+\frac1{2\cdot4^{n}}\!\!\sum_{k=\kappa_\alpha}^n\binom nk\alpha^{k-s}}\bigg)
\\\le{}&\alpha^{s-2}\Big(\frac1{2n4^n}\Big)^{\!s}\bigg({-s+\frac1{2\cdot4^{n}}\!\!\sum_{k=\kappa_\alpha}^n\binom nk\cdot1^{k-s}}\bigg)
\\\le{}&\alpha^{s-2}\Big(\frac1{2n4^n}\Big)^{\!s}\Big({-s+\frac1{2\cdot4^{n}}\cdot2^n}\Big)
\\\le{}&\alpha^{s-2}\Big(\frac1{2n4^n}\Big)^{\!s}\Big({-1+\frac14}\Big)<0
.}
This implies $p'\!(\alpha)$ is strictly decreasing over $1/\sqrt[s]2\le\alpha\le1$. Combining with Eq.~\eqref{eq:p1a-positive}, we know that $p(\alpha)$ is strictly unimodal over $0\le\alpha\le1$. Furthermore, since $c_k\ge1$ for all $s\le k\le n$, then:
\AL{p'\!(1)=-\sum_{k=s}^nc_k\Big(\frac1{2n4^n}\Big)^{\!k}\le-\sum_{k=s}^n\Big(\frac1{2n4^n}\Big)^{\!k}<0.}
Hence, the minimizer $\beta^\SIRI$ is the unique solution to $p'\!(1-\beta^\SIRI)=0$ over $0<\beta^\SIRI<1-1/\sqrt[s]2$.

Next, we give tighter bounds for $\beta^\SIRI$ to prove the claim Eq.~\eqref{eq:diffus-par-claim}. Let
\AL{\alpha_+:=\frac1{\sqrt[s]2}\bigg(1+\frac{1-\frac1{\sqrt[s]2}}{8(s+2)n4^{n}\binom ns}\bigg)>\frac1{\sqrt[s]2}.}
Note that $1-2\alpha_+^k>0$ for all $k\ge s+1$, because:
\AL{
\alpha_+&<\frac1{\sqrt[s]2}\Big(1+\frac{1-0}{8(s+2)(s+1)4^{0}}\Big)
\\&<\frac1{\sqrt[s]2}\Big(1+\frac{\log2}{s(s+1)}\Big)\le\frac1{\sqrt[s]2}\exp\!\Big(\frac{\log2}{s(s+1)}\Big)=\frac1{\sqrt[s+1]2}
.}
Since $1\le s\le n-1$, and $1\le c_k\le\binom nk$ for $s\le k\le n$, then by \LEMref{bern-ub},
\AL{
p'\!(\alpha_+)\ge{}&{-\binom ns\Big(\frac1{2n4^n}\Big)^{\!s}\alpha_+^{s-1}(2\alpha_+^s-1)}\nonumber\\&+\Big(\frac1{2n4^n}\Big)^{\!s+1}\alpha_+^{s}(1-2\alpha_+^{s+1})
\\={}&\frac{\alpha_+^{s-1}}{(2n4^n)^s}\bigg({-\binom ns}(2\alpha_+^s-1)+\frac{\alpha_+(1-2\alpha_+^{s+1})}{2n4^n}\bigg)
\\={}&\frac{\alpha_+^{s-1}}{(2n4^n)^s}\bigg({-\binom ns}\bigg(\bigg(1+\frac{1-\frac1{\sqrt[s]2}}{8(s+2)n4^{n}\binom ns}\bigg)^{\!s}-1\bigg)\nonumber\\&+\frac{\alpha_+\Big(1-\frac1{\sqrt[s]2}\Big(1+\frac{1-\frac1{\sqrt[s]2}}{8(s+2)n4^{n}\binom ns}\Big)^{\!s+1}\Big)\Big)}{2n4^n}
\\\ge{}&\frac{\alpha_+^{s-1}}{(2n4^n)^s}\Bigg({-\binom ns}\bigg(\bigg(1+\frac{(s+1)\big(1-\frac1{\sqrt[s]2}\big)}{8(s+2)n4^{n}\binom ns}\bigg)-1\bigg)\nonumber\\&+\frac{\frac1{\sqrt[s]2}\Big(1-\frac1{\sqrt[s]2}\Big(1+\frac{(s+2)\big(1-\frac1{\sqrt[s]2}\big)}{8(s+2)n4^{n}\binom ns}\Big)\Big)}{2n4^n}\Bigg)
\\={}&\frac{\frac1{\sqrt[s]2}\big(1-\frac1{\sqrt[s]2}\big)\alpha_+^{s-1}}{(2n4^n)^{s+1}}\bigg(1-\frac{\sqrt[s]2(s+1)}{4(s+2)}-\frac{1}{8\sqrt[s]2n4^{n}\binom ns}\bigg)
\\>{}&\frac{\frac1{\sqrt[s]2}\big(1-\frac1{\sqrt[s]2}\big)\alpha_+^{s-1}}{(2n4^n)^{s+1}}\Big(1-\frac12-\frac18\Big)>0
.}
This implies the maximizer $\beta^\SIRI<1-\alpha_+$. Hence,
\AL{\HAT\beta^\SIRI&<(1+\epsilon)\beta^\SIRI<(1+\epsilon)(1-\alpha_+)
\\&\le\Big(1+\frac1{16(n+1)n8^n-1}\Big)\bigg(1-\frac1{\sqrt[s]2}\bigg(1+\frac{1-\frac1{\sqrt[s]2}}{8(s+2)n4^{n}\binom ns}\bigg)\bigg)
\\&\le\Big(1+\frac1{8\sqrt[s]2(s+2)n4^n\binom ns-1}\Big)\bigg(1-\frac1{\sqrt[s]2}\bigg(1+\frac{1-\frac1{\sqrt[s]2}}{8(s+2)n4^{n}\binom ns}\bigg)\bigg)
\\&=1-\frac1{\sqrt[s]2}
.\label{eq:mle-ub}}
For the lower bound, let
\AL{\alpha_-:=\frac1{\sqrt[s]2}\Big(1+\frac1{n2^n}\Big)^{\!\frac1s}>\frac1{\sqrt[s]2}.}
Since $s\le n-1$, we have:
\AL{\alpha_-\le{}&\frac1{\sqrt[s]2}\Big(1+\frac1{(s+1)2^{s+1}}\Big)^{\!\frac1s}
\\<{}&\frac1{\sqrt[s]2}\Big(1+\frac{\log2}{s+1}\Big)^{\!\frac1s}<\frac1{\sqrt[s]2}\big(\RM e^{\frac{\log2}{s+1}}\big)^{\!\frac1s}=\frac1{\sqrt[s+1]2}
.}
Besides that, note that
\AL{\alpha_-^s(2\alpha_-^s-1)=\frac1{2n2^n}\Big(1+\frac1{n2^n}\Big)>\frac1{2n2^n}.}
Since $1\le s\le n-1$, and $1\le c_k\le\binom nk$ for $s\le k\le n$, then:
\AL{p'\!(\alpha_-)\le{}&{-\Big(\frac1{2n4^n}\Big)^{\!s}}\alpha_-^{s-1}(2\alpha_-^s-1)\nonumber\\&
+\sum_{k=s+1}^n\binom nk\Big(\frac1{2n4^n}\Big)^{\!k}\alpha_-^{k-1}(1-2\alpha_-^k)
\\\le{}&{-\Big(\frac1{2n4^n}\Big)^{\!s}}\alpha_-^{s-1}(2\alpha_-^s-1)+\sum_{k=s+1}^n\binom nk\Big(\frac1{2n4^n}\Big)^{\!s+1}\alpha_-^{k-1}
\\={}&\frac{\alpha_-^{-1}}{(2n4^n)^s}\bigg({-\alpha_-^s}(2\alpha_-^s-1)+\frac1{2n4^n}\sum_{k=s+1}^n\binom nk\alpha_-^k\bigg)
\\\le{}&\frac{\alpha_-^{-1}}{(2n4^n)^s}\bigg({-\alpha_-^s}(2\alpha_-^s-1)+\frac1{2n4^n}\sum_{k=0}^n\binom nk\alpha_-^k\bigg)
\\={}&\frac{\alpha_-^{-1}}{(2n4^n)^s}\Big({-\alpha_-^s}(2\alpha_-^s-1)+\frac1{2n4^n}(1+\alpha_-)^n\Big)
\\<{}&\frac{\alpha_-^{-1}}{(2n4^n)^s}\Big({-\alpha_-^s}(2\alpha_-^s-1)+\frac1{2n4^n}2^n\Big)
\\<{}&\frac{\alpha_-^{-1}}{(2n4^n)^s}\Big({-\frac1{2n2^n}}+\frac1{2n4^n}2^n\Big)=0
.}
This implies the maximizer $\beta^\SIRI>1-\alpha_-$. Hence,
\AL{\HAT\beta^\SIRI&>(1-\epsilon)\beta^\SIRI>(1-\epsilon)(1-\alpha_-)
\\&\ge\left(1-\frac{\frac1{\sqrt[n]2}-\frac1{\sqrt[n-1]2}\big(1+\frac1{n2^n}\big)^{\!\frac1{n-1}}}{1-\frac1{\sqrt[n-1]2}\big(1+\frac1{n2^n}\big)^{\!\frac1{n-1}}}\right)\bigg(1-\frac1{\sqrt[s]2}\Big(1+\frac1{n2^n}\Big)^{\!\frac1s}\bigg)
\\&\ge\left(1-\frac{\frac1{\sqrt[s+1]2}-\frac1{\sqrt[s]2}\big(1+\frac1{n2^n}\big)^{\!\frac1{s}}}{1-\frac1{\sqrt[s]2}\big(1+\frac1{n2^n}\big)^{\!\frac1{s}}}\right)\bigg(1-\frac1{\sqrt[s]2}\Big(1+\frac1{n2^n}\Big)^{\!\frac1s}\bigg)
\\&=1-\frac1{\sqrt[s+1]2}
.\label{eq:mle-lb}}
Combining Eq.~\eqref{eq:mle-ub} and Eq.~\eqref{eq:mle-lb} yields our claim Eq.~\eqref{eq:diffus-par-claim}.

The numbers involved can be stored in $\Poly(n)$ bits and be computed using high-precision arithmetics within $\Poly(n)$ time. Therefore, this gives a polynomial-time reduction from the MDS problem to \PRBref{diffus-par-mle}, so the NP-hardness of the MDS problem \cite{karp-npc} implies that \PRBref{diffus-par-mle} is NP-hard.
\end{proof}

\ASSEC{Proof of \THMref{diffus-mle-sens}}{app:pf-3}
Before proving \THMref{diffus-mle-sens}, we give an auxiliary lemma.

\begin{LEM}\label{LEM:sir-hist-prob}
Under the SIR model, for each possible history $\BM Y\in\OP{supp}(P)$, there exists a number $\omega_{\BM Y}>0$ independent of $\BM\beta$ such that
\AL{
P_{\BM\beta}[\BM Y]=\omega_{\BM Y}(\beta^\SIRI)^{n^{\SIRI\SIRR}(\BM y_T)-n^{\SIRI\SIRR}(\BM y_0)}(\beta^\SIRR)^{n^\SIRR(\BM y_T)-n^\SIRR(\BM y_0)}(1+\OP O(\|\BM\beta\|)).}
\end{LEM}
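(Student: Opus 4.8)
The plan is to expand $P_{\BM\beta}[\BM Y]$ using the factorization from Appendix~\ref{app:sir} and to track the leading power of each diffusion parameter as $\BM\beta\searrow\BM0$. By the Markov property together with conditional independence of nodes, $P_{\BM\beta}[\BM Y]=P[\BM y_0]\prod_{t=0}^{T-1}\prod_{u\in\CAL V}P_{\BM\beta}[y_{t+1,u}\mid\BM y_t]$, where $P[\BM y_0]$ is independent of $\BM\beta$. First I would classify each per-node factor according to its transition $y_{t,u}\to y_{t+1,u}$ and read off the small-$\BM\beta$ behavior: the transitions $\SIRS\!\to\!\SIRS$, $\SIRI\!\to\!\SIRI$, and $\SIRR\!\to\!\SIRR$ each equal $1+\OP O(\|\BM\beta\|)$; an infection $\SIRS\!\to\!\SIRI$ contributes $d\,\beta^\SIRI(1+\OP O(\|\BM\beta\|))$, where $d\ge1$ is the number of infected neighbors, via the expansion $1-(1-\beta^\SIRI)^d=d\,\beta^\SIRI(1+\OP O(\beta^\SIRI))$; a recovery $\SIRI\!\to\!\SIRR$ contributes exactly $\beta^\SIRR$; and a simultaneous infection-and-recovery $\SIRS\!\to\!\SIRR$ contributes $d\,\beta^\SIRI\beta^\SIRR(1+\OP O(\|\BM\beta\|))$.

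The heart of the argument is a power-counting step that exploits the monotone structure of the SIR progression $\SIRS\to\SIRI\to\SIRR$. Since every node leaves the susceptible state at most once, the total exponent of $\beta^\SIRI$ equals the number of nodes that perform an $\SIRS\!\to\!\SIRI$ or $\SIRS\!\to\!\SIRR$ transition at some time. Because the set $\{u:y_{t,u}\in\{\SIRI,\SIRR\}\}$ is nondecreasing in $t$, this count is exactly $n^{\SIRI\SIRR}(\BM y_T)-n^{\SIRI\SIRR}(\BM y_0)$. Likewise, every node enters the recovered state at most once, so the total exponent of $\beta^\SIRR$ equals the number of $\SIRI\!\to\!\SIRR$ or $\SIRS\!\to\!\SIRR$ transitions, which by monotonicity of $\{u:y_{t,u}=\SIRR\}$ equals $n^\SIRR(\BM y_T)-n^\SIRR(\BM y_0)$.

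Finally I would collect the $\BM\beta$-independent leading coefficients. Setting $\omega_{\BM Y}$ to be the product of $P[\BM y_0]$ with the integer factors $d\ge1$ arising from infection transitions (and $1$ from every other factor) yields a quantity that is strictly positive, since feasibility of $\BM Y\in\OP{supp}(P)$ forces $d\ge1$ at each infection (an empty product of infected neighbors would give zero probability), and that is manifestly independent of $\BM\beta$. As only finitely many factors appear (at most $Tn$), the product of all their $(1+\OP O(\|\BM\beta\|))$ remainders is again $1+\OP O(\|\BM\beta\|)$, giving the claimed form. The main obstacle is the bookkeeping in the power-counting step: one must argue carefully that the two boundary snapshots $\BM y_0,\BM y_T$ alone determine both exponents regardless of the intermediate dynamics, which holds precisely because SIR states never move backward, so each $\SIRS$-exit and each $\SIRR$-entry occurs at most once per node.
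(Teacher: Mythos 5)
Your proposal is correct and follows essentially the same route as the paper's proof: factorize over nodes and times, expand each transition probability to leading order in $\BM\beta$ (with $1-(1-\beta^\SIRI)^d=d\,\beta^\SIRI(1+\OP O(\beta^\SIRI))$), identify the exponents with $n^{\SIRI\SIRR}(\BM y_T)-n^{\SIRI\SIRR}(\BM y_0)$ and $n^\SIRR(\BM y_T)-n^\SIRR(\BM y_0)$ via the monotone SIR progression, and take $\omega_{\BM Y}$ to be $P[\BM y_0]$ times the product of the (nonzero) infected-neighbor counts. The paper organizes the bookkeeping by partitioning nodes into six classes by their initial and final states rather than by classifying individual transitions, but this is a cosmetic difference.
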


\begin{proof}
Fix the history $\BM Y$, and we will omit $\BM Y$ in some notations.

Under the SIR model, we have the factorization:
\AL{
P_{\BM\beta}[\BM Y]&=P[\BM y_0]\prod_{t=1}^T\prod_{u\in\CAL V}P_{\BM\beta}[y_{t,u}\mid\BM y_{t-1}]\\&=P[\BM y_0]\prod_{u\in\CAL V}\bigg(\prod_{t=1}^TP_{\BM\beta}[y_{t,u}\mid\BM y_{t-1}]\bigg).
}
Let $\CAL U_{x_0}^{x_1}$ denote the set of nodes with initial state $x_0$ and final state $x_1$:
\AL{\CAL U_{x_0}^{x_1}:=\{u\in\CAL V : y_{0,u}=x_0,\,y_{T,u}=x_1\}.\label{eq:diffus-prob-decomp}}
Then, the node set $\CAL V$ can be decomposed disjointly into
\AL{\CAL V=\CAL U_{\SIRS}^{\SIRS}\cup\CAL U_\SIRS^\SIRI\cup\CAL U_\SIRS^\SIRR\cup\CAL U_\SIRI^\SIRI\cup\CAL U_\SIRI^\SIRR\cup\CAL U_\SIRR^\SIRR.}
Besides that, for each node $u\in\CAL U_\SIRS^\SIRI\cup\CAL U_\SIRS^\SIRR$, let $\CAL I_u$ denote the set of neighbors that may have infected $u$:
\AL{\CAL I_u:=\{v\in\CAL N_u:y_{h_u^\SIRI-1,u}=\SIRI\}\ne\varnothing.}

Now we calculate $\prod_{t=1}^{T}P_{\BM\beta}[y_{t,u}\mid\BM y_{t-1}]$ according to the decomposition Eq.~\eqref{eq:diffus-prob-decomp}. For each node $u\in\CAL U_\SIRS^\SIRS$, it is never infected, so:
\AL{
\prod_{t=1}^{T}P_{\BM\beta}[y_{t,u}\mid\BM y_{t-1}]&=\prod_{t=1}^{T}\prod_{v\in\CAL N_u\wedge y_{t-1,v}=\SIRI}\!\!\!\!\!\!\!\!(1-\beta^\SIRI)
\\&=\prod_{t=1}^{T}(1+\OP O(\|\BM\beta\|))=1+\OP O(\|\BM\beta\|)
.}
For each node $u\in\CAL U_\SIRS^\SIRI$, it is infected but never recovered, so:
\AL{
&\prod_{t=1}^{T}P_{\BM\beta}[y_{t,u}\mid\BM y_{t-1}]\nonumber
\\={}&\bigg(\prod_{t=1}^{h_u^\SIRI-1}\!\!\prod_{v\in\CAL N_u\wedge y_{t-1,v}=\SIRI}\!\!\!\!\!\!\!\!\!\!(1-\beta^\SIRI)\bigg)\bigg(1-\!\!\!\!\!\!\!\!\!\prod_{v\in\CAL N_u\wedge y_{h_u^\SIRI-1,v}=\SIRI}\!\!\!\!\!\!\!\!\!\!\!\!(1-\beta^\SIRI)\bigg)\bigg(\!\prod_{t=h_u^\SIRI}^T(1-\beta^\SIRR)\bigg)
\\={}&\bigg(\prod_{t=1}^{h_u^\SIRI-1}\!\!\prod_{v\in\CAL N_u\wedge y_{t-1,v}=\SIRI}\!\!\!\!\!\!\!\!\!\!(1-\beta^\SIRI)\bigg)\big(1-(1-\beta^\SIRI)^{|\CAL I_u|}\big)\bigg(\!\prod_{t=h_u^\SIRI}^T(1-\beta^\SIRR)\bigg)
\\={}&(1+\OP O(\|\BM\beta\|))\cdot(|\CAL I_u|\beta^\SIRI(1+\OP O(\|\BM\beta\|)))\cdot(1+\OP O(\|\BM\beta\|))
\\={}&|\CAL I_u|\beta^\SIRI(1+\OP O(\|\BM\beta\|))
.}
For each node $u\in\CAL U_\SIRS^\SIRR$, it is infected and recovered, so:
\AL{
&\prod_{t=1}^{T}P_{\BM\beta}[y_{t,u}\mid\BM y_{t-1}]\nonumber
\\={}&\bigg(\prod_{t=1}^{h_u^\SIRI-1}\!\!\prod_{v\in\CAL N_u\wedge y_{t-1,v}=\SIRI}\!\!\!\!\!\!\!\!\!\!(1-\beta^\SIRI)\bigg)\bigg(1-\!\!\!\!\!\!\!\!\!\prod_{v\in\CAL N_u\wedge y_{h_u^\SIRI-1,v}=\SIRI}\!\!\!\!\!\!\!\!\!\!\!\!(1-\beta^\SIRI)\bigg)\bigg(\!\prod_{t=h_u^\SIRI}^{h_u^\SIRR}(1-\beta^\SIRR)\bigg)\beta^\SIRR
\\={}&(1+\OP O(\|\BM\beta\|))\cdot(|\CAL I_u|\beta^\SIRI(1+\OP O(\|\BM\beta\|)))\cdot(1+\OP O(\|\BM\beta\|))\cdot\beta^\SIRR
\\={}&|\CAL I_u|\beta^\SIRI\beta^\SIRR(1+\OP O(\|\BM\beta\|))
.}
For each node $u\in\CAL U_\SIRI^\SIRI$, it is never recovered, so:
\AL{
\prod_{t=1}^{T}P_{\BM\beta}[y_{t,u}\mid\BM y_{t-1}]=\prod_{t=1}^T(1-\beta^\SIRR)=(1+\OP O(\|\BM\beta\|))
.}
For each node $u\in\CAL U_\SIRI^\SIRR$, it is eventually recovered, so:
\AL{
&\prod_{t=1}^{T}P_{\BM\beta}[y_{t,u}\mid\BM y_{t-1}]=\bigg(\!\prod_{t=1}^{h_u^\SIRR}(1-\beta^\SIRR)\bigg)\beta^\SIRR
\\={}&(1+\OP O(\|\BM\beta\|))\cdot\beta^\SIRR=\beta^\SIRR(1+\OP O(\|\BM\beta\|))
.}
For each node $u\in\CAL U_\SIRR^\SIRR$, its state does not change, so:
\AL{
\prod_{t=1}^{T}P_{\BM\beta}[y_{t,u}\mid\BM y_{t-1}]=\prod_{t=1}^{T}1=1
.}
Finally, note that
\AL{|\CAL U_\SIRS^\SIRI\cup\CAL U_\SIRS^\SIRR|&=n^{\SIRI\SIRR}(\BM y_T)-n^{\SIRI\SIRR}(\BM y_0),\\
|\CAL U_\SIRS^\SIRR\cup\CAL U_\SIRI^\SIRR|&=n^{\SIRR}(\BM y_T)-n^{\SIRR}(\BM y_0).}
Therefore,
\AL{
&P_{\BM\beta}[\BM Y]=P[\BM y_0]\prod_{u\in\CAL V}\bigg(\prod_{t=1}^TP_{\BM\beta}[y_{t,u}\mid\BM y_{t-1}]\bigg)
\\={}&P[\BM y_0]\prod_{u\in\CAL U_{\SIRS}^{\SIRS}\cup\CAL U_\SIRS^\SIRI\cup\CAL U_\SIRS^\SIRR\cup\CAL U_\SIRI^\SIRI\cup\CAL U_\SIRI^\SIRR\cup\CAL U_\SIRR^\SIRR}\bigg(\prod_{t=1}^TP_{\BM\beta}[y_{t,u}\mid\BM y_{t-1}]\bigg)
\\={}&P[\BM y_0]\bigg(\prod_{u\in\CAL U_{\SIRS}^{\SIRS}}\prod_{t=1}^TP_{\BM\beta}[y_{t,u}\mid\BM y_{t-1}]\bigg)\bigg(\prod_{u\in\CAL U_{\SIRS}^{\SIRI}}\prod_{t=1}^TP_{\BM\beta}[y_{t,u}\mid\BM y_{t-1}]\bigg)\nonumber\\&\qquad\;\bigg(\prod_{u\in\CAL U_{\SIRS}^{\SIRR}}\prod_{t=1}^TP_{\BM\beta}[y_{t,u}\mid\BM y_{t-1}]\bigg)\bigg(\prod_{u\in\CAL U_{\SIRI}^{\SIRI}}\prod_{t=1}^TP_{\BM\beta}[y_{t,u}\mid\BM y_{t-1}]\bigg)\nonumber\\&\qquad\;\bigg(\prod_{u\in\CAL U_{\SIRI}^{\SIRR}}\prod_{t=1}^TP_{\BM\beta}[y_{t,u}\mid\BM y_{t-1}]\bigg)\bigg(\prod_{u\in\CAL U_{\SIRR}^{\SIRR}}\prod_{t=1}^TP_{\BM\beta}[y_{t,u}\mid\BM y_{t-1}]\bigg)
\\={}&P[\BM y_0]\bigg(\prod_{u\in\CAL U_{\SIRS}^{\SIRS}}(1+\OP O(\|\BM\beta\|))\bigg)\bigg(\prod_{u\in\CAL U_{\SIRS}^{\SIRI}}(|\CAL I_u|\beta^\SIRI(1+\OP O(\|\BM\beta\|)))\bigg)\nonumber\\&\qquad\;\bigg(\prod_{u\in\CAL U_{\SIRS}^{\SIRR}}(|\CAL I_u|\beta^\SIRI\beta^\SIRR(1+\OP O(\|\BM\beta\|)))\bigg)\bigg(\prod_{u\in\CAL U_{\SIRI}^{\SIRI}}(1+\OP O(\|\BM\beta\|))\bigg)\nonumber\\&\qquad\;\bigg(\prod_{u\in\CAL U_{\SIRI}^{\SIRR}}(\beta^\SIRR(1+\OP O(\|\BM\beta\|)))\bigg)\bigg(\prod_{u\in\CAL U_{\SIRR}^{\SIRR}}\prod_{t=1}^T1\bigg)
\\={}&\bigg(P[\BM y_0]\!\!\!\!\prod_{u\in\CAL U_\SIRS^\SIRI\cup\CAL U_\SIRS^\SIRR}\!\!\!\!|\CAL I_u|\bigg)(\beta^\SIRI)^{|\CAL U_\SIRS^\SIRI\cup\CAL U_\SIRS^\SIRR|}(\beta^\SIRR)^{|\CAL U_\SIRS^\SIRR\cup\CAL U_\SIRI^\SIRR|}(1+\OP O(\|\BM\beta\|))
\\={}&\omega_{\BM Y}(\beta^\SIRI)^{n^{\SIRI\SIRR}(\BM y_T)-n^{\SIRI\SIRR}(\BM y_0)}(\beta^\SIRR)^{n^\SIRR(\BM y_T)-n^\SIRR(\BM y_0)}(1+\OP O(\|\BM\beta\|))
,}
where $\omega_{\BM Y}:=P[\BM y_0]\prod_{u\in\CAL U_\SIRS^\SIRI\cup\CAL U_\SIRS^\SIRR}|\CAL I_u| > 0$ because $\CAL I_u\ne\varnothing$.
\end{proof}

Now we are ready to prove \THMref{diffus-mle-sens}.

\begin{proof}[Proof of \THMref{diffus-mle-sens}]
By \LEMref{sir-hist-prob},
\AL{P_{\BM\beta}[\BM Y]=\omega_{\BM Y}(\beta^\SIRI)^{n^{\SIRI\SIRR}(\BM y_T)-n^{\SIRI\SIRR}(\BM y_0)}(\beta^\SIRR)^{n^\SIRR(\BM y_T)-n^\SIRR(\BM y_0)}(1+\OP O(\|\BM\beta\|)).}
Thus,
\AL{\log P_{\BM\beta}[\BM Y]={}&\log\omega_{\BM Y}+(n^{\SIRI\SIRR}(\BM y_T)-n^{\SIRI\SIRR}(\BM y_0))\log\beta^\SIRI\\&+(n^\SIRR(\BM y_T)-n^\SIRR(\BM y_0))\log\beta^\SIRR+\log(1+\OP O(\|\BM\beta\|))
\\={}&\log\omega_{\BM Y}+(n^{\SIRI\SIRR}(\BM y_T)-n^{\SIRI\SIRR}(\BM y_0))\log\beta^\SIRI\\&+(n^\SIRR(\BM y_T)-n^\SIRR(\BM y_0))\log\beta^\SIRR+\OP O(\|\BM\beta\|)
.}
Note that $P_{\BM\beta}[\BM Y]$ is a polynomial of $\BM\beta$, so it is differentiable w.r.t.\ $\BM\beta$. Hence,
\AL{
\frac{\partial}{\partial\beta^\SIRI}\log P_{\BM\beta}[\BM Y]&=\frac{n^{\SIRI\SIRR}(\BM y_T)-n^{\SIRI\SIRR}(\BM y_0)}{\beta^\SIRI}+\OP O(1),\\
\frac{\partial}{\partial\beta^\SIRR}\log P_{\BM\beta}[\BM Y]&=\frac{n^\SIRR(\BM y_T)-n^\SIRR(\BM y_0)}{\beta^\SIRR}+\OP O(1).}
It follows that
\AL{\frac{\partial}{\partial\beta^\SIRI}P_{\BM\beta}[\BM Y]&=\Big(\frac{\partial}{\partial\beta^\SIRI}\log P_{\BM\beta}[\BM Y]\Big)P_{\BM\beta}[\BM Y]\\
&=\Big(\frac{n^{\SIRI\SIRR}(\BM y_T)-n^{\SIRI\SIRR}(\BM y_0)}{\beta^\SIRI}+\OP O(1)\Big)P_{\BM\beta}[\BM Y],\\
\frac{\partial}{\partial\beta^\SIRR}P_{\BM\beta}[\BM Y]&=\Big(\frac{\partial}{\partial\beta^\SIRR}\log P_{\BM\beta}[\BM Y]\Big)P_{\BM\beta}[\BM Y]\\
&=\Big(\frac{n^{\SIRR}(\BM y_T)-n^{\SIRR}(\BM y_0)}{\beta^\SIRR}+\OP O(1)\Big)P_{\BM\beta}[\BM Y],}
Therefore, if $n^{\SIRI\SIRR}(\BM y_T)>n^{\SIRI\SIRR}(\BM y_0)$, then:
\AL{\frac{\partial}{\partial\beta^\SIRI}P_{\BM\beta}[\BM Y]=\Theta\Big(\frac1{\beta^\SIRI}\Big)P_{\BM\beta}[\BM Y];}
if $n^\SIRR(\BM y_T)>n^\SIRR(\BM y_0)$, then:
\AL{\frac{\partial}{\partial\beta^\SIRR}P_{\BM\beta}[\BM Y]=\Theta\Big(\frac1{\beta^\SIRR}\Big)P_{\BM\beta}[\BM Y].}
\end{proof}

\ASSEC{Proof of \THMref{hit-stable}}{app:pf-4}


\begin{proof}
Since $n^\SIRI(\BM y_0)$ and $n^\SIRR(\BM y_0)$ are fixed a.s., we write $n^{\SIRI\SIRR}_0:=n^{\SIRI\SIRR}(\BM y_0)$ and $n^\SIRR_0:=n^\SIRR(\BM y_0)$ are constants. Fix a snapshot $\BM y_T$, then $n^{\SIRI\SIRR}_T:=n^{\SIRI\SIRR}(\BM y_T)$ and $n^{\SIRR}_T:=n^{\SIRR}(\BM y_T)$ are also fixed. Then by \LEMref{sir-hist-prob}, for any history $\BM Y\in\OP{supp}(P\mathbin|\BM y_T)$,
\AL{P_{\BM\beta}[\BM Y]=\omega_{\BM Y}(\beta^\SIRI)^{n^{\SIRI\SIRR}_T-n^{\SIRI\SIRR}_0}(\beta^\SIRR)^{n^{\SIRR}_T-n^{\SIRR}_0}(1+\OP O(\|\BM\beta\|)).}
Thus, the probability of the snapshot $\BM y_T$ is
\AL{P_{\BM\beta}[\BM y_T]&=\!\!\!\!\!\!\!\!\sum_{\BM Y\in\OP{supp}(P\mid\BM y_T)}\!\!\!\!\!\!\!\!P_{\BM\beta}[\BM Y]
\\&=\!\!\!\!\!\!\!\!\sum_{\BM Y\in\OP{supp}(P\mid\BM y_T)}\!\!\!\!\!\!\!\!\omega_{\BM Y}(\beta^\SIRI)^{n^{\SIRI\SIRR}_T-n^{\SIRI\SIRR}_0}(\beta^\SIRR)^{n^{\SIRR}_T-n^{\SIRR}_0}(1+\OP O(\|\BM\beta\|))
.}
Fix a node $u\in\CAL V$. Categorize the possible histories according to the hitting times $0\le t\le T+1$ of the node $u$:
\AL{
\CAL Y^\SIRI_t&:=\{\BM Y\in\OP{supp}(P\mathbin|\BM y_T):h_u^\SIRI(\BM Y)=t\},\\
\CAL Y^\SIRR_t&:=\{\BM Y\in\OP{supp}(P\mathbin|\BM y_T):h_u^\SIRR(\BM Y)=t\}.}
Let
\AL{
\omega_t^\SIRI:=\sum_{\BM Y\in\CAL Y^\SIRI_t}\omega_{\BM Y},\quad
\omega_t^\SIRR:=\sum_{\BM Y\in\CAL Y^\SIRR_t}\omega_{\BM Y},\quad\omega:=\!\!\!\!\!\!\!\!\sum_{\BM Y\in\OP{supp}(P\mid\BM y_T)}\!\!\!\!\!\!\!\!\omega_{\BM Y}.
}
Then by cancellation, the expected hitting time to state $\SIRI$ is
\AL{
&\Exp_{\BM Y\sim P_{\BM\beta}\mid\BM y_T}[h_u^\SIRI(\BM Y)]
\\={}&\sum_{t=0}^{T+1}t\sum_{\BM Y\in\CAL Y^\SIRI_t}P_{\BM\beta}[\BM Y\mathbin|\BM y_T]
\\={}&\frac{\sum\limits_{t=0}^{T+1}t\sum\limits_{\BM Y\in\CAL Y^\SIRI_t}P_{\BM\beta}[\BM Y]}{P_{\BM\beta}[\BM y_T]}
\\={}&\frac{\sum\limits_{t=0}^{T+1}t\sum\limits_{\BM Y\in\CAL Y^\SIRI_t}\omega_{\BM Y}(\beta^\SIRI)^{n^{\SIRI\SIRR}_T-n^{\SIRI\SIRR}_0}(\beta^\SIRR)^{n^{\SIRR}_T-n^{\SIRR}_0}(1+\OP O(\|\BM\beta\|))}{\sum\limits_{\BM Y\in\OP{supp}(P\mid\BM y_T)}\!\!\!\!\!\!\!\!\omega_{\BM Y}(\beta^\SIRI)^{n^{\SIRI\SIRR}_T-n^{\SIRI\SIRR}_0}(\beta^\SIRR)^{n^{\SIRR}_T-n^{\SIRR}_0}(1+\OP O(\|\BM\beta\|))}
\\={}&\frac{\sum\limits_{t=0}^{T+1}t\sum\limits_{\BM Y\in\CAL Y^\SIRI_t}\omega_{\BM Y}(1+\OP O(\|\BM\beta\|))}{\sum\limits_{\BM Y\in\OP{supp}(P\mid\BM y_T)}\!\!\!\!\!\!\!\!\omega_{\BM Y}(1+\OP O(\|\BM\beta\|))}
\\={}&\frac{\sum\limits_{t=0}^{T+1}t\omega_t^\SIRI+\OP O(\|\BM\beta\|)}{\omega+\OP O(\|\BM\beta\|)}
\\={}&\bigg(\sum_{t=0}^{T+1}t\omega_t^\SIRI+\OP O(\|\BM\beta\|)\bigg)\Big(\frac1\omega+\OP O(\|\BM\beta\|)\Big)
\\={}&\frac1\omega\sum_{t=0}^{T+1}t\omega_t^\SIRI+\OP O(\|\BM\beta\|)
.}
Similarly, the expected hitting time to state $\SIRR$ is
\AL{\Exp_{\BM Y\sim P_{\BM\beta}\mid\BM y_T}[h_u^\SIRR(\BM Y)]=\frac1\omega\sum_{t=0}^{T+1}t\omega_t^\SIRR+\OP O(\|\BM\beta\|).}
Therefore,
\AL{\nabla_{\!\BM\beta}\!\Exp_{\BM Y\sim P_{\BM\beta}\mid\BM y_T}\!\!\!\!\!\![h_u^\SIRI(\BM Y)]&=\nabla_{\!\BM\beta}\bigg(\frac1\omega\sum_{t=0}^{T+1}t\omega_t^\SIRI+\OP O(\|\BM\beta\|)\bigg)=\OP O(1),\\
\nabla_{\!\BM\beta}\!\Exp_{\BM Y\sim P_{\BM\beta}\mid\BM y_T}\!\!\!\!\!\![h_u^\SIRR(\BM Y)]&=\nabla_{\!\BM\beta}\bigg(\frac1\omega\sum_{t=0}^{T+1}t\omega_t^\SIRR+\OP O(\|\BM\beta\|)\bigg)=\OP O(1).}
\end{proof}

\ASSEC{Proof of \THMref{q-equiv-obj}}{app:pf-5}
\begin{proof}
The sufficient expressiveness of $Q_{\BM\theta}$ implies that there exists a parameter sequence $\{\BM\theta_k\}_{k\ge1}$ such that
\AL{\lim_{k\to+\infty}Q_{\BM\theta_k}(\BM y_T)[\BM Y]=P_{\HAT{\BM\beta}}[\BM Y\mid\BM y_T],\qquad\forall\BM y_T.}
Since $Q_{\BM\theta}(\BM y_T)$ and $P_{\HAT{\BM\beta}}\mathbin|\BM y_T$ share a common finite support, then $\max_{\BM Y\in\OP{supp}(P_{\HAT{\BM\beta}})}\big|{\log Q_{\BM\theta}(\BM y_T)[\BM Y]-\log P_{\HAT{\BM\beta}}[\BM Y\mid\BM y_T]}\big|<\infty$. It follows from the dominated convergence theorem that
\AL{0&\le\min_{\BM\theta}\Exp_{\BM Y\sim P_{\HAT{\BM\beta}}}[(\log Q_{\BM\theta}(\BM y_T)[\BM Y]-\log P_{\HAT{\BM\beta}}[\BM Y\mid\BM y_T])^2]
\\&\le\lim_{k\to+\infty}\Exp_{\BM Y\sim P_{\HAT{\BM\beta}}}[(\log Q_{\BM\theta_k}(\BM y_T)[\BM Y]-\log P_{\HAT{\BM\beta}}[\BM Y\mid\BM y_T])^2]
\\&=\Exp_{\BM Y\sim P_{\HAT{\BM\beta}}}\Big[\lim_{k\to+\infty}(\log Q_{\BM\theta_k}(\BM y_T)[\BM Y]-\log P_{\HAT{\BM\beta}}[\BM Y\mid\BM y_T])^2\Big]
\\&=\Exp_{\BM Y\sim P_{\HAT{\BM\beta}}}\Big[\Big(\log\lim_{k\to+\infty}Q_{\BM\theta_k}(\BM y_T)[\BM Y]-\log P_{\HAT{\BM\beta}}[\BM Y\mid\BM y_T]\Big)^{\!2}\Big]
\\&=\Exp_{\BM Y\sim P_{\HAT{\BM\beta}}}[(\log P_{\HAT{\BM\beta}}[\BM Y\mid\BM y_T]-\log P_{\HAT{\BM\beta}}[\BM Y\mid\BM y_T])^2]
\\&=\Exp_{\BM Y\sim P_{\HAT{\BM\beta}}}[0^2]=0
.}
This implies
\AL{
&\lim_{k\to+\infty}\Exp_{\BM Y\sim P_{\HAT{\BM\beta}}}[(\log Q_{\BM\theta_k}(\BM y_T)[\BM Y]-\log P_{\HAT{\BM\beta}}[\BM Y\mid\BM y_T])^2]\\
={}&\min_{\BM\theta}\Exp_{\BM Y\sim P_{\HAT{\BM\beta}}}[(\log Q_{\BM\theta}(\BM y_T)[\BM Y]-\log P_{\HAT{\BM\beta}}[\BM Y\mid\BM y_T])^2]=0.}
Hence, $\{\BM\theta_k\}_{k\ge1}$ is asymptotically optimal for the original objective Eq.~\eqref{eq:q-orig-obj}. Meanwhile, for any other parameter sequence $\{\TLD{\BM\theta}_k\}$ where $Q_{\TLD{\BM\theta}_k}(\BM y_T)$ do not converge to $P_{\HAT{\BM\beta}}\mathbin|\BM y_T$, then they have nonzero objectives and are thus non-optimal. Therefore, any asymptotically optimal parameter sequence $\{\BM\theta_k\}_{k\ge1}$ for the objective Eq.~\eqref{eq:q-orig-obj} must converge to $P_{\HAT{\BM\beta}}\mathbin|\BM y_T$.

Next, we will show that any asymptotically optimal parameter sequence $\{\BM\theta_k\}_{k\ge1}$ for the objective Eq.~\eqref{eq:q-equiv-obj} must also converge to $P_{\HAT{\BM\beta}}\mathbin|\BM y_T$. For any $\BM y_T$, since $\OP{supp}(Q_{\BM\theta}(\BM y_T))=\OP{supp}(P_{\HAT{\BM\beta}}\mathbin|\BM y_T)$, then:
\AL{
&\Exp_{\BM Y\sim P_{\HAT{\BM\beta}}\mid\BM y_T}\Big[\frac{Q_{\BM\theta}(\BM y_T)[\BM Y]}{P_{\HAT{\BM\beta}}[\BM Y]}\Big]
\\={}&\sum_{\BM Y\in\OP{supp}(P_{\HAT{\BM\beta}}\mid\BM y_T)}P_{\HAT{\BM\beta}}[\BM Y\mid\BM y_T]\cdot\frac{Q_{\BM\theta}(\BM y_T)[\BM Y]}{P_{\HAT{\BM\beta}}[\BM Y]}
\\={}&\frac1{P_{\HAT{\BM\beta}}[\BM y_T]}\cdot\sum_{\BM Y\in\OP{supp}(P_{\HAT{\BM\beta}}\mid\BM y_T)}Q_{\BM\theta}(\BM y_T)[\BM Y]
\\={}&\frac1{P_{\HAT{\BM\beta}}[\BM y_T]}\cdot\sum_{\BM Y\in\OP{supp}(Q_{\BM\theta}(\BM y_T))}Q_{\BM\theta}(\BM y_T)[\BM Y]
\\={}&\frac1{P_{\HAT{\BM\beta}}[\BM y_T]}\cdot1=\frac1{P_{\HAT{\BM\beta}}[\BM y_T]}.}
By Jensen's inequality,
\AL{
&\Exp_{\BM Y\sim P_{\HAT{\BM\beta}}\mid\BM y_T}\Big[\psi\Big(\frac{Q_{\BM\theta}(\BM y_T)[\BM Y]}{P_{\HAT{\BM\beta}}[\BM Y]}\Big)\Big]
\\\ge{}&\psi\Big(\Exp_{\BM Y\sim P_{\HAT{\BM\beta}}\mid\BM y_T}\Big[\frac{Q_{\BM\theta}(\BM y_T)[\BM Y]}{P_{\HAT{\BM\beta}}[\BM Y]}\Big]\Big)
\\={}&\psi\Big(\frac1{P_{\HAT{\BM\beta}}[\BM y_T]}\Big)
.}
Since convexity implies continuity, then by the law of total expectation,
\AL{
&\min_{\BM\theta}\Exp_{\BM Y\sim P_{\HAT{\BM\beta}}}\Big[\psi\Big(\frac{Q_{\BM\theta}(\BM y_T)[\BM Y]}{P_{\HAT{\BM\beta}}[\BM Y]}\Big)\Big]
\\={}&\min_{\BM\theta}\Exp_{\BM y_T\sim P_{\HAT{\BM\beta}}}\Big[\Exp_{\BM Y\sim P_{\HAT{\BM\beta}}\mid\BM y_T}\Big[\psi\Big(\frac{Q_{\BM\theta}(\BM y_T)[\BM Y]}{P_{\HAT{\BM\beta}}[\BM Y]}\Big)\Big]\Big]
\\\ge{}&\min_{\BM\theta}\Exp_{\BM y_T\sim P_{\HAT{\BM\beta}}}\Big[\psi\Big(\frac1{P_{\HAT{\BM\beta}}[\BM y_T]}\Big)\Big]
\\={}&\Exp_{\BM y_T\sim P_{\HAT{\BM\beta}}}\Big[\psi\Big(\frac1{P_{\HAT{\BM\beta}}[\BM y_T]}\Big)\Big]
\\={}&\Exp_{\BM Y\sim P_{\HAT{\BM\beta}}}\Big[\psi\Big(\frac1{P_{\HAT{\BM\beta}}[\BM y_T]}\Big)\Big]\label{eq:quo-lower-bd}
.}
Thus, for any parameter sequence $\{\BM\theta_k\}$ such that $Q_{\BM\theta_k}(\BM y_T)$ converge to $P_{\HAT{\BM\beta}}\mathbin|\BM y_T$,
\AL{
\Exp_{\BM Y\sim P_{\HAT{\BM\beta}}}\Big[\psi\Big(\frac1{P_{\HAT{\BM\beta}}[\BM y_T]}\Big)\Big]
\le&\min_{\BM\theta}\Exp_{\BM Y\sim P_{\HAT{\BM\beta}}}\Big[\psi\Big(\frac{Q_{\BM\theta}(\BM y_T)[\BM Y]}{P_{\HAT{\BM\beta}}[\BM Y]}\Big)\Big]
\\\le{}&\lim_{k\to+\infty}\Exp_{\BM Y\sim P_{\HAT{\BM\beta}}}\Big[\psi\Big(\frac{Q_{\BM\theta_k}(\BM y_T)[\BM Y]}{P_{\HAT{\BM\beta}}[\BM Y]}\Big)\Big]
\\={}&\Exp_{\BM Y\sim P_{\HAT{\BM\beta}}}\Big[\lim_{k\to+\infty}\psi\Big(\frac{Q_{\BM\theta_k}(\BM y_T)[\BM Y]}{P_{\HAT{\BM\beta}}[\BM Y]}\Big)\Big]
\\={}&\Exp_{\BM Y\sim P_{\HAT{\BM\beta}}}\Big[\psi\Big(\frac{\lim_{k\to+\infty}Q_{\BM\theta_k}(\BM y_T)[\BM Y]}{P_{\HAT{\BM\beta}}[\BM Y]}\Big)\Big]
\\={}&\Exp_{\BM Y\sim P_{\HAT{\BM\beta}}}\Big[\psi\Big(\frac{P_{\HAT{\BM\beta}}[\BM Y\mid\BM y_T]}{P_{\HAT{\BM\beta}}[\BM Y]}\Big)\Big]
\\={}&\Exp_{\BM Y\sim P_{\HAT{\BM\beta}}}\Big[\psi\Big(\frac1{P_{\HAT{\BM\beta}}[\BM y_T]}\Big)\Big]
,}
which implies
\AL{\lim_{k\to+\infty}\Exp_{\BM Y\sim P_{\HAT{\BM\beta}}}\Big[\psi\Big(\frac{Q_{\BM\theta_k}(\BM y_T)[\BM Y]}{P_{\HAT{\BM\beta}}[\BM Y]}\Big)\Big]=\min_{\BM\theta}\Exp_{\BM Y\sim P_{\HAT{\BM\beta}}}\Big[\psi\Big(\frac{Q_{\BM\theta}(\BM y_T)[\BM Y]}{P_{\HAT{\BM\beta}}[\BM Y]}\Big)\Big].}
This suggests that the sequence $\{\BM\theta_k\}_{k\ge1}$ is asymptotically optimal for the objective Eq.~\eqref{eq:q-equiv-obj}. Meanwhile, for any other parameter sequence $\{\TLD{\BM\theta}_k\}$ where $Q_{\TLD{\BM\theta}_k}(\BM y_T)$ converge to a distribution other than $P_{\HAT{\BM\beta}}\mathbin|\BM y_T$ for some $\BM y_T$ with $\#\OP{supp}(P_{\HAT{\BM\beta}}\mathbin|\BM y_T)>1$, then $\frac{\lim_{k\to+\infty}Q_{\TLD{\BM\theta}_k}(\BM y_T)[\BM Y]}{P_{\HAT{\BM\beta}}[\BM Y]}\mathbin{\Big|}\BM y_T$ is non-degenerate. By Fatou's lemma and Jensen's inequality with strict convexity,
\AL{
&\lim_{k\to+\infty}\Exp_{\BM Y\sim P_{\HAT{\BM\beta}}}\Big[\psi\Big(\frac{Q_{\BM\theta_k}(\BM y_T)[\BM Y]}{P_{\HAT{\BM\beta}}[\BM Y]}\Big)\Big]
\\\ge{}&\Exp_{\BM Y\sim P_{\HAT{\BM\beta}}}\Big[\lim_{k\to+\infty}\psi\Big(\frac{Q_{\BM\theta_k}(\BM y_T)[\BM Y]}{P_{\HAT{\BM\beta}}[\BM Y]}\Big)\Big]
\\={}&\Exp_{\BM Y\sim P_{\HAT{\BM\beta}}}\Big[\psi\Big(\frac{\lim_{k\to+\infty}Q_{\BM\theta_k}(\BM y_T)[\BM Y]}{P_{\HAT{\BM\beta}}[\BM Y]}\Big)\Big]
\\>{}&\psi\Big(\Exp_{\BM Y\sim P_{\HAT{\BM\beta}}}\Big[\frac{\lim_{k\to+\infty}Q_{\BM\theta_k}(\BM y_T)[\BM Y]}{P_{\HAT{\BM\beta}}[\BM Y]}\Big]\Big)
\\={}&\psi\Big(\Exp_{\BM Y\sim P_{\HAT{\BM\beta}}}\Big[\frac1{P_{\HAT{\BM\beta}}[\BM y_T]}\Big]\Big)
\\={}&\min_{\BM\theta}\Exp_{\BM Y\sim P_{\HAT{\BM\beta}}}\Big[\psi\Big(\frac{Q_{\BM\theta}(\BM y_T)[\BM Y]}{P_{\HAT{\BM\beta}}[\BM Y]}\Big)\Big]
.}
This suggests that $\{\TLD{\BM\theta}_k\}_{k\ge1}$ is not asymptotically optimal. Note that those $\BM y_T$ with $\#\OP{supp}(P_{\HAT{\BM\beta}}\mathbin|\BM y_T)=1$ has no influence, because in that case $Q_{\BM\theta}(\BM y_T)$ is degenerate and thus does not depend on $\BM\theta$. Hence, any asymptotically optimal parameter sequence $\{\BM\theta_k\}_{k\ge1}$ for the objective Eq.~\eqref{eq:q-equiv-obj} must also converge to $P_{\HAT{\BM\beta}}\mathbin|\BM y_T$. Therefore, the objectives Eq.~\eqref{eq:q-orig-obj} and Eq.~\eqref{eq:q-equiv-obj} are equivalent.
\end{proof}

\ASSEC{Proof of \PRPref{cplx}}{app:pf-6}
\begin{proof}[Proof of \textnormal{(i)}]
Since there are $T+1$ times and $n$ nodes, then there are $\OP O(Tn)$ pseudolikelihoods to be computed in total. The time complexity to compute the pseudolikelihood of a node at a time is at most proportional to the number of edges connecting to that node, and there are $m$ edges in total, so the total time complexity to compute all pseudolikelihoods is $\OP O(T(n+m))$. Furthermore, since the backpropagation algorithm has the same complexity as the forward computation, 
the overall time complexity of an iteration is still $\OP O(T(n+m))$.
\end{proof}

\begin{proof}[Proof of \textnormal{(ii)}]
To predict the probabilities $q_{t,u}^\SIRI$ and $q_{t,u}^\SIRR$, the time complexity is $\OP O(T(n+m))$ due to the graph neural network $Q_{\BM\theta}$. To generate a snapshot, the two main steps are sorting probabilities and maintaining counters. Sorting the $n$ probabilities $q_{t,u}^\SIRI$ of all nodes $u\in\CAL V$ takes $\OP O(n\log n)$ time at each $t$. Aggregating and updating the counters $\rho_{t,u}$ for all nodes $u\in\CAL V$ at each $t$ take $\OP O(n+m)$ time in total, because each edge is involved in $\OP O(1)$ operations. Since sampling a history needs to generate $T$ snapshots, then the total time complexity is $\OP O(T(n\log n+m))$.
\end{proof}

\ASEC{The Proposal in M--H MCMC}{app:gnn}

In this section, we detail our design of the proposal $Q_{\BM\theta}$ in M--H MCMC.

\ASSEC{Neural Architecture}{app:gnn-arch}
The backbone of $Q_{\BM\theta}$ is an Anisotropic GNN with edge gating mechanism \cite{bresson2018an,joshi2020learning,dimes}. Let $\BM g_u^{\ell,1}$ and $\BM g_{u,v}^{\ell,2}$ denote the node and edge embeddings at layer $\ell$ associated with node $u$ and edge $(u,v)$, respectively. The embeddings at the next layer is propagated with an anisotropic message passing scheme:
\AL{
\BM g_u^{\ell+1,1} &:= \BM g_u^{\ell,1}\! + a(\OP{BN}(\BM W^{\ell,1} \BM g_u^{\ell,1}\!+\!\!\!\underset{v\in\mathcal{N}_u}{\mathfrak A}\!\!(\sigma(\BM g_{u,v}^{\ell,2})\odot(\BM W^{\ell,2}\BM g^{\ell,1}_v)))),\\
\BM g_{u,v}^{\ell+1,2} &:= \BM g_{u,v}^{\ell,2} + a(\OP{BN}(\BM W^{\ell,3} \BM g^{\ell,2}_{u,v} + \BM W^{\ell,4} \BM g^{\ell,1}_u\! + \BM W^{\ell,5} \BM g^{\ell,1}_v)).
}
where $\BM W^{\ell,1},\dots,\BM W^{\ell,5}\in\BB R^{d\times d}$ are learnable parameters of layer $\ell$, $a$ denotes the activation function (we use $\OP{SiLU}$ \cite{silu} in this paper), $\OP{BN}$ denotes the Batch Normalization operator \cite{batch-norm}, $\mathfrak A$ denotes the aggregation function (we use mean pooling in this work), $\sigma$ denotes the sigmoid function, and $\odot$ denotes the Hadamard product. A Multi-Layer Perceptron (MLP) is appended after the GNN to produce the final outputs. The node inputs $\BM g_u^{0,1}$ are initialized by feeding $y_{T,u}$ into a linear layer. The edge inputs $\BM g^{0,2}_{u,v}$ are learnable parameters.

\ASSEC{Sampling Scheme}{app:gnn-samp}
To satisfy the condition in \THMref{q-equiv-obj} and not to generate waste samples, we require 
that $\OP{supp}(Q_{\BM\theta}(\BM y_T))=\OP{supp}(P\mathbin{|}\BM y_T)$ for any snapshot $\BM y_T$. A sufficient condition for this requirement is that $Q_{\BM\theta}(\BM y_T)[\BM y_{t}\mathbin|\BM y_{t+1}]>0$ iff $P_{\HAT{\BM\beta}}[\BM y_{t+1}\mathbin|\BM y_{t}]>0$ for every $t=1,\dots,T$, as long as $P[\BM y_0]$ is nowhere vanishing. Hence, we design a sampling scheme according to this sufficient condition. We sample a history in the reverse temporal order. Provided that $\BM y_{t+1}$ is already generated, we next describe how to generate $\BM y_t$.

Given the observed snapshot $\BM y_T$, the GNN $Q_{\BM\theta}(\BM y_T)$ predicts two probabilities $0<q^\SIRI_{t,u},\,q^\SIRR_{t,u}<1$ for each node $u\in\CAL V$ at time $t$. There are two stages at time $t$. In the first stage, for each node $u\in\CAL V$, if $y_{t+1,u}=\SIRR$, we randomly change its state to $\SIRI$ with probability $q^\SIRR_{t,u}$. Let $y'_{t,u}$ denote the state of node $u$ after the first stage. In the second stage, we sort the nodes $\CAL V$ into $v_1,\dots,v_n$ so that $q^\SIRI_{t,v_1}\ge\dots\ge q^\SIRI_{t,v_n}$. We also maintain a counter $\rho_{t,u}$ for each node $u$ to indicate the remaining chance to be infected from a neighbor. The counters $\rho_{t,u}$ are initialized as one plus the degree of the node $u$. Then, we decide whether to change the state of $v_i$ to $\SIRS$ sequentially in the order $v_1,\dots,v_n$. If $y'_{t,v_i}\ne\SIRI$, then we set $y_{t,v_i}:=y'_{t,v_i}$. If $y'_{t,v_i}=\SIRI$ and $\min_{u\in\{v_i\}\cup\CAL N_{v_i}}\rho_{t,u}\le 1$, then we set $y_{t,v_i}=\SIRI$. Otherwise, we set $y_{t,v_i}:=\SIRS$ with probability $q^\SIRI_{t,v_i}$ or $y_{t,v_i}:=\SIRI$ with probability $1-q^\SIRI_{t,v_i}$. If $y'_{t,v_i}=\SIRI$ and $y_{t,v_i}=\SIRS$, then we decrease $\rho_{t,u}$ by $1$ for all $u\in\{v_i\}\cup\CAL N_{v_i}$. If $y'_{t,v_i}=\SIRI$ and $y_{t,v_i}=\SIRI$, then we set $\rho_{t,u}:=+\infty$ for all $u\in\{v_i\}\cup\CAL N_{v_i}$. After the second stage, we get the states $\BM y_t$ at time $t$, and then we use it to generate $\BM y_{t-1}$, and so on.

\ASEC{Detailed Experimental Setting}{app:setting}
In this section, we give detailed description of datasets, baselines, and reproducibility.
\ASSEC{Datasets}{app:exp-data}
We use 3 types of datasets. 
\textbf{(D1) Synthetic graphs and diffusion.} We generate random undirected graphs with 
1,000 nodes using the Barab\' asi--Albert (BA) model \cite{barabasi-albert} with attachment 4 and the Erd\H os--R\' enyi (ER) model \cite{erdos-renyi} with edge probability $0.008$. We simulate SI and SIR diffusion for $T=10$ with the infection rate $0.1$ and the recovery rate $0.1$. We randomly select $5\%$ nodes as diffusion sources.
\textbf{(D2) Synthetic diffusion on real graphs.} We use two real graphs and simulate diffusion on them. Oregon2\footnote{http://snap.stanford.edu/data/Oregon-2.html} \cite{oregon2} is a collection of graphs representing the peering information in autonomous systems, and we use the graph of May 26, 2001, the largest one. Prost 
\cite{data-prost} is a bipartite graph representing prostitution reviews in an online forum. We simulate SI and SIR diffusion for $T=15$ with the infection rate $0.1$ and the recovery rate $0.05$. We randomly select $10\%$ nodes as diffusion sources.
\textbf{(D3) Real diffusion on real graphs.} To evaluate how \method{} generalizes to real-world diffusion, we use 4 real-world diffusion datasets. 
BrFarmers\footnote{https://usccana.github.io/netdiffuseR/reference/brfarmers.html} \cite{data-farmers-orig,data-farmers} incorporates 
diffusion of technological innovations among Brazilian farmers. It is an SI-like diffusion, where an infection means that a farmer hears about the new technology from a friend and adopts it. 
Pol\footnote{https://networkrepository.com/rt-pol.php} \cite{data-pol,netrepo} is a temporal retweet network about a U.S.\ political event. It is an SI-like diffusion, because when a user retweets or is retweeted, they must have known about the event. 
Covid\footnote{https://data.cdc.gov/Public-Health-Surveillance/United-States-COVID-19-Community-Levels-by-County/3nnm-4jni} is a dataset of Covid Community Levels 
from Feb 23, 2022 to Dec 21, 2022 released by CDC, where nodes are counties. We build a graph by connecting each node with its $10$ nearest neighbors according to latitudes and longitudes. It is an SIR-like diffusion as follows. When a county becomes medium or high level for the first time, the county is ``infected.''
After the last time a county becomes low level and does not change again, the county is ``recovered.''
Hebrew 
\cite{data-heb} is a temporal retweet network among Hebrew tweets about an Israeli election event. It is an SIR-like diffusion as follows. For users who retweet at most once and are never retweeted, they never become influential in this event, so they are ``susceptible.'' For users who retweet at least twice or are retweeted by others, they actively involve or influence other users in the event, so they are ``infected.'' For ``infected'' users, after the last time that they are retweeted, they are no longer influential, so they become ``recovered.'' 

\ASSEC{Baselines}{app:exp-bl}
We compare \method{} with 2 types of baselines. 
\textbf{(B1) Supervised methods for time series imputation.} Diffusion history reconstruction can be alternatively formulated as time series imputation on graphs. Therefore, we also compare \method{} with state-of-the-art time series imputation methods, including BRITS \cite{brits} for multivariate time series, and GRIN \cite{grin} and SPIN \cite{spin} for graph time series. Since these methods are all based on supervised learning, we use our estimated diffusion parameters to simulate diffusion histories as training data. We follow the hyperparameters of baselines, except adjusting the batch size to fit in memory.
\textbf{(B2) MLE-based methods for diffusion history reconstruction.} 
To date, few works have studied diffusion history reconstruction, and all of them are based on the MLE formulation. We compare \method{} with state-of-the-art methods DHREC \cite{pcvc} and CRI \cite{cri}. DHREC reduces the MLE formulation to the Prize Collecting Dominating Set Vertex Cover (PCDSVC) problem and uses a greedy algorithm to solve PCDSVC. It requires the knowledge of the diffusion model parameter. Therefore, we feed our estimated diffusion parameters to it. CRI designs a heuristic method based on clustering and reverse infection. It can estimate infection times but cannot estimate recovery times.

\ASSEC{Reproducibility \& Implementation Details}{app:repro}

Experiments were run on Intel Xeon CPU @ 2.20GHz and NVIDIA Tesla P100 16GB GPU. Our source code is publicly available at \url{https://github.com/q-rz/KDD23-DITTO}. All datasets are publicly available. For each dataset, we will either provide a link to it or include it in our code repository.

For \method{}, we optimize $\HAT{\BM\beta}$ for $I=500$ iterations. The proposal $Q_{\BM\theta}$ is a 3-layer GNN followed by a 2-layer MLP with hidden size 16. We train $Q_{\BM\theta}$ for $J=500$ steps for D1 and D2, $J=2,000$ for BrFarmers, $J=300$ for Pol, $J=250$ for Covid, and $J=200$ for Hebrew. We use batch size $K=10$ for D1, BrFarmers, and Covid, and $K=2$ for D2, Pol, and Hebrew. We use the AdamW \cite{adamw} optimizer with learning rate 0.003 for $\HAT{\BM\beta}$ and 0.001 for $Q_{\BM\theta}$. After training, we run M--H MCMC for $S=10$ iterations with $L=100$ samples per iteration and $\eta=0.5$ for moving average. For the initial distribution $P[\BM y_0]$, we use the coefficient $\gamma=1$ in MCMC. For supervised imputation methods in B1, we use the estimated diffusion parameters (unless specified) to generate training data. For GRIN and SPIN, we train them for 1,000 steps with batch size 1. We follow the other hyperparameters of these methods. For MLE-based methods in B2, we feed estimated diffusion parameters to them.

\ASEC{Additional Experiments}{app:exp}
In this section, we present addtional experimental results to answer \RQref{timespan} and \RQref{abla}.

\begin{figure}[t]
\begin{center}
\begin{subfigure}[t]{0.48\columnwidth}\centering
\includegraphics[width=\textwidth]{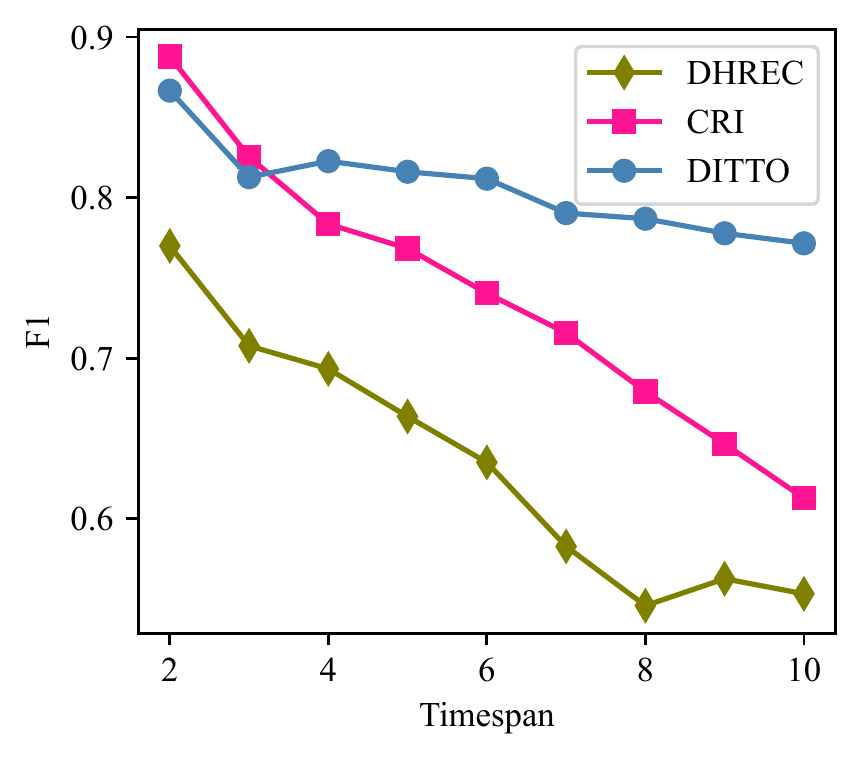}
\caption{F1 vs timespan $T$.}
\label{fig:timespan-f1}
\end{subfigure}
\hfill
\begin{subfigure}[t]{0.48\columnwidth}\centering
\includegraphics[width=\textwidth]{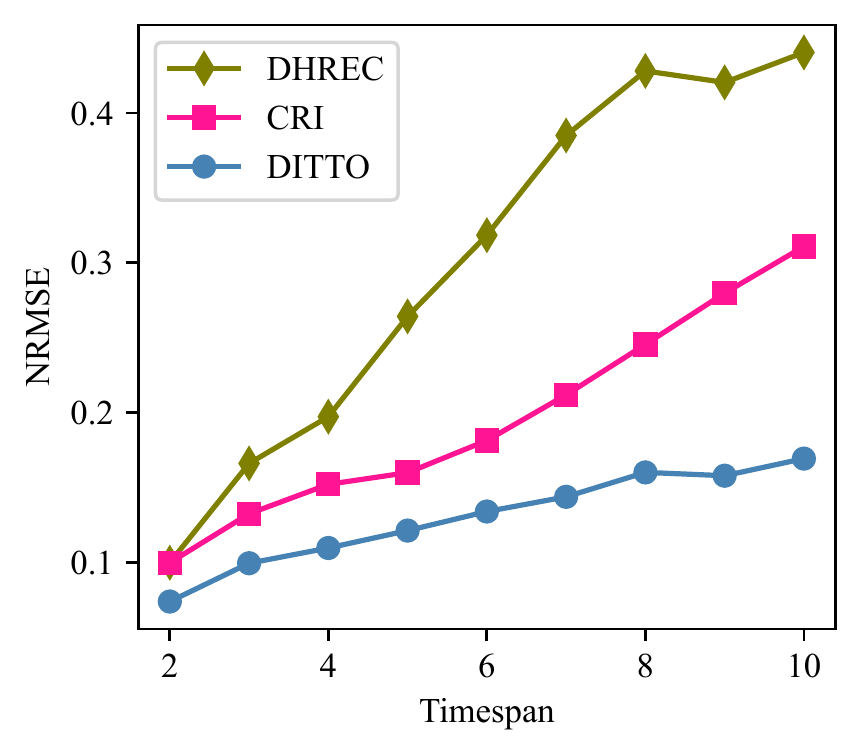}
\caption{NRMSE vs timespan $T$.}
\label{fig:timespan-nrmse}
\end{subfigure}
\end{center}
\caption{Performance vs timespan $T$.}
\label{fig:timespan}
\end{figure}

\begin{figure}[t]
\begin{center}
\begin{subfigure}[t]{0.48\columnwidth}\centering
\includegraphics[width=\textwidth]{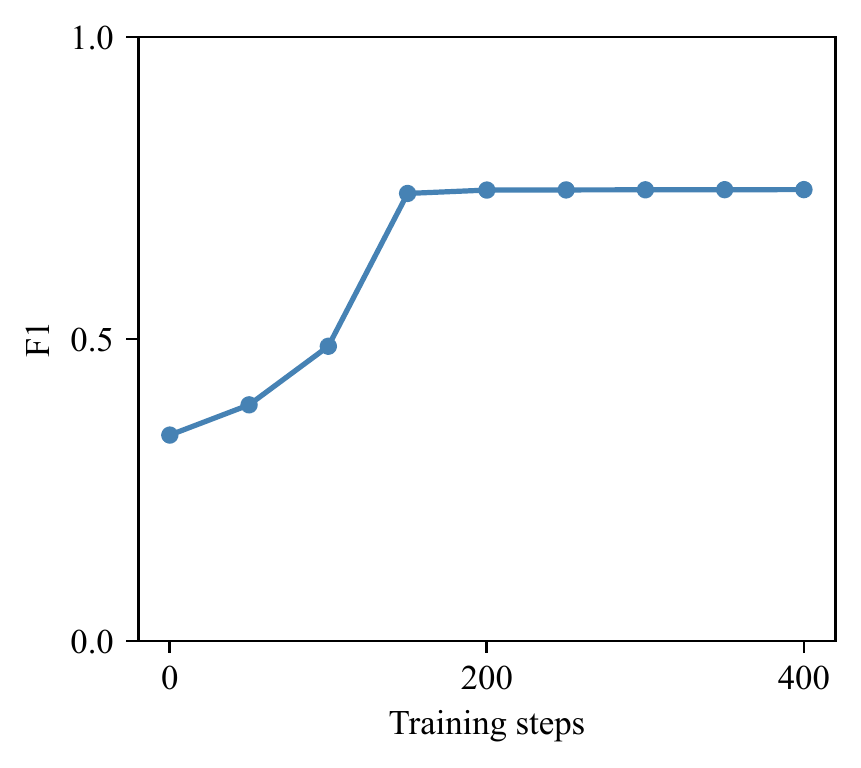}
\caption{F1 vs training steps.}
\label{fig:qstep-f1}
\end{subfigure}
\hfill
\begin{subfigure}[t]{0.48\columnwidth}\centering
\includegraphics[width=\textwidth]{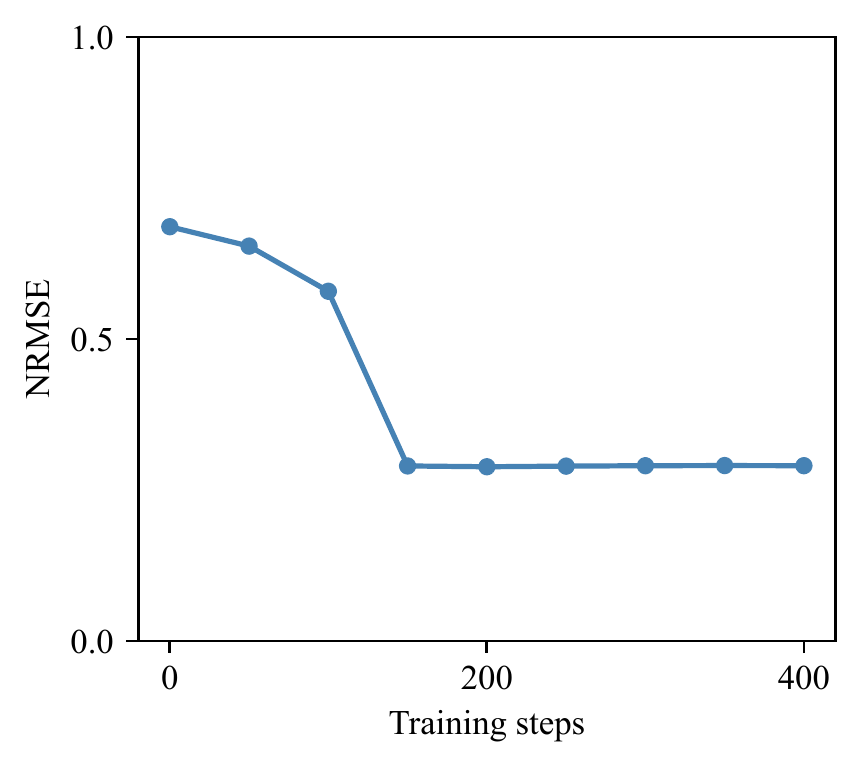}
\caption{NRMSE vs training steps.}
\label{fig:qstep-nrmse}
\end{subfigure}
\end{center}
\caption{Performance vs training steps.}
\label{fig:qstep}
\end{figure}

\ASSEC{Effect of Timespan}{app:exp-timespan}
As the timespan $T$ increases, the search space of possible histories grows exponentially and thus the uncertainty of the history grows accordingly. Hence, it is helpful to investigate the effect of timespan $T$. To answer \RQref{timespan}, we vary the timespan $T$ from 2 to 10 and compare the performance of \method{} and MLE-based methods. The results are shown in Fig.~\ref{fig:timespan}. As is expected, the performance of all methods degrades as $T$ increases. Nonetheless, the performance of \method{} degrades slower than that of MLE-based methods. The results demonstrate that \method{} can better handle the uncertainty induced by the increase in the timespan $T$.

\ASSEC{Ablation Study}{app:exp-abla}
To answer \RQref{abla}, we conduct ablation study on the effect of the number of training steps. We vary the number of training steps from 0 to 400 for the Pol dataset and compare the performance in terms of F1 and NRMSE. The results are shown in Fig.~\ref{fig:qstep}. When the number of training steps is less than 200, as the number of training steps increases, the performance of \method{} improves accordingly. When the number of training steps is more than 200, the performance does not change because the proposal already converges. The results suggest that the learned proposal in \method{} is indeed beneficial for M--H MCMC.

\ASEC{Limitations \& Future Work}{app:concl}

One limitation of \method{} is that the history $\HAT{\BM Y}$ reconstructed by Eq.~\eqref{eq:hit-to-hist} is not necessarily feasible under the SIR model. However, the perfect feasibility under the SIR model has limited significance in practice, because the SIR model is often considered as an over-simplification of real-world diffusion. Meanwhile, an alternative solution is to use the samples generated by M--H MCMC, as they are guaranteed to be feasible. 
Another potential limitation is that our theoretical analyses are based on small diffusion parameters, which is indeed the case for most real-world data \cite{o1987epidemiology,goh2009inflammatory,vanderpump2011epidemiology}. Meanwhile, there might exist situations where infection rates are large. The analyses under large diffusion parameters is beyond the scope of our work. To our best knowledge, no literature has studied diffusion history reconstruction with large infection rates, which leads to an interesting future research direction that is worth an independent investigation. Besides that, there are a number of other directions that are worth future study, including extending to 
diffusion models other than the SI/SIR model, incorporating node and edge attributes to allow heterogeneity, and improving the expressiveness of the proposal to further accelerate the convergence of M--H MCMC.




\end{document}